\def\BibTeX{{\rm B\kern-.05em{\sc i\kern-.025em b}\kern-.08em
    T\kern-.1667em\lower.7ex\hbox{E}\kern-.125emX}}
\newtheorem{assumption}{Assumption}
\newtheorem{theorem}{Theorem}
\newtheorem{lemma}{Lemma}
\begin{document}
% \title{How to Use the IEEEtran \LaTeX \ Templates}
% \author{IEEE Publication Technology Department
% \thanks{Manuscript created October, 2020; This work was developed by the IEEE Publication Technology Department. This work is distributed under the \LaTeX \ Project Public License (LPPL) ( http://www.latex-project.org/ ) version 1.3. A copy of the LPPL, version 1.3, is included in the base \LaTeX \ documentation of all distributions of \LaTeX \ released 2003/12/01 or later. The opinions expressed here are entirely that of the author. No warranty is expressed or implied. User assumes all risk.}}

\title{HyperController: A Hyperparameter Controller for Fast and Stable Training of Reinforcement Learning Neural Networks\\
\thanks{}
}

\author{Jonathan Gornet,~\IEEEmembership{Student Member,~IEEE,}~Yiannis Kantaros,~\IEEEmembership{Member,~IEEE,}~Bruno Sinopoli,~\IEEEmembership{Fellow,~IEEE,} 
        % <-this % stops a space
\thanks{J. Gornet, Y. Kantaros, and B. Sinopoli are with the Department of Electrical and Systems Engineering, Washington University in St. Louis, St. Louis, MO 63130, USA (email: jonathan.gornet@wustl.edu; bsinopoli@wustl.edu).}}

\maketitle

\begin{abstract}
    We introduce Hyperparameter Controller (HyperController), a computationally efficient algorithm for hyperparameter optimization during training of reinforcement learning neural networks. HyperController optimizes hyperparameters quickly while also maintaining improvement of the reinforcement learning neural network, resulting in faster training and deployment. It achieves this by modeling the hyperparameter optimization problem as an unknown Linear Gaussian Dynamical System, which is a system with a state that linearly changes. It then learns an efficient representation of the hyperparameter objective function using the Kalman filter, which is the optimal one-step predictor for a Linear Gaussian Dynamical System. To demonstrate the performance of HyperController, it is applied as a hyperparameter optimizer during training of reinforcement learning neural networks on a variety of OpenAI Gymnasium environments. In four out of the five Gymnasium environments, HyperController achieves highest median reward during evaluation compared to other algorithms. The results exhibit the potential of HyperController for efficient and stable training of reinforcement learning neural networks. 
\end{abstract}

\begin{IEEEkeywords}
Hyperparameter Optimization, Bayesian Optimization, Stochastic Multi-armed Bandits, Kalman filters, Stochastic Dynamical Systems
\end{IEEEkeywords}

\section{Introduction}

Hyperparameter optimization, a subarea of Automated Machine Learning \cite{hutter2019automated}, is critical for improving machine learning models. It focuses on finding the optimal configuration of hyperparameters, such as the learning rate or the batch size \cite{bergstra2011algorithms}. However, hyperparameter optimization cost functions may be unknown, expensive to compute, or nonsmooth (see details in \cite{garnett2023bayesian}), rendering many gradient-based optimization method such as stochastic gradient descent \cite{robbins1951stochastic} or ADAM \cite{kingma2014adam}, infeasible. Bergstra et al. \cite{bergstra2011algorithms} introduced grid and random search methods for hyperparameter optimization. This has provided a foundation for other hyperparameter optimization methods which have drawn inspirations from results in stochastic multi-armed bandits \cite{li2017hyperband,falkner2018bohb} (which study the problem of decision-making under uncertainty), genetic algorithms \cite{angeline1998evolutionary,back1998overview,6300857,clune2008natural}, and population-based approaches \cite{DBLP:journals/corr/abs-1711-09846}. Traditionally, hyperparameter optimization was implemented prior to the training process, using the same configuration throughout the whole training process \cite{bergstra2011algorithms,falkner2018bohb,yang2020hyperparameter}. Recently, methods have shifted to adjusting hyperparameters during the training process \cite{NEURIPS2019_743c41a9,li2018massively,li2019generalized,DBLP:journals/corr/abs-1711-09846,parker2020provably}.

This paper draws inspiration from Bayesian optimization due to its success such as its impact on AlphaGo's training \cite{chen2018bayesian} and other methods proposed in \cite{brochu2010tutorial,hennig2012entropy,nguyen2020knowing,NIPS2012_05311655}. Bayesian optimization methods focus on estimating the cost function while also searching for the optimal value. A key result in the area is the Gaussian Process Upper Confidence Bound (GP-UCB) algorithm proposed by \cite{srinivas2009gaussian}, which balances exploration and exploitation during optimization. Hybrid forms have appeared where elements from other optimization techniques have been integrated into Bayesian optimization such as random search \cite{ahmed2016we}, stochastic multi-armed bandit algorithms \cite{falkner2018bohb} (an extension of \cite{li2017hyperband}), and distributed optimization \cite{desautels2014parallelizing,gonzalez2016batch}.

This paper addresses the shortcomings of Bayesian optimization within the scope of Reinforcement Learning (RL). RL methods such as Trust Region Policy Optimization (TRPO) introduced by \cite{schulman2015trust}, later inspiring Proximal Policy Optimization (PPO) by \cite{schulman2017proximal}, have been proposed for monotonic improvement of the RL neural network with little hyperparameter tuning. Previously, a method called Population-Based Bandits (PB2) in \cite{parker2020provably}, which draws inspiration from Time-Varying Gaussian Process Upper Confidence Bound (TV-GP-UCB) in \cite{bogunovic2016time}, have proposed to adjust hyperparameters online during neural network training in conjunction with PPO to accomplish better monotonic improvement. However, the authors in \cite{parker2020provably} assume that the cost function evolves as a one-step autoregressive process, which may be too rigid for accurate prediction. In addition, both PB2 and other Bayesian optimization methods struggle with the curse of dimensionality \cite{garnett2023bayesian}. Finally, Bayesian optimization can become computationally expensive, requiring $\mathcal{O}\left(n^3\right)$ work to invert a $n \times n$ matrix where $n$ is the number of samples collected. Results in \cite{ambikasaran2015fast} have lowered the computational complexity to $\mathcal{O}\left(n \log^2 n\right)$ work, but further improvements are needed. 

To address these issues, this paper proposes the Hyperparameter Controller (HyperController), inspired by stochastic control theory. The key contribution of HyperController is its reduction in the required computational work by learning a more efficient representation of the hyperparameter's objective function to obtain efficient computation and finer control of the hyperparameters, ensuring that the RL neural networks obtains fast training and consistent improvement. %\textcolor{red}{[monotonic improvement sounds like a strong claim unless I am misinterpreting this term. can we see monotonic improvement in the experiments?]}. 
This is contrasted with Bayesian optimization, which relies on the prior distribution of the hyperparameter's objective function. Therefore, HyperController obtains faster hyperparameter optimization. 

The contributions of this paper are summarized below. 
\begin{itemize}
    \item Inspired by prior work in Bayesian optimization, we model the hyperparameter optimization problem as an \textit{unknown} Linear Gaussian Dynamical System (LGDS), a mathematical model of a system with a state variable evolving linearly over time. The LGDS output is an objective function value given a hyperparameter configuration. 
    \item We introduce a method to discretize the LGDS state variable, allowing optimal one-step prediction of the LGDS output using the Kalman filter. 
    \item Given the unknown LGDS parameters, we propose learning a representation of the Kalman filter. This representation is computationally efficient to learn, requiring only $\mathcal{O}\left(s^3\right)$ work where $s$ is the size of the representation such that $s \ll n$. In addition, learning the representation allows finer control over the hyperparameters to obtain consistent improvement of the RL neural networks. 
    \item HyperController uses two strategies: optimizes each hyperparameter separately instead of searching for the global hyperparameter configuration and only selects the predicted optimal configuration according to the learned representation. This implementation is utilized to avoid searching a large hyperparameter space relative to the training iterations. 
    \item To quantify the performance of HyperController, we derive the upper bound on regret, which is the cumulative difference between the optimal objective function's response and its response based on the method's selected hyperparameter configuration. 
    \item We validate our theoretical results by using HyperController as a hyperparameter optimizer during training of RL neural networks within the OpenAI Gymnasium. 
\end{itemize}

The rest of the paper is structured as follows. Section \ref{sec:Problem Formulation} reviews the mathematical model of the hyperparameter optimization problem and how to approximate it. In Section \ref{sec:representation}, we discuss how to learn the model, leading to Section \ref{sec:algorithm} which is an introduction of the HyperController method. Section \ref{sec:Experiments} implements the HyperController method for training a RL neural network and compares it to a number of well-known results in the area. Section \ref{sec:Conclusion} concludes the paper and discusses future directions.

\smallskip
\noindent\textbf{Notation:} For any $x\in\mathbb{R}^n$ and $y\in\mathbb{R}^n$, we have the inner product $\left\langle x, y\right\rangle = x^\top y \in \mathbb{R}$. For the natural number $i \in \mathbb{N}$, we have $[i]\triangleq\{1,2,\dots,i\}$. %

\section{Problem Formulation}\label{sec:Problem Formulation}

% \textcolor{red}{[I suggest you make references to the Appendix here to avoid any confusion. For example, later in this section you talk about discretization of the space $\mathcal{Y}$ that comes a bit out of the blue. 1-2 sentences saying that the detailed derivation of (4) is presented in Appendix A would help]}

For this section, we left the details of the problem formulation derivation in Appendices \ref{appendix:derivation}, \ref{appendix:discretize}, and \ref{appendix:Modeling} to help the flow of the paper. When considering hyperparameter optimization in machine learning, there exists two variables to optimize, $\theta(t) \in \mathcal{X} \subseteq \mathbb{R}^{m}$ and $\zeta(t) \in \mathcal{Y}\subseteq \mathbb{R}^{h}$ where $t \in [n] \triangleq \{1,2,\dots,n\}$ is the iteration number of the machine learning training process. The variable $\theta(t) \in \mathcal{X}$ is the machine learning model's parameters (i.e. neural network weights) while $\zeta(t) \in \mathcal{Y}$ would be the vector of hyperparameter values used in the training process (i.e. $\zeta(t) = \begin{pmatrix}
    \eta & N
\end{pmatrix}$ where $\eta$ is the stochastic gradient descent's learning rate and $N$ is the batch size). The cost functions for each variable are $L\left(\theta(t);\zeta(t)\right): \mathcal{X} \rightarrow \mathbb{R}$ and $\psi\left(\zeta(t);\theta(t)\right): \mathcal{Y} \rightarrow \mathbb{R}$. The goal is to solve the following optimization problems
\begin{align}
    \min_{\theta \in \mathcal{X}} L\left(\theta;\zeta(t)\right) \mbox{ given } \zeta(t) \in \mathcal{Y} \label{eq:ultimate_goal}  \\
    \max_{\zeta \in \mathcal{Y}} \psi\left(\zeta;\theta(t)\right) \mbox{ given } \theta(t) \in \mathcal{X} , \label{eq:optimization_problem} 
\end{align}
    
In machine learning, solving \eqref{eq:ultimate_goal} is core objective. However, since this problem is directly impacted by the hyperparameters $\zeta(t)$, how efficiently optimization problem \eqref{eq:ultimate_goal} is solved is impacted by optimization problem \eqref{eq:optimization_problem}. For this work, we will focus on developing strategies for solving \eqref{eq:optimization_problem}. Important details to note about \eqref{eq:optimization_problem} are as follows. First, the function $\psi\left(\zeta;\theta(t)\right)$ is harder to compute than $L\left(\theta(t);\zeta(t)\right)$. In addition, the function $\psi\left(\zeta;\theta(t)\right)$ is not analytic with respect to $\zeta$, i.e. the gradient does not exist. Finally, since $\theta(t) \in \mathcal{X}$ changes at each iteration $t$, the cost function $\psi\left(\zeta;\theta(t)\right)$ is time-varying. For addressing optimization problem \eqref{eq:optimization_problem}, we will model its dynamics.

We can approximate the changes in $\psi\left(\zeta;\theta\left(t\right)\right)$ with respect to $\theta\left(t\right)$, where $\psi\left(\zeta;\theta\left(t\right)\right)$ is an infinite dimensional vector in the $L^2\left(\mathbb{R}^h \rightarrow \mathbb{R}\right)$ space, with a finite dimensional vector $z_t \in \mathbb{R}^{d^h}$ ($d$ is the number of discretized values for each $\zeta[i]$, $i \in [h]$) defined below 
\begin{equation}
    \begin{pmatrix}
        z_t[1] \\
        z_t[2] \\
        \vdots \\
        z_t[d^{h}]
    \end{pmatrix} = \begin{pmatrix}
        \psi\left(\zeta_1[1],\dots,\zeta_1[h];\theta\left(t\right)\right) \\
        \psi\left(\zeta_2[1],\dots,\zeta_1[h];\theta\left(t\right)\right) \\
        \vdots \\
        \psi\left(\zeta_d[1],\dots,\zeta_d[h];\theta\left(t\right)\right) 
    \end{pmatrix} \label{eq:state_z_t_definition}, 
\end{equation}
where $\begin{pmatrix}
    \zeta_{A[1]}[1] & \dots & \zeta_{A[h]}[h]
\end{pmatrix} \in \mathcal{Y}$, $A \in [d]^h$. The finite dimensional vector $z_t$ is assumed to evolve linearly according to a Linear Gaussian Dynamical System (LGDS) shown below
\begin{equation}\label{eq:LGDS}
    z_{t+1} = \Gamma z_t + \xi_t, 
\end{equation}
where $z_t \in \mathbb{R}^{d^h}$ is the LGDS state variable and $\xi_t \in \mathbb{R}^{d^h}$ is process noise that models changes in $\psi\left(\zeta;\theta\left(t\right)\right)$ not reflected by the state matrix $\Gamma \in \mathbb{R}^{d^h\times d^h}$. It is assumed that the process noise $\xi_t$ is Gaussian distributed noise, i.e. $\xi_t \sim \mathcal{N}\left(\mathbf{0},Q\right)$ where $Q \succeq \mathbf{0}$. We impose the following assumptions on the LGDS defined as \eqref{eq:LGDS}:
\begin{assumption}\label{assum:stability}
    The state matrix $\Gamma$ is Schur or its eigenvalues of $\Gamma$ lie inside the unit circle, i.e. $\rho\left(\Gamma\right) < 1$. 
\end{assumption}
\begin{assumption}\label{assum:controllability}
    The matrix pair $\left(\Gamma,Q^{1/2}\right)$ is controllable, i.e. the following matrix has rank $d^h$ 
    \begin{equation}
        \begin{pmatrix}
            Q^{1/2} & \Gamma Q^{1/2} & \dots & \Gamma^{d^h-1}Q^{1/2} 
        \end{pmatrix}. 
    \end{equation}
\end{assumption}

Provided the state $z_t$ that approximates changes in $\psi\left(\zeta;\theta\left(t\right)\right)$, we model the response of the $\psi\left(\zeta;\theta\left(t\right)\right)$ with respect to a selected hyperparameter $\zeta \in \mathcal{Y}$ as 
\begin{equation}\label{eq:LGDS_output}
    X_t = \left\langle \mathbf{e}_{A}, z_t\right\rangle + \eta_t, 
\end{equation}
where the component of the vector $\mathbf{e}_{A}$ that is associated with the hyperparameter configuration $\psi\left(\zeta_{A[1]}[1],\dots,\zeta_{A[h]}[h];\theta\left(t\right)\right)$ in $z_t$ defined as \eqref{eq:state_z_t_definition} is 1 and the rest of the components are 0. The measurement noise $\eta_t \in \mathbb{R}$ is added to model errors of our approximation that is not reflected with time-correlated errors introduced by $\xi_t$. We will assume that the measurement noise $\eta_t$ is Gaussian distribution, i.e. $\eta_t \sim \mathcal{N}\left(0,\sigma^2\right)$. In addition, it is assumed that the state matrix $\Gamma$ and noise statistics $Q$ and $\sigma^2$ are \textit{unknown}. 

The motivation for using $z_t$ as an approximation for $\psi\left(\zeta;\theta\left(t\right)\right)$ is based on previous assumptions made in \cite{parker2020provably}, which uses theoretical results developed in \cite{bogunovic2016time}, and its simplicity yet effectiveness. In \cite{bogunovic2016time} and \cite{parker2020provably}, the authors assume that $\psi\left(\zeta;\theta\left(t\right)\right)$ is modeled by a function $f_t \in L^2\left(\mathbb{R}^h \rightarrow \mathbb{R}\right)$ that has the following dynamics
\begin{equation}
    f_{t+1} = \sqrt{1-\alpha} f_t + \sqrt{\alpha} \omega_t \nonumber, 
\end{equation}
where $\alpha \in [0,1]$ and $\omega_t \sim \mathcal{G}\mathcal{P}\left(\mathbf{0},\alpha W\left(\cdot,\cdot\right)\right)$. The distribution $\mathcal{G}\mathcal{P}\left(\mathbf{0},\alpha W\left(\cdot,\cdot\right)\right)$ is a Gaussian process that outputs random variables $\omega_t$ almost surely in the space $L^2\left(\mathbb{R}^h \rightarrow \mathbb{R}\right)$ if $W\left(\cdot,\cdot\right)$ is H\"{o}lder Continuous (see theorem in Section 2.5 of Garnett \cite{garnett2023bayesian}). The authors in both \cite{bogunovic2016time} and \cite{parker2020provably} assume that $\alpha$ and $W\left(\cdot,\cdot\right)$ are \textit{known}. Therefore, our formulation of $z_t \in \mathbb{R}^{d^h}$ differs from \cite{bogunovic2016time} and \cite{parker2020provably} assumptions by using a discretized approximation of $\psi\left(\zeta;\theta\left(t\right)\right)$ to model its changes as an \textit{unknown} LGDS \eqref{eq:LGDS}. In addition, $z_t$ avoid restrictions on how the function $\psi\left(\zeta;\theta\left(t\right)\right)$ is structured at a given time $t$; it only constrains $\psi\left(\zeta;\theta\left(t\right)\right)$ to evolve linearly. For example, the function $z_t$ can still be nonlinear with respect to $\zeta_1,\dots,\zeta_{d^h}$ even the changes in $\psi\left(\zeta;\theta\left(t\right)\right)$ are linear. 

The formulation above lead us to the following problem we aim to solve in this paper. Recall that the problem is to search for the hyperparameter configuration $\zeta_A \in \mathcal{Y}$ that maximizes $\psi\left(\zeta_A;\theta\left(t\right)\right)$ as efficiently as possible at each time $t$. Therefore, based on the discretization $z_t$ in \eqref{eq:LGDS} with output $X_t$ in \eqref{eq:LGDS_output}, we want to develop a method that finds the sequence of vectors $\mathbf{e}_{A_0}, \dots, \mathbf{e}_{A_n}$ (see \eqref{eq:LGDS_output}) that maximizes cumulative sum $\sum_{t=1}^n X_t \equiv \sum_{t=1}^n \psi\left(\zeta_{A_t};\theta\left(t\right)\right)$. To assess performance, we will analyze \textit{pseudo-regret} (referred to as regret for simplicity), which is the cumulative difference between the highest possible value of $\psi\left(\zeta^*\left(t\right);\theta\left(t\right)\right)$ and the value $\psi\left(\zeta_{A_t};\theta\left(t\right)\right)$:
\begin{equation}\label{eq:regret_def}
    R_n \triangleq\sum_{t=1}^n \psi\left(\zeta^*\left(t\right);\theta\left(t\right)\right)-\psi\left(\zeta_{A_t};\theta\left(t\right)\right). 
\end{equation}

\section{A Representation for Predicting the Output of the Hyperparameter Cost Function}\label{sec:representation}
% \textcolor{red}{[same here. discuss what derivations are omitted here and make a reference to the appendix. Otherwise, the paper may feel a bit disconnected]}

Similar to what was done in Section \ref{sec:Problem Formulation}, we left the details on the model in Appendices \ref{appendix:representation}, \ref{appendix:alternating}, and \ref{appendix:identify}. To predict the cost function $\psi\left(\zeta;\theta\left(t\right)\right)$ for a given hyperparameter configuration, we will learn a representation that is efficient to compute. First, recall that the hyperparameter value $\zeta$ is a $h$-dimensional vector. If the hyperparameter value $\zeta$ is discretized as $\zeta_1,\dots,\zeta_{d^h}$, then a high value $d^h$ is required for accurately representing $\psi\left(\zeta;\theta\left(t\right)\right)$. However, it is possible that $d^h \gg n$, where $n$ is the total number of iterations in the machine learning training process. This leads to a situation where it is computationally intractable to explore all the responses $\psi\left(\zeta_A;\theta\left(t\right)\right)$ for each hyperparameter configuration $\zeta_A$, $A \in [d]^h$. To approach these issues, we will greedily search for each $\zeta_{A[i]}[i] = \zeta_a$, $i \in [h]$, such that $\zeta_A = \left(\dots,\zeta_a,\dots\right) \in \mathcal{D}$, $a \in [d]$, instead of search for the value $\zeta_A$ that globally optimizes the function. Therefore, we only have to search a space that is of size $d$ for each hyperparameter value $\zeta_a$ instead of the space of size $d^h$ for a global hyperparameter value $\zeta_A$. This proposed design provides the predictor that is learned based on the following optimization problem with its solution. 
\begin{multline}
    \underset{\tilde{G}_{a}\left(\mathbf{c}_i\right) \in \mathbb{R}^{s \times 1}}{\min} \left\Vert \mathbf{X}_{\mathcal{T}_{a \mid \mathbf{c}_i}} - \tilde{G}_{a}\left(\mathbf{c}_i\right)^\top \mathbf{Z}_{\mathcal{T}_{a \mid \mathbf{c}_i}}\right\Vert_2^2 \\ + \lambda \left\Vert \tilde{G}_{a}\left(\mathbf{c}_i\right)\right\Vert_2^2 \label{eq:identification_problem} , 
\end{multline}
\begin{align}
    \hat{G}_{a}^t\left(\mathbf{c}_i\right) & = \mathbf{X}_{\mathcal{T}_{a \mid \mathbf{c}_i}} \mathbf{Z}_{\mathcal{T}_{a \mid \mathbf{c}_i}}^\top V_a^t \left(\mathbf{c}_i\right)^{-1} \label{eq:learn_G} \\
    V_a^t \left(\mathbf{c}_i\right) & \triangleq \lambda I + \mathbf{Z}_{\mathcal{T}_{a \mid \mathbf{c}_i}}\mathbf{Z}_{\mathcal{T}_{a \mid \mathbf{c}_i}}^\top \label{eq:V_def},
\end{align}
% \textcolor{red}{[is $\tilde{G}$ and $\hat{G}$ the same?]}
where $\hat{G}_{a}^t\left(\mathbf{c}_i\right)$ is the minimizer of optimization problem \eqref{eq:identification_problem} and $\lambda > 0$. The sequence $\mathbf{c}_i = \left(\zeta_{a_{t-1}}, \dots, \zeta_{a_{t-s}}\right)$ is the previously selected hyperparameter values for $\zeta_{A[i]}[i]$, $i \in [h]$. The variables $\left(\mathbf{X}_{\mathcal{T}_{a \mid \mathbf{c}_i}},\mathbf{Z}_{\mathcal{T}_{a \mid \mathbf{c}_i}},\Xi_t\left(\mathbf{c}_i\right)\right)$ are defined to be 
\begin{align}
    \mathbf{X}_{\mathcal{T}_{a \mid \mathbf{c}_i}} & \triangleq \begin{pmatrix}
        X_{t_1} & \dots & X_{t_{N_a}}
    \end{pmatrix} \in \mathbb{R}^{1 \times N_a} \label{eq:X_def}\\
    \mathbf{Z}_{\mathcal{T}_{a \mid \mathbf{c}_i}} & \triangleq \begin{pmatrix}
        \Xi_{t_1}\left(\mathbf{c}_i\right) & \dots & \Xi_{t_{N_a}}\left(\mathbf{c}_i\right)
    \end{pmatrix} \in \mathbb{R}^{s \times N_a} \label{eq:Z_def} \\
    \Xi_t\left(\mathbf{c}_i\right) & \triangleq \begin{pmatrix}
        X_{t-s} & \dots & X_{t-1}
    \end{pmatrix}^\top \in \mathbb{R}^{s \times 1} \label{eq:xi_def}. 
\end{align}

Based on above, our prediction of $X_t$ in \eqref{eq:LGDS_output} for hyperparameter $\zeta_a$ using $\hat{G}_{a}^t\left(\mathbf{c}_i\right)$ is 
\begin{equation}
    \hat{G}_{a}^t\left(\mathbf{c}_i\right)^\top \Xi_t\left(\mathbf{c}_i\right) \label{eq:reward_prediction}. 
\end{equation}

The reward prediction \eqref{eq:reward_prediction} is a linear combination of previously observed rewards $X_{t-s},\dots,X_{t-1}$ where the linear parameters are from the learned vector $\hat{G}_{a}^t\left(\mathbf{c}_i\right)$. 

\section{Algorithm for Controller Hyperparameters During Neural Network Training}\label{sec:algorithm}

Using the learned $\hat{G}_{a}^t\left(\mathbf{c}_i\right)$, we propose the Hyperparameter Controller (HyperController)\footnote{Code implementation of HyperController is in \url{https://github.com/jongornet14/HyperController}} in Algorithm \ref{alg:HyperController}. HyperController selects hyperparameter values $\zeta_{A[i]}[i] = \zeta_a$, $\left(a,i\right) \in [d] \times [h]$ (we will use $a$ with relation to $\mathbf{c}_i$ for conciseness), based on the following optimization problem 
\begin{equation}\label{eq:action_selection}
    \zeta_{A[i]}[i] = \zeta_a \mbox{ s.t. } a = \underset{a \in [d]}{\arg\max} ~ \hat{G}_{a}^t \left(\mathbf{c}_i\right)^\top \Xi_t\left(\mathbf{c}_i\right) . 
\end{equation}

\begin{algorithm}[t]
\caption{Hyperparameter Controller (HyperController)}\label{alg:HyperController}
    \begin{algorithmic}[1]
        \STATE \verb|/* Input */|
        \STATE \textbf{Input: } $\lambda > 0$, $\mathcal{Y}$, $s,h,d \in \mathbb{N}$
        \STATE \verb|/* Initialization */|
        \STATE $t \gets 0$
        \FOR{$\zeta_{A[i]}[i]$, $i\in [h]$}
            \STATE $\mathcal{D}_i = \left\{\zeta_{A[i]}[i] = \zeta_a \mid \left\vert \zeta_a - \zeta_{a+1}\right\vert = \Delta_i,~a \in [d]\right\}$
            \FOR{$\mathbf{c}_i \in \left\{\begin{pmatrix}
                \zeta_{a_1}[i] & \dots & \zeta_{a_s}[i]
            \end{pmatrix} \in \mathcal{D}_i^s\right\}$}
                \FOR{$\left(a,i\right) \in [d] \times [h]$}
                    \STATE $\mathcal{T}_{a\mid\mathbf{c}_i}^t \gets \{\}$
                    \STATE $V_a^t\left(\mathbf{c}_i\right) \gets \lambda I_s$
                    \STATE $B_{a}^t \left(\mathbf{c}_i\right) \gets \mathbf{0}_{1 \times s}$ 
                    \STATE $\hat{G}_{a}^t \left(\mathbf{c}_i\right) \gets \mathbf{0}_{s \times 1}$ 
                \ENDFOR
            \ENDFOR
        \ENDFOR
        \STATE \verb|/* Interaction */|
        \FOR{$t \in [n]$}
            \FOR{$i \in [h]$}
                \IF{ $t \geq s$ }
                \STATE \verb|/* Action Selection */|
                \STATE $a_t \gets \underset{a \in [d]}{\arg\max} ~ \hat{G}_{a}^t \left(\mathbf{c}_i\right)^\top \Xi_t\left(\mathbf{c}_i\right)$
                \STATE \verb|/* Update Models */|
                \STATE $\mathbf{c}_i \gets\begin{pmatrix}
                    \zeta_{a_{t-s}}[i] & \dots & \zeta_{a_{t-1}}[i]
                \end{pmatrix}$ 
                \STATE $\mathcal{T}_{a_t\mid\mathbf{c}_i}^t \gets\mathcal{T}_{a_t\mid\mathbf{c}_i}^t \cup \{t\}$
                \STATE $V_{a_t}^t\left(\mathbf{c}_i\right) \gets V_{a_t}^t\left(\mathbf{c}_i\right) + \Xi_t\left(\mathbf{c}_i\right) \Xi_t\left(\mathbf{c}_i\right)^\top$ 
                \STATE $B_{a_t}^t \left(\mathbf{c}_i\right) \gets B_{a_t}^t \left(\mathbf{c}_i\right) + X_t \Xi_t\left(\mathbf{c}_i\right)^\top $ 
                \STATE $\hat{G}_{a_t}^t \left(\mathbf{c}_i\right) \gets B_{a_t}^t \left(\mathbf{c}_i\right) V_{a_t}^t\left(\mathbf{c}_i\right)^{-1}$ 
                \ELSE
                \STATE $a_t \gets \mbox{Sample uniformly } a \sim [d]$ 
                \ENDIF
                \STATE $\zeta_{A_t[i]}[i] \gets \zeta_{a_t}$
                \STATE Observe $X_t$
            \ENDFOR
        \ENDFOR
    \end{algorithmic}
\end{algorithm}

In \eqref{eq:action_selection}, we are selecting actions that maximize $\psi\left(\dots,\zeta_a,\dots;\theta\left(t\right)\right)$ with respect to one hyperparameter value $\zeta_{A[i]}[i]=\zeta_{a}$ while ignoring the selected hyperparameter values $\zeta_{A[j]}[j]=\zeta_{a'}'$, $j \neq i$. This provides a smaller search space---space is size $d$ instead of $d^h$---and requires lower memory usage allocated for each $\hat{G}_{a}^t\left(\mathbf{c}_i\right)$, $i \in [h]$. 

The overview of the algorithm is as follows. 

\verb|Input|: For the inputs of HyperController, there are the values $\lambda > 0$, $\mathcal{Y}$, and $s,h,d \in \mathbb{N}$. 
\begin{itemize}
    \item $\lambda > 0$: Recall that the matrix $V_a^t\left(\mathbf{c}_i\right)$ \eqref{eq:V_def} has a matrix $\lambda I$. This matrix $\lambda I$ ensures that the matrix $V_a^t\left(\mathbf{c}_i\right)$ is always invertible. 
    \item $\mathcal{Y}$: This is the set of possible hyperparameter values. For example, if there are the hyperparameters learning rate $\eta \in \left[10^{-5},10^{-1}\right]$ and batch size $N \in \left[32,128\right]$, we would input the set $\left\{\left(\eta,N\right) \in \left[10^{-5},10^{-1}\right] \times \left[32,128\right]\right\}$. 
    \item $s \in \mathbb{N}$: This value is how many rewards $X_{t-1},\dots,X_{t-s}$ in the past HyperController uses for predicting the next reward $X_t$. We found in our results that $s = 1$ works best, since the number of models $\hat{G}_{a}^t\left(\mathbf{c}_i\right)$ increases exponentially as $s$ grows. 
    \item $h \in \mathbb{N}$: This is the number of hyperparameters. For example, if we have hyperparameters $\eta$ and $N$ mentioned earlier, then $h = 2$. 
    \item $d \in \mathbb{N}$: The value $d$ is the number of discretized values to use for each hyperparameter. We use a uniform value $d$ since we have found that this simplifies the implementation. Based on our numerical results, we recommend using values within the set $[10,100]$. 
\end{itemize}

\verb|Initialization|: For initialization, we set $t = 0$, $\mathcal{D}_i$, $i \in [h]$, $\mathcal{T}_{a\mid\mathbf{c}_i}^t$, $V_a^t\left(\mathbf{c}_i\right)$, $B_{a}^t \left(\mathbf{c}_i\right)$, and $\hat{G}_{a}^t \left(\mathbf{c}_i\right)$
\begin{itemize}
    \item $\mathcal{D}_i$: For each hyperparameter $\zeta[i] \in \mathcal{Y}$, $i \in [h]$, we discretize the hyperparameter into $d$ values that are uniformly separated. We denote the distance between each discretized hyperparameter as $\Delta_i$. 
    \item $V_a^t\left(\mathbf{c}_i\right)$, $B_{a}^t \left(\mathbf{c}_i\right)$, and $\hat{G}_{a}^t \left(\mathbf{c}_i\right)$: The matrix $V_a^t\left(\mathbf{c}_i\right)$ and the vectors $B_{a}^t \left(\mathbf{c}_i\right)$ and $\hat{G}_{a}^t \left(\mathbf{c}_i\right)$ are initialized to values $\lambda I$, $\mathbf{0}_{1 \times s}$, and $\mathbf{0}_{s \times 1}$ respectively. These values are used for identifying the representation that is used for predicting each hyperparameter value's output. The advantage of using $V_a^t\left(\mathbf{c}_i\right) \in \mathbb{R}^{s \times s}$ is that inverting this matrix only requires $s^3$ work. 
\end{itemize}

\verb|Interaction|: After initializing HyperController, the algorithm will now interact with the training of the neural networks. For each iteration of training $t$, HyperController has two stages, \verb|Action Selection| and \verb|Update Models|. 

\begin{itemize}
    \item \verb|Action Selection|: For each $\zeta[i]$, $i \in [h]$, a prediction of each discretized hyperparameter value $\zeta_a$, $a \in [d]$ is given as $\hat{G}_{a}^t \left(\mathbf{c}_i\right)^\top \Xi_t\left(\mathbf{c}_i\right)$. Note that the vector $\Xi_t\left(\mathbf{c}_i\right)$ is the vector of previously observed rewards $X_{t-s},\dots,X_{t-1}$. 
    \item \verb|Update Models|: To update each model, HyperController first sets a tuple of previously chosen hyperparameter values $\zeta_{a_{t-s}},\dots,\zeta_{a_{t-1}}$ for $\zeta[i]$ as $\mathbf{c}_i$. Given $\mathbf{c}_i$ and the hyperparameter value $\zeta_{a_t}$ chosen at $t$, we add the iteration $t$ to the set $\mathcal{T}_{a_t\mid\mathbf{c}_i}^t$ to keep track of when we choose action $a_t$ given tuple $\mathbf{c}_i$. Next, we update the matrix $V_{a_t}^t\left(\mathbf{c}_i\right)$ and the model $\hat{G}_{a_t}^t\left(\mathbf{c}_i\right)$ based on \eqref{eq:V_def} and \eqref{eq:learn_G}, respectively. We use $B_{a_t}^t \left(\mathbf{c}_i\right)$ such that $\hat{G}_{a_t}^t\left(\mathbf{c}_i\right)$ can be updated each iteration $t$. The amount of computation work required for updating $V_{a_t}^t\left(\mathbf{c}_i\right)$ and the model $\hat{G}_{a_t}^t\left(\mathbf{c}_i\right)$ is $s^3$.  
    
    % \textcolor{red}{[ are the equations in lines (24) and (25) discussed anywhere? If so, then make a reference to that part of the text.]}
\end{itemize}

To validate the performance of the method \eqref{eq:action_selection}, we will analyze it with respect to regret $R_n$ defined \eqref{eq:regret_def}, which is the cumulative difference between the highest possible value of $\psi\left(\zeta^*\left(t\right);\theta\left(t\right)\right)$ and the value $\psi\left(\zeta_{A_t};\theta\left(t\right)\right)$. The details for derivation of Theorem \ref{theorem:regret} can be found in the Appendix \ref{appendix:regret}. Details on the error of model $\hat{G}_{a}^t\left(\mathbf{c}_i\right)$ and its prediction error $X_t - \hat{G}_{a}^t\left(\mathbf{c}_i\right)^\top \Xi_t\left(\mathbf{c}_i\right)$ are in Appendix \ref{appendix:model_error}.

\begin{theorem}\label{theorem:regret}
    Let the reward $X_t$ be the output of the LGDS \eqref{eq:LGDS} based on equation \eqref{eq:LGDS_output}. Let $\hat{G}_{a}^t\left(\mathbf{c}_i\right)$ be learned accordingly to \eqref{eq:learn_G}. If actions are selected based on optimization problem \eqref{eq:action_selection}, then regret $R_n$ increases at the following rate with a probability of at least $1-13\delta$, $\delta \in (0,1)$:
    \begin{multline}\label{eq:regret_rate}
        R_n = \mathcal{O}\left(n\right) + \mathcal{O}\left(n\sqrt{\frac{\log\left(n\right)}{d}}\right) + \mathcal{O}\left(hd^{s+1} \sqrt{sn\log^3\left(n\right)}\right) \\ + \mathcal{O}\left(hd^{s+1} n\sqrt{s\log^3\left(n\right)}\right)
        + \mathcal{O}\left(hd^s n\sqrt{s\log\left(n\right)}\right) \\ + \mathcal{O}\left(hd^s n\sqrt{sn\log\left(n\right)}\right)
        + \mathcal{O}\left(kn\sqrt{\log\left(n\right)}\right) \\ + \mathcal{O}\left(k\sqrt{n}\right).
    \end{multline}
\end{theorem}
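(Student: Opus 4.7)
The plan is to decompose the regret with an optimism--pessimism argument adapted to the coordinate-wise greedy structure of HyperController. For each round $t$ I would split $\psi(\zeta^*(t);\theta(t)) - \psi(\zeta_{A_t};\theta(t))$ into: (i) the gap between the global maximizer $\zeta^*(t)$ and the best coordinate-wise achievable configuration, (ii) the gap between that coordinate-wise optimum and its nearest discretized representative in $\prod_i \mathcal{D}_i$, and (iii) the prediction error of $\hat{G}_a^t(\mathbf{c}_i)^\top \Xi_t(\mathbf{c}_i)$ evaluated at both the discretized optimum and the selected action, where the selection gap between these last two is non-positive by the argmax rule \eqref{eq:action_selection}. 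Term (i) is generically $\Omega(1)$ per round and is the source of the leading $\mathcal{O}(n)$ summand; term (ii) yields the $\mathcal{O}(n\sqrt{\log n / d})$ summand, since each $\Delta_i \sim 1/d$ and a Lipschitz-type argument on $\psi$ incurs only a logarithmic overhead from a union bound on the noise realizations.

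Next, I would reduce the prediction-error terms to a self-normalized ridge-regression problem in the style of Abbasi-Yadkori et al. For every fixed $(i, \mathbf{c}_i, a)$, the estimator $\hat{G}_{a}^t(\mathbf{c}_i)$ is the minimizer of \eqref{eq:identification_problem} on the sub-sample indexed by $\mathcal{T}_{a\mid \mathbf{c}_i}^t$, so the confidence ellipsoid around $\hat{G}_{a}^t(\mathbf{c}_i)$ and the standard elliptic-potential lemma jointly give an in-sample cumulative prediction error of order $\sqrt{sn\log^3 n}$; the union bound over the $h \cdot d^s \cdot d = hd^{s+1}$ (coordinate, context, action) triples supplies the matching $hd^{s+1}$ factor, explaining the first $\mathcal{O}(hd^{s+1}\sqrt{sn\log^3 n})$ contribution. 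The companion term $\mathcal{O}(hd^{s+1} n\sqrt{s\log^3 n})$ will absorb a per-round misspecification residual between the finite $s$-length Kalman-filter representation and the true LGDS output (bounded pointwise rather than telescoped), using the model-error bound in Appendix~\ref{appendix:model_error}. The two $\mathcal{O}(hd^s\,\cdot\,)$ terms then bound how often the argmax \eqref{eq:action_selection} differs from the oracle coordinate-wise best action across the $hd^s$ (coordinate, context) pairs, with the extra $\sqrt{n}$ factor in one of them coming from the variance of the LGDS state itself.

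The two tail terms $\mathcal{O}(kn\sqrt{\log n})$ and $\mathcal{O}(k\sqrt{n})$ should arise from the concentration of the measurement noise $\eta_t \sim \mathcal{N}(0,\sigma^2)$ and the process noise $\xi_t \sim \mathcal{N}(\mathbf{0},Q)$ of \eqref{eq:LGDS} aggregated over $n$ rounds, where $k$ will stand for the effective dimension of the Kalman-filter representation identified in Appendix~\ref{appendix:identify}; under Assumptions~\ref{assum:stability} and \ref{assum:controllability} this dimension is finite and the stationary covariance of $z_t$ is well-defined. A union bound over thirteen separate high-probability events---one for each summand in \eqref{eq:regret_rate} together with its supporting confidence ellipsoid or noise-concentration event---delivers the stated $1-13\delta$ overall confidence.

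The main obstacle will be controlling the dependency structure of the ridge regression: the feature vectors $\Xi_t(\mathbf{c}_i) = (X_{t-s},\dots,X_{t-1})^\top$ are themselves past rewards, so they are neither i.i.d.\ nor independent of the algorithm's own previous decisions, creating feedback between features and targets. To invoke a self-normalized martingale bound one must show that, under an appropriately chosen filtration $\mathcal{F}_t$ (generated by $X_1,\dots,X_{t-1}$ and the algorithm's randomization), the residual $X_t - G^*(\mathbf{c}_i)^\top \Xi_t(\mathbf{c}_i)$ is a sub-Gaussian martingale difference; Assumption~\ref{assum:stability} (Schur $\Gamma$) will be needed to uniformly control the unconditional variance of $X_t$, and Assumption~\ref{assum:controllability} to ensure that $V_a^t(\mathbf{c}_i)$ becomes well-conditioned on a large-probability event so that the elliptic-potential inversion is valid.
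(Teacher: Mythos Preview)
Your high-level decomposition---global-vs-coordinate gap, discretization gap, then optimism via a confidence ellipsoid for $\hat{G}_a^t(\mathbf{c}_i)$ and the argmax rule \eqref{eq:action_selection}---matches the paper's proof (Theorem~\ref{theorem:extended_regret}) and correctly accounts for the $\mathcal{O}(n)$ and $\mathcal{O}(n\sqrt{\log(n)/d})$ terms. However, there are two concrete gaps.

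First, the ridge regression here is \emph{misspecified}: the reward decomposes as $X_t = G_a(\mathbf{c}_i)^\top \Xi_t(\mathbf{c}_i) + \phi_{A'}^t + \beta_{A'}^t + \gamma_{A'}^t$ (equation~\eqref{eq:linear_model}), where only the innovation $\gamma_{A'}^t$ is a zero-mean martingale difference. The terms $\phi_{A'}^t$ (from the coordinate-wise action mismatch, \eqref{eq:phi_def}) and $\beta_{A'}^t$ (from truncating the Kalman recursion to length $s$, \eqref{eq:beta_def}) are systematic biases bounded only pointwise. Their contribution to the confidence radius $b_{a\mid\mathbf{c}_i}(\delta)$ in \eqref{eq:probabilistic_bound} is handled by Lemmas~\ref{lemma:bias_error} and~\ref{lemma:alternating_bound}, which rely on the stability of $\Gamma - \Gamma L_A \mathbf{e}_A^\top$ from Theorem~\ref{theorem:code_seq}, and produce factors of $\sqrt{N_a}$ rather than $\sqrt{\log N_a}$. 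This is precisely what pushes the regret rate from the standard $\sqrt{n\log n}$ to the $n\sqrt{n\log n}$ and $n\sqrt{\log n}$ terms you see in \eqref{eq:regret_rate}. Your ``main obstacle'' paragraph instead worries about the adaptivity of $\Xi_t(\mathbf{c}_i)$; that is already covered by the self-normalized bound and is not where the difficulty lies.

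Second, your reading of the two $\mathcal{O}(hd^s\cdot)$ terms as ``how often the argmax differs from the oracle'' is not what the paper does. In the paper, the instantaneous regret bound contains $b_{a\mid\mathbf{c}_i}(\delta)\|\Xi_t(\mathbf{c}_i)\|_{V_a^t(\mathbf{c}_i)^{-1}}$ for \emph{both} $a=a_t$ (the played action) and $a=\tilde{a}_t^*$ (the coordinate-wise optimal action). For $a=a_t$ the elliptic-potential lemma applies after summing over $a\in[d]$, producing the $hd^{s+1}$ terms. For $a=\tilde{a}_t^*$, however, $V_{\tilde{a}_t^*}^t(\mathbf{c}_i)$ is not updated at round $t$, so the elliptic-potential telescoping is unavailable and one resorts to the crude bound $\|\Xi_t(\mathbf{c}_i)\|_{V_{\tilde{a}_t^*}^t(\mathbf{c}_i)^{-1}} \le B_\Xi/\sqrt{\lambda}$, which accumulates linearly in $n$ and yields the $hd^s n\sqrt{\cdot}$ terms. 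The final $\mathcal{O}(kn\sqrt{\log n})$ and $\mathcal{O}(k\sqrt{n})$ terms are not from $\eta_t,\xi_t$ directly but from the residual sums $\sum_t(\beta_{\tilde{A}_t^*}^t-\beta_{A_t}^t)$, $\sum_t(\phi_{\tilde{A}_t^*}^t-\phi_{A_t}^t)$, and $\sum_t(\gamma_{\tilde{A}_t^*}^t-\gamma_{A_t}^t)$.
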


Theorem \ref{theorem:regret} proves that using action selection method \eqref{eq:action_selection} will lead to a regret \eqref{eq:regret_def} that will increase at most $\mathcal{O}\left(hd^s n\sqrt{sn\log\left(n\right)}\right)$. In the regret asymptotic rate \eqref{eq:regret_rate}, the term $\mathcal{O}\left(n\right)$ is from selecting the local optimal hyperparameter configuration and the term $\mathcal{O}\left(n\sqrt{\log\left(n\right)/d}\right)$ is from discretizing the hyperparameter space $\mathcal{Y}$. The rest of the terms are from modeling and prediction errors of $\hat{G}_{a}^t \left(\mathbf{c}_i\right)$. 

The regret rate in \eqref{eq:regret_rate} provides some intuition on how to set the parameters $s$ and $d$. In \eqref{eq:regret_rate}, regret increases exponentially with respect to $s$ and linearly with respect to $d$ with one term decreases $1/d$. Therefore, $s$ should be set very small---we propose a maximum of $3$---while $d$ should be set larger---we propose setting this value around $10$. 

This superlinear increase in regret should not be concerning as it has been proven in Theorem 4.1 of \cite{bogunovic2016time} that for any $f_t \sim \mathcal{G}\mathcal{P}\left(\mathbf{0},W\left(\cdot,\cdot\right)\right)$, then any algorithm must accumulate linear regret if $f_t$ satisfies the following properties 
\begin{itemize}
    \item $f_t$ is almost surely twice continuously differentiable. 
    \item $\sup \left\vert \partial f_t / \partial x_i\right\vert > L$ with a probability of most $a_1 e^{-(L/b_1)^2}$ for some $(a_1,b_1)$ and for all $L \geq 0$, $i \in [h]$.
    \item $\sup \left\vert \partial^2 f_t / \partial x_i \partial x_j\right\vert > L$ with a probability of most $a_2 e^{-(L/b_2)^2}$ for some $(a_2,b_2)$ and for all $L \geq 0$, $i,j \in [h]$. 
\end{itemize}

\subsection{Computational Efficiency}

One of the major advantages of using HyperController is its efficiency over other Gaussian Process optimization method. To better explain HyperController's efficiency, we first have to explain how Gaussian Processes are used. 

Let there be a cost function $f\left(x\right):\mathbb{R}^d \rightarrow \mathbb{R}$ that is \textit{unknown}. Assume for every value $x_t$ inputted into the cost function $f\left(x\right)$ we get the output $y_t$ that has the following expression
\begin{equation}
    y_t = f\left(x_t\right) \nonumber.
\end{equation}

We will assume that at iteration $t$ points $\left(x_0,y_0\right),\dots,\left(x_t,y_t\right)$ have been observed and we want to predict $f\left(x\right)$ for values $x_0',\dots,x_N'$. It is assumed that $y_0,\dots,y_t$ and $y_0',\dots,y_N'$ have the following \textit{a priori} distribution
\begin{equation}
    \begin{pmatrix}
        Y \\
        Y'
    \end{pmatrix} \sim \mathcal{N}\left(\begin{pmatrix}
        \mathbf{0} \\
        \mathbf{0}
    \end{pmatrix}, \begin{pmatrix}
        \Sigma_{Y} & \Sigma_{Y'Y}^\top \\
        \Sigma_{Y' Y} & \Sigma_{Y'}
    \end{pmatrix} + \sigma^2 I_{t + N}\right) \nonumber, 
\end{equation}
\begin{align}
    Y \triangleq \begin{pmatrix}
        y_0 & \dots & y_t
    \end{pmatrix}^\top,~~  & Y'\triangleq \begin{pmatrix}
        y_0' & \dots & y_N'
    \end{pmatrix}^\top \nonumber, 
\end{align}
where the covariance matrices $\Sigma_Y \in \mathbb{R}^{t \times t}$, $\Sigma_{Y'Y} \in \mathbb{R}^{N \times t}$, $\Sigma_{Y'} \in \mathbb{R}^{N \times N}$ are also assumed to be known. Therefore, if we want to predict the values $Y'$ using $Y$ then we have the following operation 
\begin{equation}
    \mathbb{E}\left[Y' \mid Y\right] = \Sigma_{Y'Y} \Sigma_Y^{-1} Y \nonumber. 
\end{equation}

Therefore, predicting $Y'$ using $Y$ involves inverting $\Sigma_Y$ which requires $\mathcal{O}\left(t^3\right)$ work for each iteration $t$. This operation has been improved to $\mathcal{O}\left(t \log^2\left(t\right)\right)$ work in \cite{ambikasaran2015fast}. However, when using HyperController, the operation only requires inverting the matrix $V_a^t\left(\mathbf{c}_i\right)\in \mathbb{R}^{s \times s}$ which is only $\mathcal{O}\left(s^3\right)$ operations where $s$ is at most $3$ (see discussion of regret in Section \ref{sec:algorithm}). The trade-off is sacrificing memory where there are now $hd^s$ models $\hat{G}_{a}^t\left(\mathbf{c}_i\right)$ to hold in memory. 

\section{Experiments}\label{sec:Experiments}

For the experimental results, we focus on online hyperparameter optimization for training RL neural networks on a variety of environments. In each of the tasks, we aimed to improve the algorithms in a single-machine/single GPU setting to test for computational efficiency. This allows us to consider scenarios where the machine learning models are too large to run multiple copies. The algorithms compared in all the environments are GP-UCB in \cite{srinivas2009gaussian}, PB2 in \cite{parker2020provably} (which is inspired by TV-GP-UCB in \cite{bogunovic2016time}), the randomized algorithm proposed by Bergstra et al. \cite{bergstra2012random} called Random, HyperBand \cite{li2017hyperband}, a method uses a constant random configuration set at the beginning of training called Random Start, and our proposed Algorithm \ref{alg:HyperController}, HyperController, from Section \ref{sec:algorithm}. For each task, 10 random seeds were used where each algorithm used the same set of seeds. 

The RL neural network was trained using Proximal Policy Optimization (PPO) \cite{schulman2017proximal}. The observations the RL neural networks were trained on were the observation space of each environment which can include variables such as velocity and position. The environments, which are a part of the OpenAI Gymnasium \cite{brockman2016openaigym}, are: 
\begin{itemize}
    \item \verb|HalfCheetah-v4|
    \item \verb|BipedalWalker-v3|
    \item \verb|Pusher-v4|
    \item \verb|InvertedDoublePendulum-v4|
    \item \verb|Reacher-v4|
    % \item \verb|LunarLanderContinuous-v2|
\end{itemize}

In this setup, the hyperparameters tuned include the batch size (how many frames of the RL neural network interaction with the environment were combined to train the network in one iteration), the clip parameter in PPO, the lambda decay parameter in Generalized Advantage Estimation \cite{schulman2015high}, and the ADAM optimizer's learning rate \cite{kingma2014adam}. The cost function maximized by the hyperparameter optimizers was the change in the collected reward. 

In Figure \ref{figure:Reward_RL}, each line represents the median evaluation reward as a function of wall-clock time. To ensure comparability, we only included the wall clock time elapsed during the hyperparameter search. Note that the $x$-axis of Figure \ref{figure:Reward_RL} is in a logarithmic scale of minutes. The figure shows that Random, Random Start, HyperBand, and HyperController all finish the $1000$ iterations of training under $0.1$ minutes. However, GP-UCB and PB2 take longer than $10$ minutes. In the left plots, HyperController achieves the highest median training reward in all environments besides the \verb|InvertedDoublePendulum-v4|. For \verb|InvertedDoublePendulum-v4|, GP-UCB and PB2 perform better but require significantly more time (an additional $10$ minutes).

In the right plots, each line represents the median evaluation reward sum as a function of wall-clock time. HyperController outperforms Random, Random Start, and HyperBand in terms of median evaluation reward sum. For GP-UCB and PB2, HyperController achieves higher median evaluation reward sum within $0.1$ minutes in \verb|HalfCheetah-v4| and \verb|InvertedDoublePendulum-v4|. Finally, in \verb|BipedalWalker-v3|, \verb|Pusher-v4|, and \verb|Reacher-v4|, HyperController obtains the highest median evaluation reward sum, even after $10$ minutes of using GP-UCB and PB2. 

In Figure \ref{figure:Reward_RL_Stability}, we present a box plot of the final training reward (left plot) and evaluation reward sum (right plot) at iteration $t = 1000$. We focus on the final iteration $t = 1000$ since the method with the highest final training reward and evaluation reward sum values indicates the method's ability to maintain its improvement of the RL neural network until the end of training, i.e. stabilize training. 

In the box plot, the edges of the boxes represent the first and third quantiles and the middle of the box is the median. In the plots to the left (training rewards), we can observe that HyperController achieves the highest median training rewards except in the \verb|InvertedDoublePendulum-v4| environment, where GP-UCB and PB2 obtain higher training reward. This is observed in the plots on the right (evaluation reward sum) as well. It is notable that the first quantile of HyperController are higher than the medians of the other methods in evaluation reward sum. This is with the exception of PB2 and GP-UCB in the \verb|InvertedDoublePendulum-v4| environment and the \verb|HalfCheetah-v4| environment. 

The results in Figures \ref{figure:Reward_RL} and \ref{figure:Reward_RL_Stability} indicate that HyperController is a fast and stable hyperparameter controller, demonstrating that the neural network maintains performance improvements efficiently in terms of wall-clock time. Based on the results in Figure \ref{figure:Reward_RL}, HyperController achieves the highest median training reward and evaluation reward sum in four out of the five environments. Moreover, HyperController achieves these results efficiently, offering advantages such as faster training, quicker deployment, shorter turnaround times for experimenting with different RL neural networks, and easier fine-tuning during the transition from the training environment to the deployed environment. Finally, Figure \ref{figure:Reward_RL_Stability} further highlights HyperController's consistent ability to maintain improvements in the RL neural networks until the final iteration.  

\begin{figure}[ht]
    \centering
    \includegraphics[width=\linewidth]{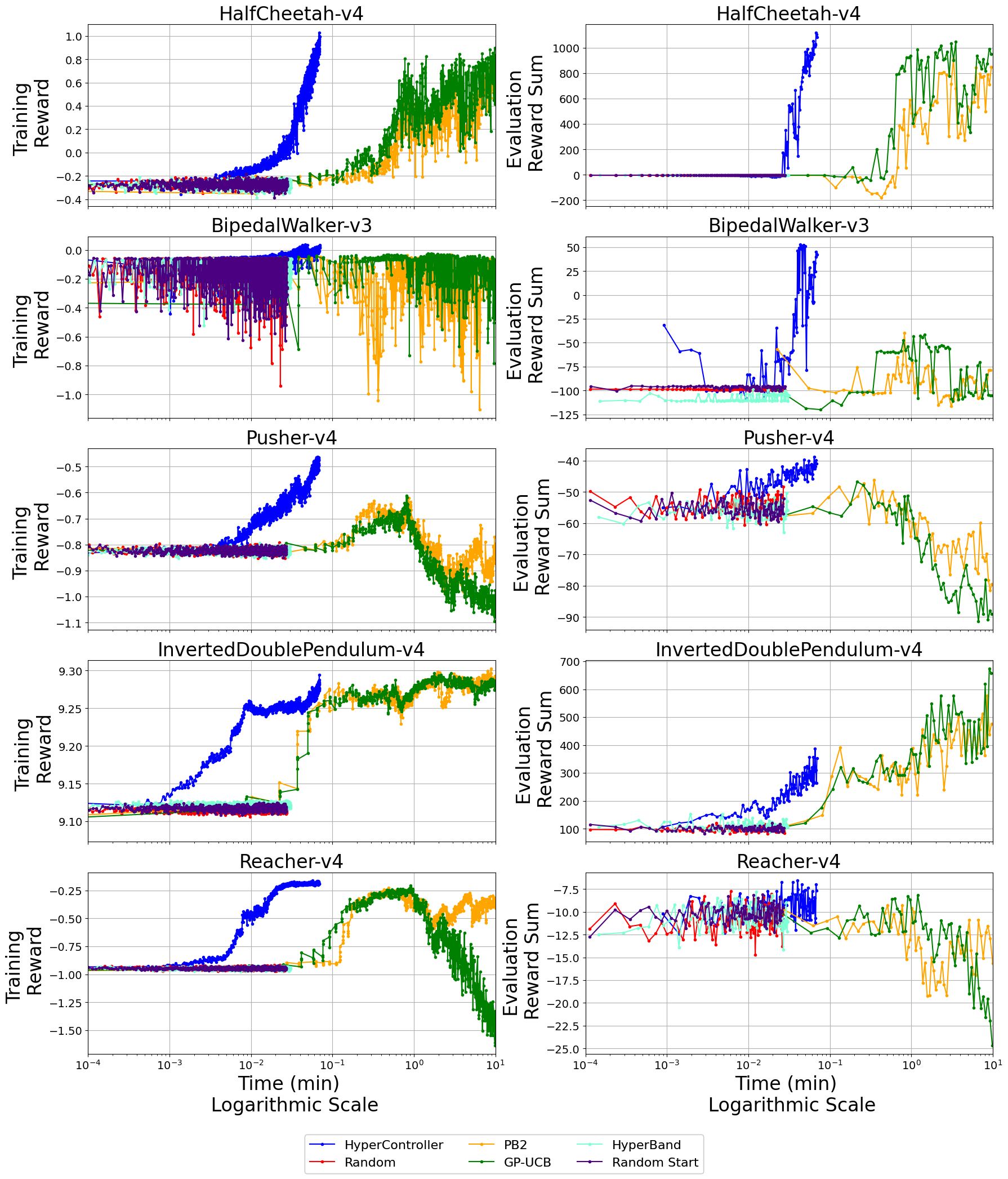}
    \caption{The plots are the median training rewards (left plots) and median evaluation reward sum (right plots) for each method over are a function of wall clock time.}
    \label{figure:Reward_RL}
\end{figure}

\begin{figure}[ht]
    \centering
    \includegraphics[width=\linewidth]{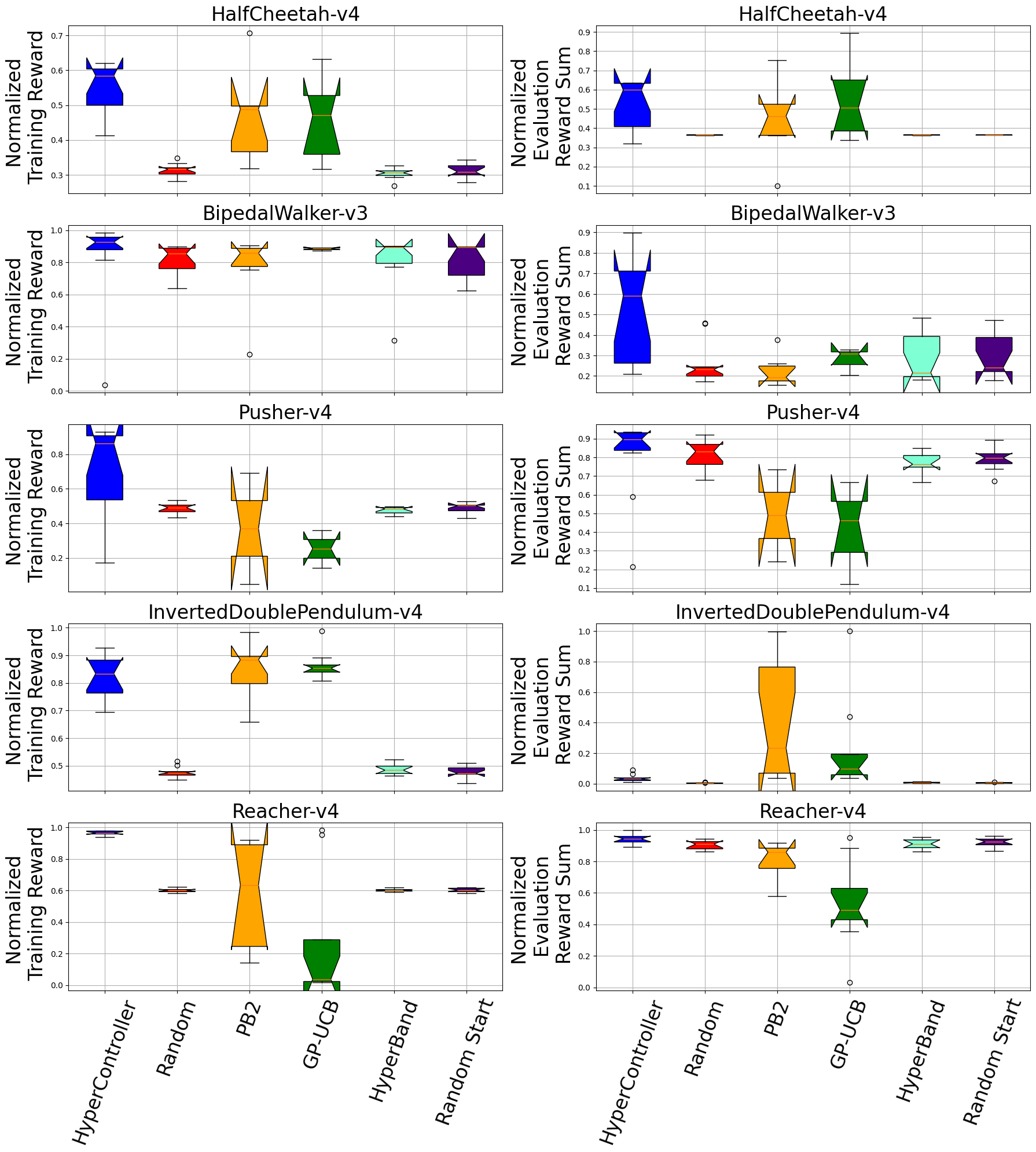}
    \caption{The plots are the median training rewards (left plots) and median evaluation reward sum (right plots) at the final training iteration.}
    \label{figure:Reward_RL_Stability}
\end{figure}

\section{Conclusion}\label{sec:Conclusion}

In this paper, we introduced a method that learns a more efficient representation of the Bayesian model in Bayesian optimization. The core contribution of this approach is a computationally fast algorithm for hyperparameter optimization. To measure the performance of the algorithm, we provided a theoretical bound on regret, which is the cumulative difference between the highest possible reward collected minus the reward collected from the proposed method. We then provided numerical results of the method with respect to a number of well-known methods. Here, we show that the proposed method is able to train the network more efficiently. Future directions of this work is to implement this method as a controller utilized during deployment of the RL neural networks.

\appendix

\subsection{Derivation of the LGDS}\label{appendix:derivation}

In this paper, we avoided adding the derivations in the main section to help the flow of discussion. Therefore, discussions of how we derived the LGDS in \eqref{eq:LGDS}, Section \ref{sec:Problem Formulation}, for modeling dynamics of $\psi\left(\zeta;\theta\left(t\right)\right)$ within \eqref{eq:optimization_problem} are provided here. First, it is assumed that there exists a function $f_t \in L^2\left(\mathbb{R}^h \rightarrow \mathbb{R}\right)$ that models the changes in $\psi\left(\zeta;\theta\left(t\right)\right)$ as an infinite-dimensional Linear Gaussian Dynamical System (LGDS)
\begin{equation}\label{eq:psi_dynamics}
    f_{t+1} = \Omega f_t + \omega_t , 
\end{equation}
where $\Omega$ is an infinite-dimensional state matrix, i.e. $\Omega \in \left\{\Omega\left(\cdot ,\cdot\right):\mathcal{Y} \times \mathcal{Y} \rightarrow \mathbb{R}\right\}$. The infinite-dimensional vector $\omega_t \in \left\{\omega_t:\mathcal{Y} \rightarrow \mathbb{R}\right\}$ is a Gaussian Process with a covariance function $W\left(\cdot ,\cdot\right) \in \left\{W\left(\cdot ,\cdot\right):\mathcal{Y} \times \mathcal{Y} \rightarrow \mathbb{R}\right\}$, i.e. $\omega_t \sim \mathcal{G}\mathcal{P}\left(\mathbf{0},W\left(\cdot ,\cdot\right)\right)$. Whenever a hyperparameter $\zeta\left(t\right) \in\mathcal{Y}$ is chosen, the loss $\psi\left(\zeta\left(t\right);\theta\left(t\right)\right)$ is observed $X_t$ such that 
\begin{equation}\label{eq:psi_output}
    X_t = f_t\left(\zeta\left(t\right)\right) + \eta_t, 
\end{equation}
where $\eta_t \sim \mathcal{N}\left(0,\sigma^2\right)$ is error that is not modeled through $f_t\left(\zeta\left(t\right)\right)$. The following assumptions are imposed on the LGDS \eqref{eq:psi_dynamics} and its output \eqref{eq:psi_output}.

\begin{assumption}\label{assum:unknown}
    The LGDS state matrix $\Omega$ and the covariance matrix $W$ are assumed to be \textit{unknown}. 
\end{assumption}

\begin{assumption}\label{assum:continuity}
    The functions $\Omega\left(\zeta_i,\zeta_j\right)$ and $W\left(\zeta_i,\zeta_j\right)$ are Lipshitz continuous with respect to $\zeta_i$ and $\zeta_j$ with constants $L_{\Omega} > 0$ and $L_W > 0$, respectively. 
\end{assumption}

\begin{assumption}\label{assum:contracting}
    Define the product $\Omega f_t$ as 
    \begin{equation}
        \Omega f_t = \int_{} \Omega\left(\zeta,\zeta'\right)f_t\left(\zeta'\right) d\zeta' \nonumber.  
    \end{equation}

    It is assumed that the product $\Omega f_t$ is strongly contracting with respect to $f_t$ with constant $C \in (0,1)$. 
\end{assumption}

\begin{assumption}\label{assum:definiteness}
    The matrix $W\left(\cdot,\cdot\right)$ is a positive semi-definite kernel, i.e. 
    \begin{equation}
        \sum_{i=1}^d \sum_{j=1}^d c_i c_j W\left(\zeta_i,\zeta_j\right) d\zeta_i d\zeta_j \geq 0 \nonumber, 
    \end{equation}
    for all $d \in \mathbb{N}$, $\zeta_1,\dots,\zeta_d \in \mathcal{Y}$, $c_i,c_j \in \mathbb{R}$ \cite{pmlr-vR5-hein05a}.
\end{assumption}

\begin{assumption}\label{assum:continuous_controllability}
    Let there be the function $\Psi_t:\mathcal{Y} \times \mathcal{Y} \rightarrow \mathbb{R}$ which is defined to be
    \begin{equation}
        \Psi_t\left(\zeta_i,\zeta_j\right) \triangleq \mathbb{E}\left[f_t\left(\zeta_i\right)f_t\left(\zeta_j\right)\right] \label{eq:psi_covariance}. 
    \end{equation}
    
    Using the definition $\Psi_t$ and the dynamics of $f_t$, we can write the iteration for $\Psi_t$ below
    \begin{equation}
        \Psi_{t+1} = \Omega \Psi_t \Omega' + W \label{eq:covariate_continuous_iteration}, 
    \end{equation}
    where $\Omega \Psi_t \Omega'$ is defined as 
    \begin{multline}\label{eq:omega_matrix_product}
        \left(\Omega \Psi_t \Omega'\right)\left(\zeta_i,\zeta_j\right) \\ =  
        \iint \Omega\left(\zeta_i,\tilde{\zeta}_i\right) W\left(\tilde{\zeta}_i,\tilde{\zeta}_j\right) \Omega\left(\zeta_j,\tilde{\zeta}_j\right) d\tilde{\zeta}_i d\tilde{\zeta}_j . 
    \end{multline}

    It is assumed for any $t \in [n]$, $\Psi_t$ is a positive definite kernel, i.e.
    \begin{equation}
        \sum_{i=1}^d \sum_{j=1}^d c_i c_j \Psi_t\left(\zeta_i,\zeta_j\right) d\zeta_i d\zeta_j > 0 \nonumber, 
    \end{equation}
    for all $d \in \mathbb{N}$, $\zeta_1,\dots,\zeta_d \in \mathcal{Y}$, $c_i,c_j \in \mathbb{R}$. 
    
\end{assumption}

Assumptions \ref{assum:unknown}-\ref{assum:continuous_controllability} are generalizations of the assumptions made in \cite{bogunovic2016time} and \cite{parker2020provably}. In \cite{bogunovic2016time} and \cite{parker2020provably}, the authors assume that $\Omega \equiv \sqrt{1-\alpha}$ and $\omega_t \sim \mathcal{G}\mathcal{P}\left(\mathbf{0},\alpha W\left(\cdot,\cdot\right)\right)$ where $\alpha\in [0,1]$ and $W\left(\cdot,\cdot\right)$ are both \textit{known}. 

When using the formulation of $f_t$, we are treating it as an infinite-dimensional vector where the components of the vector are at values $\zeta \in \mathcal{Y}$. This helps us avoid restrictions on how the function $f_t$ is structured at a given time and instead focus on how it changes over time $t$. For example, the function $f_t$ can still be nonlinear even if it dynamics \eqref{eq:psi_dynamics} are linear.

% For Assumption \ref{assum:symmetry}, an example function of $\Omega\left(\cdot,\cdot\right)$ is $\overline{\Omega}\left(\mathbf{d}\right) = C\exp\left(-\mathbf{d}^2/2\right)$, where $\mathbf{d} = \left\Vert \zeta_i -\zeta_j\right\Vert_2$ and $C$ normalizes $\Omega$ to be contracting. In this example, the function $\Omega\left(\cdot,\cdot\right)$ averages the space around $\tilde{\phi}_t\left(\zeta\right)$. 

\subsection{Discretization}\label{appendix:discretize}

To improve computational efficiency of the method, we will discretize the space $\mathcal{Y}$. As a consequence, we will sacrifice the accuracy of $f_t$. To analyze this loss in accuracy, we will prove the continuity of $f_t$ and its statistics. By proving the continuity properties, we can analyze how much accuracy is lost when discretizing the space. The theorem below will prove that $f_t$ is almost surely continuous. 

\begin{theorem}\label{theorem:continuity}
    Impose Assumptions \ref{assum:continuity}-\ref{assum:continuous_controllability}. Therefore, the infinite-dimensional vector $f_t$ in \eqref{eq:psi_dynamics} is almost surely continuous. In other words, there exists a $\delta >0$ such that for all $\zeta_1,\zeta_2 \in \mathcal{Y}$ such that $\left\Vert \zeta_1 - \zeta_2\right\Vert_2 < \delta$
    \begin{equation}
        P\left(\left\vert f_t\left(\zeta_1\right) - f_t\left(\zeta_2\right)\right\vert > 0\right) = 0\nonumber. 
    \end{equation}

    In addition, the covariance function $\Psi_t$, which is defined to be \eqref{eq:psi_covariance}, is Lipshitz continuous with respect to $\zeta_i$ and $\zeta_j$. 
    
\end{theorem}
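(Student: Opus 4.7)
The plan is to prove the Lipschitz continuity of the covariance $\Psi_t$ first, and then bootstrap this into almost sure continuity of the sample paths of $f_t$. First, I would argue by induction in $t$ that $\Psi_t$ is Lipschitz continuous in both arguments, using the recursion $\Psi_{t+1} = \Omega \Psi_t \Omega' + W$ from Assumption \ref{assum:continuous_controllability}. For the inductive step, I expand the integral representation
\begin{equation*}
(\Omega \Psi_t \Omega')(\zeta_i,\zeta_j) = \iint \Omega(\zeta_i,\tilde{\zeta}_i)\, \Psi_t(\tilde{\zeta}_i,\tilde{\zeta}_j)\, \Omega(\zeta_j,\tilde{\zeta}_j)\, d\tilde{\zeta}_i\, d\tilde{\zeta}_j,
\end{equation*}
and use Assumption \ref{assum:continuity} to factor the Lipschitz property of $\Omega$ out of the integral; this produces a Lipschitz bound for $\Omega \Psi_t \Omega'$ depending on $L_\Omega$, the supremum of $\Psi_t$, and the (assumed finite) volume of $\mathcal{Y}$. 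Adding $W$, which is Lipschitz by Assumption \ref{assum:continuity}, preserves the property, so $\Psi_{t+1}$ is Lipschitz whenever $\Psi_t$ is.

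Second, to turn this into a bound that does not blow up with $t$, I would invoke the strong contraction from Assumption \ref{assum:contracting} to obtain an inequality of the form $\|\Psi_{t+1}\|_\infty \leq C^2 \|\Psi_t\|_\infty + \|W\|_\infty$, which (since $C \in (0,1)$) gives a uniform-in-$t$ bound on $\|\Psi_t\|_\infty$ and, together with the inductive estimate, on the Lipschitz constant $L_\Psi$ of $\Psi_t$. Third, armed with Lipschitz $\Psi_t$, I compute
\begin{equation*}
\mathbb{E}\bigl[(f_t(\zeta_1) - f_t(\zeta_2))^2\bigr] = \Psi_t(\zeta_1,\zeta_1) + \Psi_t(\zeta_2,\zeta_2) - 2\Psi_t(\zeta_1,\zeta_2) \leq 4 L_\Psi \|\zeta_1 - \zeta_2\|_2,
\end{equation*}
using the symmetry of $\Psi_t$ and the Lipschitz bound applied to each of the three pairs. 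Since $f_t(\zeta_1) - f_t(\zeta_2)$ is Gaussian with variance of order $\|\zeta_1 - \zeta_2\|_2$, classical Gaussian tail bounds combined with a Kolmogorov--Chentsov / Dudley entropy argument yield a continuous modification of $f_t$, which is the standard interpretation of the probabilistic statement in the theorem.

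The main obstacle I anticipate is the coupling inside the induction step: the Lipschitz constant of $\Psi_{t+1}$ depends on both $L_{\Psi_t}$ and $\|\Psi_t\|_\infty$, so both quantities must be controlled simultaneously, which is exactly where the contraction Assumption \ref{assum:contracting} and the positive semi-definiteness Assumption \ref{assum:definiteness} (guaranteeing $\|\Psi_t\|_\infty$ is attained on the diagonal) come into play. A secondary subtlety is that the probabilistic conclusion, if read literally as $P(|f_t(\zeta_1)-f_t(\zeta_2)|>0) = 0$, would force $f_t$ to be locally constant, which is not true for a non-degenerate Gaussian process; I would treat the intended conclusion as almost sure sample-path continuity, which is what the moment bound above actually delivers via the standard Gaussian continuity machinery.
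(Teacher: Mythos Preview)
Your proposal is correct and follows essentially the same route as the paper: prove by induction on the recursion $\Psi_{t+1} = \Omega \Psi_t \Omega' + W$ that $\Psi_t$ is Lipschitz in each argument (using the Lipschitz property of $\Omega$ and $W$ pulled through the integral), and then pass from a Lipschitz covariance to almost sure sample-path continuity. The paper simply cites a Gaussian-process continuity theorem from Garnett for this last step and does not attempt the uniform-in-$t$ control you outline via Assumption~\ref{assum:contracting}; your version is more careful in that respect, and your caveat about the literal reading of the probabilistic statement is well taken.
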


Theorem \ref{theorem:continuity} proves that $\Psi_t$ defined \eqref{eq:psi_covariance} maintains Lipshitz continuity for any $t$, also implying that $f_t$ is almost surely continuous. The advantage of this is that we can directly measure how discretizing the space of $\mathcal{Y}$ impacts performance in a later section. To address discretization of the hyperparameter space $\mathcal{Y}$, we define the set $\mathcal{D} = \mathcal{D}_1 \times \cdots \times \mathcal{D}_{h}$ where $\mathcal{D}_i$ is defined as
\begin{equation}\label{eq:discretized_space}
    \mathcal{D}_i \triangleq \left\{\zeta_{A[i]}[i] = \zeta_a \mid \left(\dots,\zeta_a,\dots\right) \in \mathcal{Y}, a \in [d]\right\}. 
\end{equation}

Correspondingly, let $z_t \in \mathbb{R}^{d^h}$ be the discretized version of $f_t$, where each component of $z_t$ models the response of $f_t\left(\zeta_A\right)$, $\zeta_A \in\mathcal{D}$ (see \eqref{eq:state_z_t_definition}).

\subsection{Modeling}\label{appendix:Modeling}

We can approximate the dynamics of $f_t$ defined as \eqref{eq:psi_dynamics}, which is an infinite dimensional vector in the $L^2\left(\mathbb{R}^h \rightarrow \mathbb{R}\right)$ space that evolves linearly, with a finite dimensional vector $z_t \in \mathbb{R}^{d^h}$ sampled from a LGDS. Given the set $\mathcal{D}$ defined as \eqref{eq:discretized_space} and the vector $z_t \in \mathbb{R}^{d^h}$ \eqref{eq:state_z_t_definition}, $z_t$ evolves based on the following LGDS \eqref{eq:LGDS} where the state $z_t \in \mathbb{R}^{d^h}$, $\Gamma \in \mathbb{R}^{d^h \times d^h}$ is a discretization of the unknown matrix $\Omega$, and $\xi_t \in \mathbb{R}^{d^h}$ is a discretization of the infinite-dimensional vector $\omega_t$, i.e. $\xi_t \in \mathcal{N}\left(\mathbf{0},Q\right)$, $Q \succeq \mathbf{0}$ where $Q[i,j] = W\left(\zeta_i,\zeta_j\right)$, $\zeta_i,\zeta_j \in \mathcal{Y}$. The response of $X_t = f_t\left(\zeta_A\right)$, given that $\zeta_A\in \mathcal{D}$ has been chosen, is expressed as \eqref{eq:LGDS_output} where component of the vector $\mathbf{e}_A$ that is associated with $\psi\left(\zeta_{A[1]}[1],\dots,\zeta_{A[h]}[h];\theta\left(t\right)\right)$ in $z_t$ defined as \eqref{eq:state_z_t_definition} is 1 and the rest of the components are 0. The scalar $\eta_t \in \mathbb{R}$ is modeled as zero-mean Gaussian noise with a variance of $\sigma^2$, i.e. $\eta_t \in \mathcal{N}\left(0,\sigma^2\right)$. Given the Assumptions \ref{assum:continuity}-\ref{assum:continuous_controllability} posed for $\Omega$ and $W$, we have the following properties for the LGDS \eqref{eq:LGDS}.

\begin{lemma}\label{lemma:LGDS_properties}
    Let there be the LGDS \eqref{eq:LGDS} and the Assumptions \ref{assum:continuity}-\ref{assum:continuous_controllability} posed for $\Omega$ and $W$. Also assume that $\Psi_t$ \eqref{eq:psi_covariance} defined in Theorem \ref{theorem:continuity} is positive definite. Therefore, the matrix $\Gamma$ is Schur, i.e. $\rho\left(\Gamma\right) < 1$, and the matrix pair $\left(\Gamma,Q^{1/2}\right)$ is  controllable. 
\end{lemma}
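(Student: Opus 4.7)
The plan is to handle the two conclusions separately, transferring each from its continuous-space counterpart via the grid discretization that produced $(\Gamma,Q)$ from $(\Omega,W)$.

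For the Schur claim, I would invoke the strong contraction of $\Omega$ from Assumption~\ref{assum:contracting}, which gives $\|\Omega\|_{\mathrm{op}}\leq C<1$ on $L^{2}(\mathcal{Y}\to\mathbb{R})$. The matrix $\Gamma$ is obtained by evaluating the kernel $\Omega(\zeta_i,\zeta_j)$ on the grid $\mathcal{D}$ (with the appropriate volume weights $\Delta_i$); the Lipschitz continuity from Assumption~\ref{assum:continuity} implies that this collocation approximates $\Omega$ in operator norm with an error that vanishes as $\max_i\Delta_i\to 0$. Consequently, for sufficiently fine discretization one has $\|\Gamma\|_{2}\leq C+\varepsilon<1$ for arbitrarily small $\varepsilon>0$. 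Since $\rho(\Gamma)\leq\|\Gamma\|_{2}$, this yields $\rho(\Gamma)<1$.

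For controllability of $(\Gamma,Q^{1/2})$, the plan is to exploit the Lyapunov iteration~\eqref{eq:covariate_continuous_iteration}. Define $P_t\in\mathbb{R}^{d^h\times d^h}$ by $P_t[i,j]=\Psi_t(\zeta_i,\zeta_j)$ for $\zeta_i,\zeta_j\in\mathcal{D}$; discretizing~\eqref{eq:covariate_continuous_iteration} on the same grid yields $P_{t+1}=\Gamma P_t\Gamma^{\top}+Q$, and Assumption~\ref{assum:continuous_controllability} (applied with the $c_i$ taken as the standard basis) delivers $P_t\succ 0$ for every $t$. With the natural initialization $\Psi_0\equiv 0$, the iterates collapse to the finite-horizon controllability Gramian $P_t=\sum_{k=0}^{t-1}\Gamma^{k}Q(\Gamma^{\top})^{k}$, so $P_1=Q\succ 0$ and by monotonicity $P_{d^h}\succeq P_1\succ 0$, which is precisely the rank condition in Assumption~\ref{assum:controllability}. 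For a more general $\Psi_0$, I would write $P_t=\Gamma^{t}P_0(\Gamma^{\top})^{t}+\sum_{k=0}^{t-1}\Gamma^{k}Q(\Gamma^{\top})^{k}$, let $t\to\infty$ to recover the stationary solution $P_{\infty}=\sum_{k=0}^{\infty}\Gamma^{k}Q(\Gamma^{\top})^{k}$ (finite thanks to the just-established Schur property), and close the argument by a PBH-style contradiction: if $\Gamma^{\top}v=\lambda v$ with $v^{\top}Qv=0$, then $v^{\top}P_t v=\lambda^{2t}v^{\top}P_0 v\to 0$, contradicting the uniform positive-definiteness of $P_t$; hence no such eigenvector exists and the pair is controllable.

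The hard part will be propagating strict positive-definiteness from the iterates $P_t$ through the limit $t\to\infty$, since in general a pointwise limit of positive-definite matrices is only guaranteed to be positive semidefinite. To close that gap I would either restrict to the natural initialization $\Psi_0\equiv 0$, where the argument is immediate from $P_1=Q\succ 0$ and monotonicity of $K_t$, or interpret Assumption~\ref{assum:continuous_controllability} as supplying a uniform lower bound on $\Psi_t$ so that the stationary kernel $\Psi_{\infty}$ inherits strict positive-definiteness. Modulo that point, the remaining steps are standard discrete-time control-theoretic facts: the equivalence between a positive-definite stationary solution of the discrete Lyapunov equation and controllability, together with a Cayley--Hamilton truncation of the infinite Gramian to the finite controllability matrix in Assumption~\ref{assum:controllability}. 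The Schur half is by comparison routine once the discretization error from Assumption~\ref{assum:continuity} is quantified.
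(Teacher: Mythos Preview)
Your controllability argument is essentially the paper's: both pass from the positive-definiteness hypothesis on $\Psi_t$ to the discrete Lyapunov iteration $P_{t+1}=\Gamma P_t\Gamma^{\top}+Q$ and then to controllability of $(\Gamma,Q^{1/2})$. The paper simply asserts that the stationary solution $Z$ of $Z=\Gamma Z\Gamma^{\top}+Q$ is positive definite and invokes the standard equivalence; you are more careful, flagging the limit issue and offering a PBH alternative. That extra care is reasonable and does not change the route.

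The Schur half is where you and the paper diverge. The paper does \emph{not} use Lipschitz continuity or any ``sufficiently fine grid'' argument: it works directly from the contraction inequality $\|\Omega f_1-\Omega f_2\|_2\leq C\|f_1-f_2\|_2$ and sandwiches the discrete quantity $\|\Gamma(z_1-z_2)\|_2$ below the continuous one, concluding $\rho(\Gamma)<1$ for the \emph{given} discretization. Your approach instead bounds $\|\Gamma\|_2\leq C+\varepsilon$ via a collocation error estimate, which forces you to add the hypothesis ``for sufficiently fine discretization.'' That hypothesis is not part of the lemma: the statement is for the fixed LGDS~\eqref{eq:LGDS}, with no smallness condition on the mesh width $\Delta_i$. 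So as written your Schur argument proves a weaker statement than the one claimed. If you want to match the paper's scope, you should aim for an inequality that transfers the contraction of $\Omega$ to $\Gamma$ at \emph{any} resolution, rather than only asymptotically.
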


The derivations above lead us to the following problem we aim to solve in this paper. Recall that the problem is to search for the hyperparameter $\zeta_A \in \mathcal{D}$ that maximizes $\psi\left(\zeta_A;\theta_t\right)$ as efficiently as possible at each time $t$. Therefore, based on the discretization of $f_t$ as $z_t$ in \eqref{eq:LGDS} with output $X_t$ in \eqref{eq:LGDS_output}, we want to develop a method that finds the sequence of vectors $\mathbf{e}_{A_0}, \dots, \mathbf{e}_{A_n}$ (which are mapped to hyperparameters $\zeta \in \mathcal{D}$) that maximizes cumulative sum $\sum_{t=1}^n X_t \equiv \sum_{t=1}^n f_t\left(\zeta_{A_t}\right)$. To measure performance, we will analyze \textit{pseudo-regret} (referred to as regret for simplicity), which is the cumulative difference between the highest possible value of $f_t\left(\zeta^*\left(t\right)\right)$ and the value $f_t\left(\zeta_{A_t}\right)$ based on the chosen hyperparameter $\zeta_{A_t} \equiv \zeta\left(t\right)$ (see regret in \eqref{eq:regret_def}). To address this problem, we will review how this problem can be approached as a stochastic multi-armed bandit environments. Since this environment is a LGDS, we will use inspiration from the results of \cite{gornet_pmlr} for the proposed method. We will then provide an analysis of the method's performance with respect to regret $R_n$ defined in \eqref{eq:regret_def}.

\subsection{A Representation for Predicting the Output of the Hyperparameter Cost Function}\label{appendix:representation}

Based on the dynamics of $z_t$ in \eqref{eq:LGDS} and the output $X_t$ in \eqref{eq:LGDS_output}, if one knew all the parameters of \eqref{eq:LGDS} and \eqref{eq:LGDS_output}, then the optimal one-step predictor for $X_t$ is the Kalman filter:
\begin{equation}\label{eq:Kalman_Filter}
    \begin{cases}
        \hat{z}_{t+1|t} & = \Gamma \hat{z}_{t|t-1} + \Gamma K_t \left(X_t - \left\langle \mathbf{e}_{A_t},\hat{z}_{t|t-1}\right\rangle\right) \\
        X_t & = \left\langle \mathbf{e}_{A_t},\hat{z}_{t|t-1}\right\rangle + \varepsilon_{A_t}^t \\
        K_t & = P_{t|t-1} \mathbf{e}_{A_t}\left(\mathbf{e}_{A_t}^\top P_{t|t-1} \mathbf{e}_{A_t} + \sigma^2\right)^{-1} 
    \end{cases}, 
\end{equation}
where the state prediction $\hat{z}_{t|t-1} \triangleq \mathbb{E}\left[z_t \mid \mathcal{F}_{t-1}\right]$ and $\mathcal{F}_{t-1}$ is the sigma algebra of previous observed rewards $X_0,\dots,X_{t-1}$. The matrix $P_{t|t-1}$ is the covariance of the error $e_{t|t-1} \triangleq z_t - \hat{z}_{t|t-1}$ which is the output the following difference Riccati equation
\begin{equation}\label{eq:DARE}
    P_{t+1|t} \triangleq g\left(P_{t|t-1},\mathbf{e}_{A_t}\right), 
\end{equation}
\begin{multline}\label{eq:g_def}
    g\left(P,\mathbf{e}_A\right) \triangleq \Gamma P \Gamma^\top + Q \\ - \Gamma P \mathbf{e}_A\left(\mathbf{e}_A^\top P \mathbf{e}_A + \sigma^2\right)^{-1} \mathbf{e}_A^\top P \Gamma^\top . 
\end{multline}

The random variable $\varepsilon_{A_t}^t \sim \mathcal{N}\left(0,\mathbf{e}_{A_t}^\top P_{t|t-1} \mathbf{e}_{A_t}\right)$. Since most of the parameters are unknown in \eqref{eq:LGDS} and \eqref{eq:LGDS_output}, we cannot directly use the Kalman filter \eqref{eq:Kalman_Filter}. In addition, since the matrices $P_{t|t-1}$ and $K_t$ evolve based on the sequence of chosen actions $\mathbf{e}_{A_0},\dots,\mathbf{e}_{A_{t-1}}$, identifying the terms of \eqref{eq:Kalman_Filter} is intractable. Therefore, using the same thought process presented in \cite{gornet_pmlr}, we will identify a modified Kalman filter instead of identifying the terms in the Kalman filter in \eqref{eq:Kalman_Filter}. The modified Kalman filter is posed in Theorem \ref{theorem:code_seq} below. 

\begin{theorem}\label{theorem:code_seq}
    Assume that there exists $P_{\overline{A}}$ such that for every $P_A$ that solves the following Algebraic Riccati equation
    \begin{equation}
        P_A = g\left(P_A,\mathbf{e}_A\right)\nonumber, 
    \end{equation}
    the matrix inequality $P_{\overline{A}} \succeq P_A$ is satisfied. We define the following modified Kalman filter
    \begin{align}\label{eq:modified_Kalman_filter}
        \begin{cases}
            \hat{z}_{t+1}' & = \Gamma \hat{z}_t' + \Gamma L_{A_t} \left(X_t - \left\langle \mathbf{e}_{A_t}, \hat{z}_{t}'\right\rangle \right) \\
            X_t & = \left\langle \mathbf{e}_{A_t},\hat{z}_{t}'\right\rangle + \gamma_{A_t}^t
        \end{cases}, \nonumber \\
        ~~ L_{A_t} \triangleq P_{\overline{A}} \mathbf{e}_{A_t}\left(\mathbf{e}_{A_t}^\top P_{\overline{A}} \mathbf{e}_{A_t} + \sigma^2\right)^{-1} , 
    \end{align}
    where $\gamma_{A_t}^t \triangleq X_t - \left\langle \mathbf{e}_{A_t},\hat{z}_{t}'\right\rangle \sim \mathcal{N}\left(0,\mathbf{e}_{A_t}^\top P_t' \mathbf{e}_{A_t} + \sigma^2\right)$ and $P_t' \triangleq \mathbb{E}\left[\left(z_t - \hat{z}_{t}'\right)\left(z_t - \hat{z}_{t}'\right)^\top \mid \mathcal{F}_{t-1}\right]$. The filtration $\mathcal{F}_t$ is the sigma algebra generated by previous observations $X_0, \dots, X_t$. It is proven for the modified Kalman filter \eqref{eq:modified_Kalman_filter} that 1) the matrix $\Gamma - \Gamma L_{A_t}\mathbf{e}_{A_t}^\top$ is stable and 2) the variance of the residual $\mbox{Var}\left(\gamma_{A_t}^t\right)$ is bounded.
\end{theorem}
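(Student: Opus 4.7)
The plan is to take $P_{\overline{A}}$ as a common Lyapunov matrix for the family of closed-loop matrices $\{\Gamma - \Gamma L_A \mathbf{e}_A^\top\}_{A}$, and then to bound the one-step prediction covariance $P_t'$ uniformly above by $P_{\overline{A}}$. The entire argument hinges on the algebraic identity
\begin{equation}\label{eq:clsq_identity}
    g(P,\mathbf{e}_A) = (\Gamma - \Gamma L\mathbf{e}_A^\top) P (\Gamma - \Gamma L\mathbf{e}_A^\top)^\top + \sigma^2 \Gamma L L^\top \Gamma^\top + Q,
\end{equation}
valid whenever $L = P\mathbf{e}_A(\mathbf{e}_A^\top P\mathbf{e}_A + \sigma^2)^{-1}$; this follows by direct expansion of $g$ in \eqref{eq:g_def} using the elementary identity $L(\mathbf{e}_A^\top P \mathbf{e}_A + \sigma^2) = P \mathbf{e}_A$ that comes straight from the definition of $L$. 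Specialized to $P = P_{\overline{A}}$ and $L = L_{A}$, the identity rewrites the Lyapunov propagation of $P_{\overline{A}}$ through each closed-loop matrix in terms of the Riccati operator $g$.

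For Claim 1, I would combine \eqref{eq:clsq_identity} with the classical monotone-descent property of the discrete Riccati iteration. Under Assumption \ref{assum:stability} ($\Gamma$ is Schur, so $(\Gamma, \mathbf{e}_A^\top)$ is automatically detectable) and Assumption \ref{assum:controllability} (stabilizability of $(\Gamma, Q^{1/2})$), each ARE $P_A = g(P_A, \mathbf{e}_A)$ admits a unique stabilizing PSD solution, and the iteration $P_{k+1} = g(P_k, \mathbf{e}_A)$ initialized at any $P_0 \succeq P_A$ is monotonically non-increasing. Taking $P_0 = P_{\overline{A}}$, which dominates $P_A$ by hypothesis, yields $g(P_{\overline{A}}, \mathbf{e}_A) \preceq P_{\overline{A}}$. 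Substituting into \eqref{eq:clsq_identity} produces the discrete Lyapunov inequality $(\Gamma - \Gamma L_A \mathbf{e}_A^\top) P_{\overline{A}} (\Gamma - \Gamma L_A \mathbf{e}_A^\top)^\top \preceq P_{\overline{A}} - Q - \sigma^2 \Gamma L_A L_A^\top \Gamma^\top$, from which $\rho(\Gamma - \Gamma L_A \mathbf{e}_A^\top) < 1$ follows by the standard discrete Lyapunov theorem, establishing Claim 1. For Claim 2, the error recursion $z_{t+1} - \hat{z}_{t+1}' = (\Gamma - \Gamma L_{A_t}\mathbf{e}_{A_t}^\top)(z_t - \hat{z}_t') + \xi_t - \Gamma L_{A_t}\eta_t$ combined with \eqref{eq:clsq_identity} gives $P_{t+1}' \preceq g(P_{\overline{A}}, \mathbf{e}_{A_t})$ whenever $P_t' \preceq P_{\overline{A}}$. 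Inducting on $t$ then yields $P_t' \preceq P_{\overline{A}}$ for all $t$, so $\mathrm{Var}(\gamma_{A_t}^t) = \mathbf{e}_{A_t}^\top P_t' \mathbf{e}_{A_t} + \sigma^2 \leq \lambda_{\max}(P_{\overline{A}}) + \sigma^2$, which is the asserted uniform bound.

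The main obstacle is the invocation of Riccati monotone descent from above the stabilizing solution. This step silently absorbs existence and uniqueness of the stabilizing ARE solution, the fact that it is the maximal PSD solution among all solutions, and the non-increasing character of the iteration when initialized above it. Discharging these uniformly in $A \in [d]^h$ from Assumptions \ref{assum:stability}--\ref{assum:controllability}---particularly given that the observation vector $\mathbf{e}_A$ has rank one, so the observability Gramian is degenerate---is where the bulk of the care must go; once those structural facts are granted, the remainder of the proof is algebraic manipulation.
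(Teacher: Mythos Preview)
Your proposal is correct and follows essentially the same route as the paper: both hinge on the closed-loop rewriting of $g(P_{\overline{A}},\mathbf{e}_A)$ together with Riccati monotonicity/convergence to obtain $g(P_{\overline{A}},\mathbf{e}_A)\preceq P_{\overline{A}}$, from which stability of $\Gamma-\Gamma L_A\mathbf{e}_A^\top$ is read off. Your presentation is actually tighter in two places---you deduce stability directly from the discrete Lyapunov inequality rather than the paper's more informal ``bounded plus exponential convergence'' step, and you handle Claim~2 by an explicit induction $P_t'\preceq P_{\overline{A}}$ on the modified-filter error covariance, which the paper leaves implicit---but the underlying mechanism is identical.
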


Based on Theorem \ref{theorem:code_seq}, it is proven that the reward $X_t$ can be expressed as the output of a modified Kalman filter \eqref{eq:modified_Kalman_filter} where the matrices $P_{\overline{A}}$ and $L_{A_t}$, $A_t \in [d]^h$, that substituted the matrices $P_{t|t-1}$ and $K_t$, $t\in [n]$, in the Kalman filter \eqref{eq:Kalman_Filter} are now constant.

\subsection{Alternating Directions}\label{appendix:alternating}

In the prior subsection, we showed that modeling $f_t$ as a discretized form $z_t$ in \eqref{eq:LGDS} with output $X_t$ \eqref{eq:LGDS_output}. A major issue is the following. If we have $h$ hyperparameters where each hyperparameter value $\zeta_{A[i]}[i]$, $i \in [h]$, is discretized into $d$ values (i.e. $\zeta_{A[i]}[i] \in \left\{\zeta_1[i],\dots,\zeta_d[i]\right\}$), then $z_t \in \mathbb{R}^{d^{h}}$, implying that there are $d^{h}$ hyperparameter configurations. This will lead to a number of issues we have to address. The issues are listed as follows: 
\begin{itemize}
    \item \textbf{Issue 1:} The number of configurations $d^h$ can be larger than the total number of rounds $n$. Therefore, it would be infeasible for the learner to explore all the configurations $\mathbf{e}_A$ to learn which configuration will return the highest reward $X_t$. 
    \item \textbf{Issue 2:} The required memory for storing all the identified representations of the modified Kalman filter \eqref{eq:modified_Kalman_filter} is too high (at least $d^h$ values need to be saved).
\end{itemize}

To approach these issues, we will greedily search for each hyperparameter value $\zeta_{A[i]}[i] = \zeta_a$, $i \in [h]$, instead of search for the value $\zeta_A$ that globally optimizes the function. This will lead to an alternate expression of $X_t$ which we will have to derive before identifying a representation of the modified Kalman filter \eqref{eq:modified_Kalman_filter}. First, recall the definition of $z_t$ in \eqref{eq:state_z_t_definition}. Let us assume that the proposed strategy ``thinks'' action $\mathbf{e}_{\mathbf{1}}$ was chosen when action $\mathbf{e}_A$, $A \neq \mathbf{1}$ was chosen. Therefore, the reward $X_t$ \eqref{eq:LGDS_output} is expressed as follows:
\begin{equation}
    X_t = \left\langle \mathbf{e}_{\mathbf{1}}, z_t \right\rangle + \left\langle \mathbf{e}_A - \mathbf{e}_{\mathbf{1}}, z_t \right\rangle + \eta_t \label{eq:wrong_output}. 
\end{equation}

% Therefore, it possible for $\left\langle \mathbf{e}_a, z_t \right\rangle$ to be defined as $z_t[i]$, $i = 1,\dots,d^{h-1}$. 
Based on above, we are now able to provide the methodology for identifying the representation of the modified Kalman filter in the following subsection. 

\subsection{Identifying the Representation}\label{appendix:identify}

% Based on the previous two subsections, we discussed how to go from a continuous function $\tilde{\psi}_t$ with dynamics \eqref{eq:psi_dynamics} to the discretized form $z_t$ with dynamics \eqref{eq:LGDS} and output $X_t$ in \eqref{eq:wrong_output}. We then provided what is ``almost'' the optimal predictor of the reward $X_t$ which is the modified Kalman filter \eqref{eq:modified_Kalman_filter}. Using all the details mentioned earlier, we can now identify a representation for prediction the reward for each hyperparameter $\zeta[i]$. 

Given the results from the previous subsections, we can now learn a representation for predicting the objective function response with respect to each hyperparameter. First, we will use the expression of $X_t$ \eqref{eq:wrong_output} to learn a representation of the modified Kalman filter derived in Theorem \ref{theorem:code_seq}. Second, let there be the sequence $\mathbf{c}_i = \left(\zeta_{a_{t-1}}, \dots, \zeta_{a_{t-s}}\right)$ for $\zeta_{A[i]}[i]$, $i \in [h]$. Using the modified Kalman filter \eqref{eq:modified_Kalman_filter} and the output $X_t$ in \eqref{eq:wrong_output}, we can express the reward $X_t$ as a linear combination of observed rewards $X_{t-s},\dots,X_{t-1}$ where the linear parameters are based on the modified Kalman filter \eqref{eq:modified_Kalman_filter} matrices. 
\begin{multline}
    X_t = \mathbf{e}_A^\top\left(\prod_{\tau=1}^{s-1}\left(\Gamma - \Gamma L_{A_{t-\tau}} \right)\right) \Gamma L_{A_{t-s}}X_{t-s} + \dots \\ + \mathbf{e}_A^\top \Gamma L_{A_{t-1}}X_{t-1} \\
    + \left(\mathbf{e}_{A'} - \mathbf{e}_A\right)^\top\left(\prod_{\tau=1}^{s-1}\left(\Gamma - \Gamma L_{A_{t-\tau}} \right)\right) \Gamma L_{A_{t-s}}X_{t-s} + \dots \\ + \left(\mathbf{e}_{A'} - \mathbf{e}_A\right)^\top \Gamma L_{A_{t-1}}X_{t-1} \\ + \mathbf{e}_{A'}^\top \left(\prod_{\tau=1}^{s}\left(\Gamma - \Gamma L_{A_{t-\tau}} \right)\right) \hat{z}_{t-s}' + \gamma_{A'}^t \nonumber, 
\end{multline}
\begin{equation}\label{eq:linear_model}
    \Rightarrow X_t = G_{a}\left(\mathbf{c}_i\right)^\top \Xi_t\left(\mathbf{c}_i\right) + \phi_{A'}^t + \beta_{A'}^t + \gamma_{A'}^t, 
\end{equation}
where $\Xi_t\left(\mathbf{c}_i\right)$ is defined in \eqref{eq:xi_def} and $G_{a}\left(\mathbf{c}_i\right)$ , $\phi_{A'}^t$, and $\beta_{A'}^t$ are defined to be 
\begin{align}
    & G_{a}\left(\mathbf{c}_i\right) \nonumber \\
    & ~~~~\triangleq \Bigg(
    \begin{matrix}
        \mathbf{e}_A^\top\left(\prod_{\tau=1}^{s-1}\left(\Gamma - \Gamma L_{A_{t-\tau}} \right)\right) \Gamma L_{A_{t-s}} & 
    \end{matrix} \nonumber \\
    &  ~~~~~~~~~~~~~~~~~~~~~~~~~~~~~~~~~~~ 
    \begin{matrix}
        \dots &\mathbf{e}_A^\top \Gamma L_{A_{t-1}}
    \end{matrix}\Bigg) \in \mathbb{R}^s \label{eq:G_def} \\
    & \phi_{A'}^t  \triangleq \left(G_{a'}\left(\mathbf{c}_i\right)-G_{a}\left(\mathbf{c}_i\right)\right)^\top \Xi_t\left(\mathbf{c}\right) \in \mathbb{R} \label{eq:phi_def} \\
    & \beta_{A'}^t  \triangleq \mathbf{e}_{A'}^\top \left(\prod_{\tau=1}^s \left(\Gamma - \Gamma L_{A_{t-\tau}} \mathbf{e}_{A_{t-\tau}}^\top\right) \right) \hat{z}_{t-s}' \in \mathbb{R} \label{eq:beta_def}. 
\end{align}

The vectors $G_{a}\left(\mathbf{c}_i\right)$, $a,a'$, are unique for every tuple $\mathbf{c}_i$ (implying that there are $d^s$ vector $G_{a}\left(\mathbf{c}_i\right)$ to identify). In addition, since $\gamma_A^t$ is a zero-mean Gaussian random variable with a finite variance, identifying $G_{a}\left(\mathbf{c}_i\right)$ in the expression \eqref{eq:linear_model} is a linear least squares problem with an unknown bias $\phi_{A'}^t + \beta_{A'}^t$. The impact of the bias $\phi_{A'}^t + \beta_{A'}^t$ is analyzed through Theorems \ref{theorem:model_error} and \ref{theorem:extended_regret}. Therefore, the agent has to learn the linear predictor for each action $a \in [d]$. The identification of the vector $G_{a}\left(\mathbf{c}_i\right) \in \mathbb{R}^s$ is based on the discussion in Section \ref{sec:representation}. For the remainder of the appendix, we will provide the proofs for various lemmas and theorems mentioned in the paper. 

\subsection{Minor Details about the Numerical Results}

Table \ref{tab:compact_scores} shows the number of seeds (out of $10$) for which each method completed all $1000$ training iterations without returning \verb|NaN| values (denoted as success rate). For the implementation of PB2 and GP-UCB, we adapted the code posted in \url{https://github.com/ray-project/ray/tree/master/python/ray/tune}.

\begin{table}[htbp]
    \centering
    \scriptsize
    \caption{Per-environment success rate \\(HC: HyperController, R: Random, GP-UCB: GP-UCB, HB: HyperBand, RS: Random Start).}
    \label{tab:compact_scores}
    \begin{tabular}{lcccccc}
    \hline
    Env & HC & R & PB2 & GP-UCB & HB & RS \\
    \hline
    \verb|HalfCheetah-v4|       & 10 & 10 & 10 & 10 & 10 & 10 \\
    \verb|BipedalWalker-v3|     & 10 & 10 &  6 &  3 & 10 & 10 \\
    \verb|Pusher-v4|            & 10 & 10 &  7 & 10 & 10 & 10 \\
    \verb|InvDoublePendulum-v4| & 10 & 10 &  9 &  8 & 10 & 10 \\
    \verb|Reacher-v4|           &  9 &  9 &  7 &  8 &  9 & 10 \\
    \hline
    \end{tabular}
\end{table}

\subsection{Proof for Theorem \ref{theorem:continuity}}

\begin{proof}
    First, let there be the function $\Psi_t:\mathcal{Y} \times \mathcal{Y} \rightarrow \mathbb{R}$ which is defined to be \eqref{eq:psi_covariance}. Using the definition $\Psi_t$ and the dynamics of $f_t$, we can write the iteration for $\Psi_t$ as \eqref{eq:covariate_continuous_iteration} where $\Omega \Psi_t \Omega'$ is defined as \eqref{eq:omega_matrix_product}. According to a theorem in \cite{garnett2023bayesian}, if the function $K\left(\cdot,\cdot\right): \mathcal{Y} \times \mathcal{Y} \rightarrow \mathbb{R}$ is Lipshitz continuous and $\mathcal{Y}$ is compact, then the sampled function $f \sim \mathcal{G}\mathcal{P}\left(\mathbf{0},K\left(\cdot,\cdot\right)\right)$ is almost surely continuous. Therefore, if we prove that $\Psi_t$ is Lipshitz continuous, then $f_t$ is continuous. First, the iteration \eqref{eq:covariate_continuous_iteration} can be expressed as
    \begin{equation}
        \Psi_{t+1} = \Omega\left(\Omega \Psi_{t-1} \Omega' + W\right)\Omega' \nonumber,
    \end{equation}
    implying that we need to prove that $\Omega W \Omega'$ and $\Omega \Psi_{t-1} \Omega'$ are Lipshitz continuous. To address this, we will first prove that the product $\Omega W$ is Lipshitz continuous. Note the following
    \begin{align}
        & \left\vert \int \Omega\left(\zeta_i,\zeta'\right)W\left(\zeta',\zeta_j\right) - \Omega\left(\tilde{\zeta}_i,\zeta'\right)W\left(\zeta',\zeta_j\right)d\zeta'\right\vert \nonumber \\
        & ~~ = \left\vert \int \left(\Omega\left(\zeta_i,\zeta'\right) - \Omega\left(\tilde{\zeta}_i,\zeta'\right)\right)W\left(\zeta',\zeta_j\right)d\zeta'\right\vert \nonumber \\
        & ~~ \overset{(a)}{\leq}  \int \left\vert\left(\Omega\left(\zeta_i,\zeta'\right) - \Omega\left(\tilde{\zeta}_i,\zeta'\right)\right)W\left(\zeta',\zeta_j\right)\right\vert d\zeta' \nonumber \\
        & ~~ \overset{(b)}{\leq}  \int \left\vert \Omega\left(\zeta_i,\zeta'\right) - \Omega\left(\tilde{\zeta}_i,\zeta'\right)\right\vert \left\vert W\left(\zeta',\zeta_j\right)\right\vert d\zeta' \nonumber \\
        & ~~ \overset{(c)}{\leq}  \int L_{\Omega} \left\Vert \zeta_i - \tilde{\zeta}_i\right\Vert_2 \left\vert W\left(\zeta',\zeta_j\right)\right\vert d\zeta' \nonumber,
    \end{align}
    \begin{multline}
        \left\vert \int \Omega\left(\zeta_i,\zeta'\right)W\left(\zeta',\zeta_j\right) - \Omega\left(\tilde{\zeta}_i,\zeta'\right)W\left(\zeta',\zeta_j\right)d\zeta'\right\vert \\ \leq
        L_{\Omega} \left\Vert \zeta_i - \tilde{\zeta}_i\right\Vert_2 \int \left\vert W\left(\zeta',\zeta_j\right)\right\vert d\zeta' \nonumber,
    \end{multline}
    where in $(a)$ we used the triangle inequality for integrals \cite{rudin2021principles}, in $(b)$ we used the Cauchy Schwarz inequality for the $\ell_1$ norm $\left\vert \cdot\right\vert$, and in $(c)$ we used the assumption that $\Omega\left(\zeta_i,\zeta_j\right)$ is Lipshitz continuous with respect to $\zeta_i$. Therefore, if $\Omega\left(\zeta_i,\zeta_j\right)$ is Lipshitz continuous with respect to $\zeta_i$, then $\left(\Omega W\right)\left(\zeta_i,\zeta_j\right)$ is Lipshitz continuous with respect to $\zeta_i$ with a constant
    \begin{equation}
        L_i \triangleq L_{\Omega} \int \left\vert W\left(\zeta_j,\zeta_j\right)\right\vert d\zeta'\nonumber. 
    \end{equation}

    We now need to prove that $\left(\Omega W\right)\left(\zeta_i,\zeta_j\right)$ is Lipshitz continuous with respect to $\zeta_j$:
    \begin{align}
        & \left\vert \int \Omega\left(\zeta_i,\zeta'\right)W\left(\zeta',\zeta_j\right) - \Omega\left(\zeta_i,\zeta'\right)W\left(\zeta',\tilde{\zeta}_j\right)d\zeta'\right\vert \nonumber \\
        & ~~ = \left\vert \int \Omega\left(\zeta_i,\zeta'\right)\left(W\left(\zeta',\zeta_j\right) - W\left(\zeta',\tilde{\zeta}_j\right)\right)d\zeta'\right\vert \nonumber \\
        & ~~ \leq \int \left\vert \Omega\left(\zeta_i,\zeta'\right)\right\vert \left\vert W\left(\zeta',\zeta_j\right) - W\left(\zeta',\tilde{\zeta}_j\right)\right\vert d\zeta' \nonumber \\
        & ~~ \leq \int L_{W} \left\vert \Omega\left(\zeta_i,\zeta'\right)\right\vert \left\Vert \zeta_j - \tilde{\zeta}_j\right\Vert_2 d\zeta' \nonumber,
    \end{align}
    \begin{multline}
        \left\vert \int \Omega\left(\zeta_i,\zeta'\right)W\left(\zeta',\zeta_j\right) - \Omega\left(\zeta_i,\zeta'\right)W\left(\zeta',\tilde{\zeta}_j\right)d\zeta'\right\vert \\ \leq
         L_{W} \left\Vert \zeta_j - \tilde{\zeta}_j\right\Vert_2 \left\vert \int\Omega\left(\zeta_i,\zeta'\right)\right\vert  d\zeta' \nonumber.
    \end{multline}

    Based on above, if $W\left(\zeta_i,\zeta_j\right)$ is Lipshitz continuous with respect to $\zeta_j$, then $\left(\Omega W\right)\left(\zeta_i,\zeta_j\right)$ is Lipshitz continuous with respect to $\zeta_j$ with a constant
    \begin{equation}
        L_j \triangleq L_{W} \left\vert \int\Omega\left(\zeta_i,\zeta'\right)\right\vert  d\zeta' \nonumber. 
    \end{equation}

    We know that $\left(\Omega W\right)\left(\zeta_i,\zeta_j\right)$ is Lipshitz continuous with respect to $\zeta_i$ and $\Omega\left(\zeta_i,\zeta_j\right)$ is Lipshitz continuous with respect to $\zeta_j$. In addition, based on the derivations above, we know that the product of Lipshitz continuous functions is also Lipshitz continuous. Therefore, $\left(\Omega W \Omega'\right)\left(\zeta_i,\zeta_j\right)$ is Lipshitz continuous with respect to $\zeta_i$ and $\zeta_j$. Finally, if we assume that $\Psi_0\left(\zeta_i,\zeta_j\right)$ is Lipshitz continuous with respect to both $\zeta_i$ and $\zeta_j$, then this completes the proof that $\Psi_t\left(\zeta_i,\zeta_j\right)$ is Lipshitz continuous with respect to both $\zeta_i$ and $\zeta_j$. This also proves that any sampled function $f_t \sim \mathcal{G}\mathcal{P}\left(\mathbf{0},\Psi_t\right)$ is almost surely continuous. 

\end{proof}

\subsection{Proof for Lemma \ref{lemma:LGDS_properties}}

\begin{proof}
    Recall that the matrix $\Omega$ is contracting with a constant $C$. Therefore, the following inequality is true
    \begin{equation}\label{eq:inequality_omega_1}
        \left\Vert \Omega f_1 - \Omega f_2 \right\Vert_2 \leq C \left\Vert f_1 - f_2 \right\Vert_2 . 
    \end{equation}

    The norm on the left of inequality \eqref{eq:inequality_omega_1} is the following     \begin{multline}
        \left\Vert \Omega f_1 - \Omega f_2 \right\Vert_2 = \\ \sqrt{\iint \Omega\left(\zeta,\zeta'\right)^2\left(f_1\left(\zeta'\right)-f_2\left(\zeta'\right)\right)^2 d\zeta' d\zeta}\nonumber. 
    \end{multline}

    The following inequalities are true
    \begin{align}
        & \sqrt{\iint \Omega\left(\zeta,\zeta'\right)^2\left(f_1\left(\zeta'\right)-f_2\left(\zeta'\right)\right)^2 d\zeta' d\zeta} \nonumber \\
        & ~~~~~~~~~~~~ \geq \sqrt{\sum_{i=1}^d\sum_{j=1}^d \Gamma[i,j]^2 \left(f_1\left(\zeta_j\right)-f_2\left(\zeta_j\right)\right)^2} \nonumber \\
        & ~~~~~~~~~~~~ = \left\Vert \Gamma \left(z_1 - z_2\right)\right\Vert_2\nonumber, 
    \end{align}
    \begin{equation}
        \Rightarrow C \left\Vert f_1 - f_2 \right\Vert_2 > \left\Vert \Omega f_1 - \Omega f_2 \right\Vert_2 \geq \left\Vert \Gamma \left(z_1 - z_2\right)\right\Vert_2\nonumber.
    \end{equation}
    
    The above implies that the eigenvalues of $\Gamma$ are in the unit circle ($\Gamma$ is Schur). To prove that the matrix pair $\left(\Gamma, Q^{1/2}\right)$ is controllable, recall in Assumption \ref{assum:controllability} that $\Psi_t$ is positive definite. The definition of positive definiteness mentioned in Assumption \ref{assum:continuous_controllability} implies that the matrix $Z$, which satisfies the Lyapunov equation $Z=\Gamma Z\Gamma^\top + Q$, is positive definite. Therefore, $\left(\Gamma,Q^{1/2}\right)$ is controllable. 
    
\end{proof}

\subsection{Proof for Theorem \ref{theorem:code_seq}}

\begin{proof}
    Let there be the iteration of $P_{t|t-1}$ in \eqref{eq:DARE} where $g\left(P,\mathbf{e}_A\right)$ is defined in \eqref{eq:g_def}. Let $L_{A_t}$ be defined as \eqref{eq:modified_Kalman_filter}. According to \cite{1333199}, the matrix inequality is true 
    \begin{multline}\label{eq:matrix_inequality_1}
        g\left(P_{t|t-1},\mathbf{e}_{A_t}\right) \preceq \\ \left(\Gamma  - \Gamma L_{A_t} \mathbf{e}_{A_t}^\top\right)P_{t|t-1} \left(\Gamma  - \Gamma L_{A_t} \mathbf{e}_{A_t}^\top\right)^\top \\ + Q + \sigma^2 \Gamma L_{A_t} L_{A_t}^\top \Gamma^\top. 
    \end{multline}

    If $P_{\overline{A}} \succeq P_{t|t-1}$, then the matrix inequality in \eqref{eq:matrix_inequality_1} has the following upper bound 
    \begin{multline}
        g\left(P_{t|t-1},\mathbf{e}_{A_t}\right) \preceq \\ \left(\Gamma  - \Gamma L_{A_t} \mathbf{e}_{A_t}^\top\right)P_{\overline{A}} \left(\Gamma  - \Gamma L_{A_t} \mathbf{e}_{A_t}^\top\right)^\top \\ + Q + \sigma^2 \Gamma L_{A_t} L_{A_t}^\top \Gamma^\top \nonumber , 
    \end{multline}
    \begin{equation}
        \Rightarrow g\left(P_{t|t-1},\mathbf{e}_{A_t}\right) \preceq g\left(P_{\overline{A}},\mathbf{e}_{A_t}\right) \nonumber. 
    \end{equation}

    For the bound $g\left(P_{\overline{A}},\mathbf{e}_{A_t}\right)$, note the following details about the Kalman filter 
    \begin{enumerate}[label=(\Alph*)]
        \item If $P' \succeq P$, then $g\left(P',\mathbf{e}_{A_t}\right) \succeq g\left(P,\mathbf{e}_{A_t}\right)$. \label{item:monotonicity}
        \item If $\left(\Gamma,\mathbf{e}_{A_t}^\top\right)$ is detectable, then $P_{t|t-1}$ converges exponentially to $P_A$. \label{item:convergence}
    \end{enumerate}
    
    If we assume that $P_{\overline{A}} \preceq g\left(P_{\overline{A}},\mathbf{e}_{A_t}\right)$, then either \ref{item:monotonicity} or \ref{item:convergence} is false. Therefore, $P_{\overline{A}} \succeq g\left(P_{\overline{A}},\mathbf{e}_{A_t}\right)$ must be true for any $\mathbf{e}_{A_t}$. Next, since $g\left(P_{\overline{A}},\mathbf{e}_{A_t}\right)$ is bounded and $P_{t|t-1}$ converges exponentially, then $\Gamma  - \Gamma L_{A_t} \mathbf{e}_{A_t}^\top$ must be stable. This finalizes the proof. 
    
\end{proof}

\subsection{Proof of Theorem \ref{theorem:model_error}}\label{appendix:model_error}

For proving the bound on regret defined as \eqref{eq:regret_def} derived in Theorem \ref{theorem:regret}, we provide a theorem that bounds the model error of $\hat{G}_{a}^t\left(\mathbf{c}_i\right)$. First, before proving Theorem \ref{theorem:model_error}, the following lemmas are used. 

\begin{lemma}\label{lemma:bias_error}
    The norm $\left\Vert \mathbf{Z}_{\mathcal{T}_{a \mid \mathbf{c}_i}} \mathbf{B}_{\mathcal{T}_{a \mid \mathbf{c}_i}}^\top\right\Vert_{V_a^t\left(\mathbf{c}_i\right)^{-1}}$ has the following upper-bound which is satisfied with a probability of at least $1-\delta$:
    \begin{multline}\label{eq:model_error_term_1}
         \left\Vert \mathbf{Z}_{\mathcal{T}_{a \mid \mathbf{c}_i}} \mathbf{B}_{\mathcal{T}_{a \mid \mathbf{c}_i}}^\top\right\Vert_{V_a^t\left(\mathbf{c}_i\right)^{-1}} \leq \\
         \sqrt{2N_a B_z^2\log\left(n/\delta\right)\mbox{tr}\left(I_s - \lambda V_a^t\left(\mathbf{c}_i\right)^{-1} \right)} . 
    \end{multline}
    where $\mathbf{B}_{\mathcal{T}_{a \mid \mathbf{c}_i}}$ and $B_z \geq 0$ are defined as
    \begin{align}
        \mathbf{B}_{\mathcal{T}_{a \mid \mathbf{c}_i}} & \triangleq \begin{pmatrix}
            \beta_{A_{t_1}}^{t_1} & \dots & \beta_{A_{t_{N_a}}}^{t_{N_a}}
        \end{pmatrix} \in \mathbb{R}^{1 \times N_a} \label{eq:B_def}\\
        B_z^2 & \geq \mathbb{E}\left[\left\Vert\hat{z}_t'\right\Vert_2^2\right]\label{eq:b_z_def},         
    \end{align}
    and $\hat{z}_t'$ is based on the modified Kalman filter defined in \eqref{eq:modified_Kalman_filter}. 
\end{lemma}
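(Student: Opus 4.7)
The plan is to bound the squared weighted norm $\left\Vert \mathbf{Z}_{\mathcal{T}_{a \mid \mathbf{c}_i}} \mathbf{B}_{\mathcal{T}_{a \mid \mathbf{c}_i}}^\top \right\Vert_{V_a^t(\mathbf{c}_i)^{-1}}^2$ as the product of an entrywise $\ell_\infty$ bound on the bias vector and the trace factor $\mbox{tr}(I_s - \lambda V_a^t(\mathbf{c}_i)^{-1})$. Set $M \triangleq \mathbf{Z}_{\mathcal{T}_{a \mid \mathbf{c}_i}}^\top V_a^t(\mathbf{c}_i)^{-1} \mathbf{Z}_{\mathcal{T}_{a \mid \mathbf{c}_i}} \in \mathbb{R}^{N_a \times N_a}$, which is positive semidefinite, so that $\left\Vert \mathbf{Z} \mathbf{B}^\top \right\Vert_{V^{-1}}^2 = \mathbf{B} M \mathbf{B}^\top = \sum_{i,j} b_i b_j M_{ij}$, where $b_i$ denotes the $i$-th entry of $\mathbf{B}_{\mathcal{T}_{a \mid \mathbf{c}_i}}$. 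The cyclic trace identity together with the definition \eqref{eq:V_def} gives the key algebraic reduction $\mbox{tr}(M) = \mbox{tr}(V_a^t(\mathbf{c}_i)^{-1}(V_a^t(\mathbf{c}_i) - \lambda I_s)) = \mbox{tr}(I_s - \lambda V_a^t(\mathbf{c}_i)^{-1})$, which is exactly the trace factor appearing on the right-hand side of \eqref{eq:model_error_term_1}.

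Next I would establish the deterministic inequality $\mathbf{B} M \mathbf{B}^\top \leq \|\mathbf{B}\|_\infty^2 \, N_a \, \mbox{tr}(M)$. Positive semidefiniteness of $M$ gives the entrywise bound $|M_{ij}| \leq \sqrt{M_{ii} M_{jj}}$, so by Cauchy-Schwarz $\sum_{i,j} |M_{ij}| \leq (\sum_i \sqrt{M_{ii}})^2 \leq N_a \sum_i M_{ii} = N_a \mbox{tr}(M)$, and replacing each $|b_i b_j|$ by $\|\mathbf{B}\|_\infty^2$ yields the desired deterministic bound.

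It remains to show $\|\mathbf{B}\|_\infty^2 \leq 2 B_z^2 \log(n/\delta)$ with probability at least $1 - \delta$. Each entry $\beta_{A'}^t$ defined in \eqref{eq:beta_def} factors as $\mathbf{e}_{A'}^\top ( \prod_{\tau=1}^s (\Gamma - \Gamma L_{A_{t-\tau}} \mathbf{e}_{A_{t-\tau}}^\top) ) \hat{z}_{t-s}'$, and Theorem \ref{theorem:code_seq} guarantees uniform stability of each factor in the product, so the operator norm of the $s$-fold product is uniformly bounded; Cauchy-Schwarz then gives $|\beta_{A'}^t| \leq C \|\hat{z}_{t-s}'\|_2$ with the stability constant $C$ absorbed into the definition of $B_z$. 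Because the modified Kalman filter \eqref{eq:modified_Kalman_filter} is linear and driven by Gaussian noise, $\hat{z}_t'$ is Gaussian, with covariance bounded uniformly in $t$ by closed-loop stability of $\Gamma - \Gamma L_{A_t} \mathbf{e}_{A_t}^\top$, so $\|\hat{z}_t'\|_2^2$ has a sub-exponential tail with mean at most $B_z^2$. A standard chi-square tail inequality combined with a union bound over $t \in [n]$ yields $\max_{t \leq n} \|\hat{z}_t'\|_2^2 \leq 2 B_z^2 \log(n/\delta)$ with probability at least $1-\delta$. Substituting into the deterministic bound and taking the square root produces \eqref{eq:model_error_term_1}.

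The main obstacle I expect is bookkeeping of stability constants so that the final expression matches the claimed bound without spurious numerical prefactors; the cleanest route is to absorb both the product-stability constant from Theorem \ref{theorem:code_seq} and any chi-square concentration prefactor into the definition of $B_z$ given in \eqref{eq:b_z_def}. A secondary subtlety is that $\mathbb{E}\|\hat{z}_t'\|_2^2$ must be shown to be uniformly bounded in $t$, not just in the stationary limit, which follows by iterating the Lyapunov recursion for the filter covariance and using Assumption \ref{assum:stability} in tandem with the closed-loop stability established in Theorem \ref{theorem:code_seq}.
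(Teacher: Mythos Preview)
Your proposal is correct and follows essentially the same approach as the paper: both bound $|\beta_\tau|$ uniformly via the closed-loop stability from Theorem~\ref{theorem:code_seq} together with a Gaussian tail bound on $\|\hat z_{t}'\|_2$, extract the trace identity $\mbox{tr}(\mathbf{Z}^\top V^{-1}\mathbf{Z})=\mbox{tr}(I_s-\lambda V^{-1})$, and pick up the factor $N_a$ through Cauchy--Schwarz. The only cosmetic difference is that the paper first applies the triangle inequality $\bigl\|\sum_\tau \Xi_\tau \beta_\tau\bigr\|_{V^{-1}} \le \sum_\tau |\beta_\tau|\,\|\Xi_\tau\|_{V^{-1}}$ and then Cauchy--Schwarz on $\sum_\tau\|\Xi_\tau\|_{V^{-1}}$, whereas you work directly with the squared quadratic form $\mathbf{B}M\mathbf{B}^\top$ and the PSD off-diagonal bound $|M_{ij}|\le\sqrt{M_{ii}M_{jj}}$; both routes yield identical constants.
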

\begin{proof}
    Using the Cauchy-Schwarz inequality and triangle inequality, we can bound $\left\Vert \mathbf{Z}_{\mathcal{T}_{a \mid \mathbf{c}_i}} \mathbf{B}_{\mathcal{T}_{a \mid \mathbf{c}_i}}^\top\right\Vert_{V_a^t\left(\mathbf{c}_i\right)^{-1}}$ as follows:
    \begin{equation}
        \left\Vert \mathbf{Z}_{\mathcal{T}_{a \mid \mathbf{c}_i}} \mathbf{B}_{\mathcal{T}_{a \mid \mathbf{c}_i}}^\top\right\Vert_{V_a^t\left(\mathbf{c}_i\right)^{-1}} = \left\Vert \sum_{\tau \in \mathcal{T}_{a\mid\mathbf{c}_i}} \Xi_{\tau}\left(\mathbf{c}_i\right)\beta_{A_{\tau}}^{\tau} \right\Vert_{V_a^t\left(\mathbf{c}_i\right)^{-1}} \nonumber ,
    \end{equation}
    \begin{multline}\label{eq:model_error_term_4}
        \Rightarrow \left\Vert \mathbf{Z}_{\mathcal{T}_{a \mid \mathbf{c}_i}} \mathbf{B}_{\mathcal{T}_{a \mid \mathbf{c}_i}}^\top\right\Vert_{V_a^t\left(\mathbf{c}_i\right)^{-1}} \leq\\
        \sum_{\tau \in \mathcal{T}_{a\mid\mathbf{c}_i}} \left\vert \beta_{A_{\tau}}^{\tau} \right\vert \left\Vert \Xi_{\tau}\left(\mathbf{c}_i\right)\right\Vert_{V_a^t\left(\mathbf{c}_i\right)^{-1}}. 
    \end{multline}
    
    For bounding $\beta_{A_{\tau}}^{\tau}$, recall that $\beta_{A_{\tau}}^{\tau}$ is defined as \eqref{eq:beta_def}. Using the Cauchy-Schwarz inequality we can bound $\beta_{A_{\tau}}^{\tau}$ as follows
    \begin{equation}
        \left\vert \beta_{A_{\tau}}^{\tau}\right\vert \leq \left\Vert \mathbf{e}_{A_{\tau}} \right\Vert_2 \left\Vert\left(\prod_{T=1}^s \left(\Gamma - \Gamma L_{A_{\tau-T}} \mathbf{e}_{A_{\tau-T}}^\top\right) \right) \hat{z}_{\tau-s}'\right\Vert_2 \nonumber, 
    \end{equation}
    \begin{equation}\label{eq:term_10}
        \Rightarrow \left\vert \beta_{A_{\tau}}^{\tau}\right\vert \leq \left\Vert\left(\prod_{T=1}^s \left(\Gamma - \Gamma L_{A_{\tau-T}} \mathbf{e}_{A_{\tau-T}}^\top\right) \right) \hat{z}_{\tau-s}'\right\Vert_2 . 
    \end{equation}

    Let there be a vector $v_{\hat{z}}$ where $\left\Vert v_{\hat{z}}\right\Vert_2 = 1$ and is defined as
    \begin{equation}
        v_{\hat{z}} \triangleq \underset{v_{\hat{z}}}{\arg\max} \max_{T \in [s]} \left\Vert \left(\Gamma - \Gamma L_{A_{\tau-T}} \mathbf{e}_{A_{\tau-T}}^\top\right) v_{\hat{z}}\right\Vert_2 \nonumber,
    \end{equation}    
    or is the eigenvector associated with the maximum magnitude eigenvalue. We can upper bound the right-side of \eqref{eq:term_10} using the following
    \begin{align}
        & \left\Vert\left(\prod_{T=1}^s \left(\Gamma - \Gamma L_{A_{\tau-T}} \mathbf{e}_{A_{\tau-T}}^\top\right) \right) \hat{z}_{\tau-s}'\right\Vert_2 \nonumber \\
        & ~~~~ = \left\Vert\left(\prod_{T=1}^s \left(\Gamma - \Gamma L_{A_{\tau-T}} \mathbf{e}_{A_{\tau-T}}^\top\right) \right) \frac{\hat{z}_{\tau-s}'}{\left\Vert \hat{z}_{\tau-s}'\right\Vert_2} \left\Vert \hat{z}_{\tau-s}'\right\Vert_2\right\Vert_2  \nonumber \\
        & ~~~~ \leq \left\Vert\left(\prod_{T=1}^s \left(\Gamma - \Gamma L_{A_{\tau-T}} \mathbf{e}_{A_{\tau-T}}^\top\right) \right) v_{\hat{z}} \right\Vert_2 \left\Vert \hat{z}_{\tau-s}'\right\Vert_2 \nonumber, 
    \end{align}    
    \begin{multline}
        \Rightarrow \left\Vert\left(\prod_{T=1}^s \left(\Gamma - \Gamma L_{A_{\tau-T}} \mathbf{e}_{A_{\tau-T}}^\top\right) \right) \hat{z}_{\tau-s}'\right\Vert_2  \\
        \leq \max_{T} \rho\left(\Gamma - \Gamma L_{A_{\tau-T}} \mathbf{e}_{A_{\tau-T}}^\top\right)^s \left\Vert \hat{z}_{\tau-s}'\right\Vert_2 \label{eq:term_11} . 
    \end{multline}

    For the upper bound \eqref{eq:term_11}, Theorem \ref{theorem:code_seq} has proven that $\Gamma - \Gamma L_{A_{\tau-T}}\mathbf{e}_{A_{\tau-T}}^\top$ is Schur. Therefore, the spectral radius to the $s$ power in \eqref{eq:term_11} has the following bound 
    \begin{equation}
        \max_{T \in [s]} \rho\left(\Gamma - \Gamma L_{A_{\tau-T}} \mathbf{e}_{A_{\tau-T}}^\top\right)^s \leq 1 \nonumber. 
    \end{equation}

    % For the $\ell_2$-norm $\left\Vert \hat{z}_{\tau-s}'\right\Vert_2$, recall that we can express the LGDS state variable $z_{\tau-s}$ as the sum of the $\hat{z}_{\tau-s}'$ and $e_{\tau-s}'$. Using the triangle inequality, we have the following bound
    % \begin{align}
    %     \left\Vert z_{\tau-s}\right\Vert_2 & = \left\Vert \hat{z}_{\tau-s}' + e_{\tau-s}'\right\Vert_2 \nonumber \\
    %     & \leq \left\Vert \hat{z}_{\tau-s}'\right\Vert_2 + \left\Vert e_{\tau-s}'\right\Vert_2 \nonumber, 
    % \end{align}
    % \begin{equation}
    %     \Rightarrow \left\Vert \hat{z}_{\tau-s}'\right\Vert_2 \leq \left\Vert z_{\tau-s}\right\Vert_2 \label{eq:term_12}. 
    % \end{equation}

    Since $\Gamma$ is Schur, then $z_{t-s}$ is bounded. Therefore, $\hat{z}_{t-s}'$ is also bounded. Given \eqref{eq:term_11}, we bound \eqref{eq:term_10} using Lemma 1 in \cite{gornet2024adaptivemethodnonstationarystochastic}, which states that the following inequalities are satisfied with a probability of at least $1-\delta$ for any $t \in [n]$
    \begin{equation}
        \left\vert \beta_{A_{\tau}}^{\tau}\right\vert \leq \left\Vert \hat{z}_{\tau-s}'\right\Vert_2 \leq \sqrt{2B_z^2\log\left(n/\delta\right)} \nonumber, 
    \end{equation}
    where $B_z$ is defined as \eqref{eq:b_z_def}. Therefore, using the inequality above, the sum \eqref{eq:model_error_term_4} can be bounded as follows
    \begin{multline}
        \left\Vert \mathbf{Z}_{\mathcal{T}_{a \mid \mathbf{c}_i}} \mathbf{B}_{\mathcal{T}_{a \mid \mathbf{c}_i}}^\top\right\Vert_{V_a^t\left(\mathbf{c}_i\right)^{-1}} \leq \\
        \sqrt{2B_z^2\log\left(n/\delta\right)} \sum_{\tau \in \mathcal{T}_{a\mid\mathbf{c}_i}}  \left\Vert \Xi_{\tau}\left(\mathbf{c}_i\right)\right\Vert_{V_a^t\left(\mathbf{c}_i\right)^{-1}}. 
    \end{multline}

    Next, we have to bound the sum $\sum_{\tau \in \mathcal{T}_{a\mid\mathbf{c}_i}}  \left\Vert \Xi_{\tau}\left(\mathbf{c}_i\right)\right\Vert_{V_a^t\left(\mathbf{c}_i\right)^{-1}}$
    \begin{align}
        & \sum_{\tau \in \mathcal{T}_{a\mid\mathbf{c}_i}}  \left\Vert \Xi_{\tau}\left(\mathbf{c}_i\right)\right\Vert_{V_a^t\left(\mathbf{c}_i\right)^{-1}} \nonumber \\
        &~~~~~~~~~ \leq \sqrt{N_a}\sqrt{\sum_{\tau \in \mathcal{T}_{a\mid\mathbf{c}_i}}  \left\Vert \Xi_{\tau}\left(\mathbf{c}_i\right)\right\Vert_{V_a^t\left(\mathbf{c}_i\right)^{-1}}^2} \nonumber \\
        &~~~~~~~~~ = \sqrt{N_a}\sqrt{\mbox{tr}\left(\sum_{\tau \in \mathcal{T}_{a\mid\mathbf{c}_i}} \Xi_{\tau}\left(\mathbf{c}_i\right)\Xi_{\tau}\left(\mathbf{c}_i\right)^\top V_a^t\left(\mathbf{c}_i\right)^{-1} \right)} \nonumber, 
    \end{align}
    \begin{multline}
        \sum_{\tau \in \mathcal{T}_{a\mid\mathbf{c}_i}}  \left\Vert \Xi_{\tau}\left(\mathbf{c}_i\right)\right\Vert_{V_a^t\left(\mathbf{c}_i\right)^{-1}} \leq  \sqrt{N_a}\sqrt{\mbox{tr}\left(I_s - \lambda V_a^t\left(\mathbf{c}_i\right)^{-1} \right)} \nonumber,
    \end{multline}
    leading to \eqref{eq:model_error_term_1}. 
\end{proof}

\begin{lemma}\label{lemma:alternating_bound}
    The norm $\left\Vert \mathbf{Z}_{\mathcal{T}_{a \mid \mathbf{c}_i}}\mathbf{P}_{\mathcal{T}_{a \mid \mathbf{c}_i}}^\top\right\Vert_{V_a^t\left(\mathbf{c}_i\right)^{-1}}$ has the following upper-bound which is satisfied with a probability of at least $1-\delta$:
    \begin{multline}
        \left\Vert \mathbf{Z}_{\mathcal{T}_{a \mid \mathbf{c}_i}}\mathbf{P}_{\mathcal{T}_{a \mid \mathbf{c}_i}}^\top\right\Vert_{V_a^t\left(\mathbf{c}_i\right)^{-1}} \leq \\ 2B_G B_{\Xi} \sqrt{N_a \mbox{tr}\left(I_s - \lambda V_a^t\left(\mathbf{c}_i\right)^{-1}\right)} \label{eq:P_bound}. 
    \end{multline}
    where $\mathbf{P}_{\mathcal{T}_{a \mid \mathbf{c}_i}}$, $B_G$, and $B_{\Xi}$ are defined as 
    \begin{align}
        \mathbf{P}_{\mathcal{T}_{a \mid \mathbf{c}_i}} & \triangleq \begin{pmatrix}
            \phi_{A_{t_1}}^{t_1} & \dots & \phi_{A_{t_{N_a}}}^{t_{N_a}}
        \end{pmatrix} \in \mathbb{R}^{1 \times N_a} \label{eq:P_def} \\
        B_G & \geq \left\Vert G_{a}\left(\mathbf{c}_i\right)\right\Vert_2 \mbox{ for all } \left(a,i\right) \in [d] \times [h], \nonumber \\
        & ~~~~~~~~~~~~~~~~~~~~~ \mathbf{c}_i\in \left(\zeta_{a_1},\dots,\zeta_{a_s}\right) \mbox{ for } \zeta_{A[i]}[i]\label{eq:b_g_definition} \\
        B_{\Xi} & \triangleq \sqrt{2\mathbb{E}\left[\left\Vert \Xi_t\left(\mathbf{c}_i\right) \right\Vert_2^2\right]\log\left(n/\delta\right)} \label{eq:B_Xi}. 
    \end{align}
\end{lemma}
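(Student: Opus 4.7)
The plan is to follow the same three-step template used in the proof of Lemma \ref{lemma:bias_error}: expand the inner product into a sum, bound each scalar factor uniformly with high probability, and close with a Cauchy--Schwarz / trace identity involving $V_a^t(\mathbf{c}_i)$.

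First I would rewrite
\[
\mathbf{Z}_{\mathcal{T}_{a \mid \mathbf{c}_i}} \mathbf{P}_{\mathcal{T}_{a \mid \mathbf{c}_i}}^\top = \sum_{\tau \in \mathcal{T}_{a \mid \mathbf{c}_i}} \Xi_\tau(\mathbf{c}_i)\, \phi_{A_\tau}^\tau,
\]
using definitions \eqref{eq:Z_def} and \eqref{eq:P_def}, then apply the triangle inequality in the $V_a^t(\mathbf{c}_i)^{-1}$-norm to get
\[
\bigl\Vert \mathbf{Z}_{\mathcal{T}_{a \mid \mathbf{c}_i}} \mathbf{P}_{\mathcal{T}_{a \mid \mathbf{c}_i}}^\top \bigr\Vert_{V_a^t(\mathbf{c}_i)^{-1}} \leq \sum_{\tau \in \mathcal{T}_{a \mid \mathbf{c}_i}} \bigl\vert \phi_{A_\tau}^\tau \bigr\vert\, \bigl\Vert \Xi_\tau(\mathbf{c}_i) \bigr\Vert_{V_a^t(\mathbf{c}_i)^{-1}} .
\]

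Next I would bound each $|\phi_{A_\tau}^\tau|$ deterministically. From the definition \eqref{eq:phi_def}, Cauchy--Schwarz yields $|\phi_{A_\tau}^\tau| \leq \Vert G_{a_\tau'}(\mathbf{c}_i) - G_{a_\tau}(\mathbf{c}_i)\Vert_2 \Vert \Xi_\tau(\mathbf{c}_i)\Vert_2$, and the triangle inequality together with the definition \eqref{eq:b_g_definition} of $B_G$ gives $\Vert G_{a_\tau'}(\mathbf{c}_i) - G_{a_\tau}(\mathbf{c}_i)\Vert_2 \leq 2B_G$. The remaining factor $\Vert \Xi_\tau(\mathbf{c}_i)\Vert_2$ is a stochastic quantity; here I would invoke the same concentration result from Lemma~1 in \cite{gornet2024adaptivemethodnonstationarystochastic} that was already used at the end of the proof of Lemma \ref{lemma:bias_error}. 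Because $\Xi_\tau(\mathbf{c}_i)$ is a vector of past rewards generated by the LGDS \eqref{eq:LGDS}--\eqref{eq:LGDS_output}, which is Schur and Gaussian-driven, that lemma gives $\Vert \Xi_\tau(\mathbf{c}_i)\Vert_2 \leq \sqrt{2\mathbb{E}[\Vert \Xi_\tau(\mathbf{c}_i)\Vert_2^2]\log(n/\delta)} = B_{\Xi}$ simultaneously for all $\tau \in [n]$ with probability at least $1-\delta$ (the $\log(n/\delta)$ factor absorbs the union bound over $t$).

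On that event, $|\phi_{A_\tau}^\tau| \leq 2B_G B_{\Xi}$ uniformly in $\tau$, so
\[
\bigl\Vert \mathbf{Z}_{\mathcal{T}_{a \mid \mathbf{c}_i}} \mathbf{P}_{\mathcal{T}_{a \mid \mathbf{c}_i}}^\top \bigr\Vert_{V_a^t(\mathbf{c}_i)^{-1}} \leq 2 B_G B_{\Xi} \sum_{\tau \in \mathcal{T}_{a \mid \mathbf{c}_i}} \bigl\Vert \Xi_\tau(\mathbf{c}_i) \bigr\Vert_{V_a^t(\mathbf{c}_i)^{-1}} .
\]
I would then close exactly as in Lemma \ref{lemma:bias_error}: Cauchy--Schwarz on the sum produces $\sqrt{N_a}$ times $\sqrt{\sum_\tau \Vert \Xi_\tau(\mathbf{c}_i)\Vert^2_{V_a^t(\mathbf{c}_i)^{-1}}}$, and the identity $\sum_\tau \Xi_\tau \Xi_\tau^\top = V_a^t(\mathbf{c}_i) - \lambda I$ from \eqref{eq:V_def} lets me rewrite the inner sum as $\mathrm{tr}(I_s - \lambda V_a^t(\mathbf{c}_i)^{-1})$, yielding \eqref{eq:P_bound}.

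The only nontrivial step is the uniform-in-$\tau$ control of $\Vert \Xi_\tau(\mathbf{c}_i)\Vert_2$: one has to argue that the cited concentration result applies to the present non-stationary setting where $\Xi_\tau(\mathbf{c}_i)$ is a sliding window of observed rewards whose distribution depends on the adaptively chosen action sequence. This is handled by noting that Assumption~\ref{assum:stability} (Schur $\Gamma$) keeps $\mathbb{E}[\Vert \Xi_\tau(\mathbf{c}_i)\Vert_2^2]$ bounded uniformly in $\tau$, so the union-bound $\log(n/\delta)$ factor is the only price paid, which is exactly what $B_{\Xi}$ in \eqref{eq:B_Xi} encodes.
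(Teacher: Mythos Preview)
Your proposal is correct and follows essentially the same approach as the paper's proof: the same triangle-inequality decomposition of $\mathbf{Z}\mathbf{P}^\top$, the same Cauchy--Schwarz bound $|\phi_{A_\tau}^\tau|\le 2B_G\|\Xi_\tau\|_2$ followed by the concentration $\|\Xi_\tau\|_2\le B_\Xi$ from Lemma~1 of \cite{gornet2024adaptivemethodnonstationarystochastic}, and the same Cauchy--Schwarz/trace closing step borrowed from Lemma~\ref{lemma:bias_error}. There are no substantive differences.
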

\begin{proof}
    We can use the same strategy for bounding this term as we did for bounding $\left\Vert \mathbf{Z}_{\mathcal{T}_{a \mid \mathbf{c}_i}} \mathbf{B}_{\mathcal{T}_{a \mid \mathbf{c}_i}}^\top\right\Vert_{V_a^t\left(\mathbf{c}_i\right)^{-1}}$ in Lemma \ref{lemma:bias_error}. First, we use the Cauchy-Schwarz inequality and triangle inequality:
    \begin{equation}
        \left\Vert \mathbf{Z}_{\mathcal{T}_{a \mid \mathbf{c}_i}} \mathbf{P}_{\mathcal{T}_{a \mid \mathbf{c}_i}}^\top\right\Vert_{V_a^t\left(\mathbf{c}_i\right)^{-1}} = \left\Vert \sum_{\tau \in \mathcal{T}_{a\mid\mathbf{c}_i}} \Xi_{\tau}\left(\mathbf{c}_i\right)\phi_{A_{\tau}}^{\tau} \right\Vert_{V_a^t\left(\mathbf{c}_i\right)^{-1}} \nonumber ,
    \end{equation}
    \begin{multline}\label{eq:phi_error_term_4}
        \Rightarrow \left\Vert \mathbf{Z}_{\mathcal{T}_{a \mid \mathbf{c}_i}} \mathbf{P}_{\mathcal{T}_{a \mid \mathbf{c}_i}}^\top\right\Vert_{V_a^t\left(\mathbf{c}_i\right)^{-1}} \leq\\
        \sum_{\tau \in \mathcal{T}_{a\mid\mathbf{c}_i}} \left\vert \phi_{A_{\tau}}^{\tau} \right\vert \left\Vert \Xi_{\tau}\left(\mathbf{c}_i\right)\right\Vert_{V_a^t\left(\mathbf{c}_i\right)^{-1}}. 
    \end{multline}
    
    For bounding $\phi_{A_{\tau}}^{\tau}$, recall that $\phi_{A_{\tau}}^{\tau}$ is defined as \eqref{eq:phi_def}. Using the Cauchy-Schwarz inequality we can bound $\phi_{A_{\tau}}^{\tau}$ as follows
    \begin{equation}
        \left\vert \phi_{A_{\tau}}^{\tau}\right\vert \leq \left\vert \left(G_{a_{\tau}}\left(\mathbf{c}_i\right) - G_{a_{\tau}'}\left(\mathbf{c}_i\right) \right)^\top \Xi_{\tau}\left(\mathbf{c}_i\right) \right\vert \nonumber, 
    \end{equation}
    \begin{equation}%\label{eq:term_10}
        \Rightarrow \left\vert \phi_{A_{\tau}}^{\tau}\right\vert \leq \left\Vert G_{a_{\tau}}\left(\mathbf{c}_i\right) - G_{a_{\tau}'}\left(\mathbf{c}_i\right) \right\Vert_2 \left\Vert \Xi_{\tau}\left(\mathbf{c}_i\right) \right\Vert_2  \label{eq:phi_bound_term_1}. 
    \end{equation}

    According to Lemma 1 of \cite{gornet2024adaptivemethodnonstationarystochastic}, we can upper-bound $\left\Vert \Xi_{\tau}\left(\mathbf{c}_i\right)\right\Vert_2$ with the following inequality which is satisfied with a probability of at least $1-\delta$:
    \begin{equation}\label{eq:xi_bound}
        \left\Vert \Xi_{\tau}\left(\mathbf{c}_i\right)\right\Vert_2 \leq B_{\Xi}, 
    \end{equation}
    where $B_{\Xi}$ is defined in \eqref{eq:B_Xi}. For $\left\Vert G_{a_{\tau}}\left(\mathbf{c}_i\right) - G_{a_{\tau}'}\left(\mathbf{c}_i\right) \right\Vert_2 $ in \eqref{eq:phi_bound_term_1}, we bound this term using the triangle inequality 
    \begin{align}
        \left\Vert G_{a_{\tau}}\left(\mathbf{c}_i\right) - G_{a_{\tau}'}\left(\mathbf{c}_i\right) \right\Vert_2 & \leq \left\Vert G_{a_{\tau}}\left(\mathbf{c}_i\right)\right\Vert_2 +\left\Vert G_{a_{\tau}'}\left(\mathbf{c}_i\right) \right\Vert_2 \nonumber \\
        & \leq 2B_G \nonumber, 
    \end{align}
    where we used the bound $\left\Vert G_{a_{\tau}}\left(\mathbf{c}_i\right)\right\Vert_2 \leq B_G$. Therefore, \eqref{eq:phi_bound_term_1} has the following bound 
    \begin{equation}
        \left\vert \phi_{A_{\tau}}^{\tau}\right\vert \leq 2B_G B_{\Xi} \nonumber. 
    \end{equation}

    For the sum $\sum_{{\tau} \in \mathcal{T}_{a\mid\mathbf{c}_i}}  \left\Vert \Xi_{\tau}\left(\mathbf{c}_i\right)\right\Vert_{V_a^t\left(\mathbf{c}_i\right)^{-1}}$, we use the same bound derived for bounding $\left\Vert \mathbf{Z}_{\mathcal{T}_{a \mid \mathbf{c}_i}} \mathbf{B}_{\mathcal{T}_{a \mid \mathbf{c}_i}}^\top\right\Vert_{V_a^t\left(\mathbf{c}_i\right)^{-1}}$. Therefore, $\left\Vert \mathbf{Z}_{\mathcal{T}_{a \mid \mathbf{c}_i}} \mathbf{P}_{\mathcal{T}_{a \mid \mathbf{c}_i}}^\top\right\Vert_{V_a^t\left(\mathbf{c}_i\right)^{-1}}$ has the following bound \eqref{eq:P_bound}. 
\end{proof}

Using Lemmas \ref{lemma:bias_error} and \ref{lemma:alternating_bound}, we can now prove Theorem \ref{theorem:model_error}. 

\begin{theorem}\label{theorem:model_error}
    Let $\hat{G}_{a}^t \left(\mathbf{c}_i\right)$ be identified based on \eqref{eq:learn_G} where the reward $\mathbf{X}_{\mathcal{T}_{a\mid\mathbf{c}_i}}$ has expression \eqref{eq:linear_model}. The following inequality is satisfied with a probability of at least $1-3\delta$ 
    \begin{equation}
        \left\Vert \hat{G}_{a}^t \left(\mathbf{c}_i\right) - G_{a} \left(\mathbf{c}_i\right) \right\Vert_{V_a^t\left(\mathbf{c}_i\right)} \leq b_{a \mid \mathbf{c}_i}\left(\delta\right) \nonumber, 
    \end{equation}
    where $b_{a\mid \mathbf{c}_i}\left(\delta\right)$ is defined as 
    \begin{multline}
        b_{a \mid \mathbf{c}_i}\left(\delta\right) \triangleq \sqrt{2B_{\varepsilon}^2\log\left(\frac{1}{\delta}\frac{ \det(V_a^t\left(\mathbf{c}_i\right))^{1/2}}{\det(\lambda I)^{1/2}}\right)}  \\ + 2B_G B_{\Xi} \sqrt{N_a \mbox{tr}\left(I_s - \lambda V_a^t\left(\mathbf{c}_i\right)^{-1}\right)} +  \lambda \sqrt{\mbox{tr}\left(V_a^t\left(\mathbf{c}_i\right)^{-1}\right)} B_G\\ +  \sqrt{2N_aB_z^2\log\left(n/\delta\right)\mbox{tr}\left(I-\lambda V_a^t\left(\mathbf{c}_i\right)^{-1}\right)}  \label{eq:probabilistic_bound},
    \end{multline}
    where $B_z$ is defined as \eqref{eq:b_z_def}, $B_G$ is defined as \eqref{eq:b_g_definition}, $B_{\Xi}$ is defined as \eqref{eq:B_Xi}, and $B_{\varepsilon}$ is defined as 
    \begin{equation}
        B_{\varepsilon} \geq \mbox{Var}\left(\gamma_A^t\right), ~ A \in [d]^h\label{eq:B_R}. 
    \end{equation}
    
    Therefore, based on above and with a probability of at least $1-3\delta$, the following inequality is true
    \begin{multline}\label{eq:prediction_bound}
        G_{a}\left(\mathbf{c}_i\right)^\top\Xi_t\left(\mathbf{c}_i\right) \leq \hat{G}_{a}^t\left(\mathbf{c}_i\right)^\top\Xi_t\left(\mathbf{c}_i\right) \\ + b_{a \mid \mathbf{c}_i}\left(\delta\right)\left\Vert \Xi_t\left(\mathbf{c}_i\right)\right\Vert_{V_a^t\left(\mathbf{c}_i\right)^{-1}}. 
    \end{multline}
\end{theorem}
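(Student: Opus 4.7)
\textbf{Proof proposal for Theorem \ref{theorem:model_error}.}

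The plan is to substitute the linear-model expression \eqref{eq:linear_model} for each entry of $\mathbf{X}_{\mathcal{T}_{a\mid\mathbf{c}_i}}$ into the closed form \eqref{eq:learn_G} for $\hat{G}_a^t(\mathbf{c}_i)$, so that the estimation error decomposes into a regularization term, a stochastic noise term, and two deterministic bias terms coming from $\phi$ and $\beta$. Concretely, writing $\mathbf{X}_{\mathcal{T}_{a\mid\mathbf{c}_i}}^\top = \mathbf{Z}_{\mathcal{T}_{a\mid\mathbf{c}_i}}^\top G_a(\mathbf{c}_i) + \mathbf{P}_{\mathcal{T}_{a\mid\mathbf{c}_i}}^\top + \mathbf{B}_{\mathcal{T}_{a\mid\mathbf{c}_i}}^\top + \boldsymbol{\Gamma}_{\mathcal{T}_{a\mid\mathbf{c}_i}}^\top$ (with the obvious definition of $\boldsymbol{\Gamma}_{\mathcal{T}_{a\mid\mathbf{c}_i}}$ as the row of residuals $\gamma_{A_\tau}^\tau$) and using $\mathbf{Z}\mathbf{Z}^\top = V_a^t(\mathbf{c}_i) - \lambda I_s$, I obtain
\begin{equation*}
\hat{G}_a^t(\mathbf{c}_i) - G_a(\mathbf{c}_i) = -\lambda V_a^t(\mathbf{c}_i)^{-1} G_a(\mathbf{c}_i) + V_a^t(\mathbf{c}_i)^{-1}\mathbf{Z}_{\mathcal{T}_{a\mid\mathbf{c}_i}}\bigl(\boldsymbol{\Gamma}_{\mathcal{T}_{a\mid\mathbf{c}_i}}^\top + \mathbf{P}_{\mathcal{T}_{a\mid\mathbf{c}_i}}^\top + \mathbf{B}_{\mathcal{T}_{a\mid\mathbf{c}_i}}^\top\bigr).
\end{equation*}
Measuring this in the $V_a^t(\mathbf{c}_i)$-norm and using the identity $\|V_a^t(\mathbf{c}_i)^{-1} y\|_{V_a^t(\mathbf{c}_i)} = \|y\|_{V_a^t(\mathbf{c}_i)^{-1}}$ together with the triangle inequality reduces the problem to bounding four scalar quantities separately.

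I would then bound each of these four quantities in turn. For the regularization term, $\|\lambda V_a^t(\mathbf{c}_i)^{-1} G_a(\mathbf{c}_i)\|_{V_a^t(\mathbf{c}_i)} = \lambda \|G_a(\mathbf{c}_i)\|_{V_a^t(\mathbf{c}_i)^{-1}}$, which I upper-bound by $\lambda B_G \sqrt{\mathrm{tr}(V_a^t(\mathbf{c}_i)^{-1})}$ using the spectral/trace inequality $x^\top M x \le \|x\|_2^2\,\mathrm{tr}(M)$ for $M\succeq 0$ and the assumption $\|G_a(\mathbf{c}_i)\|_2 \le B_G$. The $\phi$-bias term $\|\mathbf{Z}_{\mathcal{T}_{a\mid\mathbf{c}_i}}\mathbf{P}_{\mathcal{T}_{a\mid\mathbf{c}_i}}^\top\|_{V_a^t(\mathbf{c}_i)^{-1}}$ is controlled directly by Lemma \ref{lemma:alternating_bound}, producing the term $2B_G B_\Xi\sqrt{N_a\,\mathrm{tr}(I_s - \lambda V_a^t(\mathbf{c}_i)^{-1})}$, which holds on an event of probability at least $1-\delta$. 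The $\beta$-bias term $\|\mathbf{Z}_{\mathcal{T}_{a\mid\mathbf{c}_i}}\mathbf{B}_{\mathcal{T}_{a\mid\mathbf{c}_i}}^\top\|_{V_a^t(\mathbf{c}_i)^{-1}}$ is handled by Lemma \ref{lemma:bias_error} on another event of probability at least $1-\delta$, yielding $\sqrt{2 N_a B_z^2 \log(n/\delta)\,\mathrm{tr}(I_s - \lambda V_a^t(\mathbf{c}_i)^{-1})}$.

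The stochastic term $\|\mathbf{Z}_{\mathcal{T}_{a\mid\mathbf{c}_i}}\boldsymbol{\Gamma}_{\mathcal{T}_{a\mid\mathbf{c}_i}}^\top\|_{V_a^t(\mathbf{c}_i)^{-1}}$ is where the main obstacle lies, and this is what I expect to be the hard step. By Theorem \ref{theorem:code_seq}, each residual $\gamma_{A_\tau}^\tau$ is a conditionally zero-mean Gaussian with variance bounded by $B_\varepsilon$, so the sequence $\{\gamma_{A_\tau}^\tau\}$ is a conditionally sub-Gaussian martingale difference with respect to $\mathcal{F}_\tau$. Since the regressors $\Xi_\tau(\mathbf{c}_i)$ that make up $\mathbf{Z}_{\mathcal{T}_{a\mid\mathbf{c}_i}}$ are $\mathcal{F}_{\tau-1}$-measurable, I can invoke the self-normalized martingale concentration inequality of Abbasi-Yadkori et al.\ to obtain, with probability at least $1-\delta$, the bound $\sqrt{2 B_\varepsilon^2 \log\bigl(\tfrac{1}{\delta}\tfrac{\det(V_a^t(\mathbf{c}_i))^{1/2}}{\det(\lambda I)^{1/2}}\bigr)}$. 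Care is needed to ensure the conditional sub-Gaussian property holds uniformly in the history-dependent gain $L_{A_\tau}$ from the modified Kalman filter, which is the principal subtlety; the boundedness of $L_{A_\tau}$ established in Theorem \ref{theorem:code_seq} is what makes this go through.

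Summing the four bounds yields $b_{a\mid\mathbf{c}_i}(\delta)$ in \eqref{eq:probabilistic_bound}; a union bound over the three high-probability events (noise, $\phi$-bias, $\beta$-bias) gives the claimed probability $1-3\delta$. Finally, inequality \eqref{eq:prediction_bound} follows immediately by applying Cauchy--Schwarz in the $V_a^t(\mathbf{c}_i)$-inner-product form, namely $|(G_a(\mathbf{c}_i) - \hat{G}_a^t(\mathbf{c}_i))^\top \Xi_t(\mathbf{c}_i)| \le \|G_a(\mathbf{c}_i) - \hat{G}_a^t(\mathbf{c}_i)\|_{V_a^t(\mathbf{c}_i)}\|\Xi_t(\mathbf{c}_i)\|_{V_a^t(\mathbf{c}_i)^{-1}}$, and substituting the just-proved bound on the first factor.
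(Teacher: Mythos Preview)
Your proposal is correct and follows essentially the same route as the paper: the same four-term decomposition of $\hat{G}_a^t(\mathbf{c}_i)-G_a(\mathbf{c}_i)$, the same invocation of Lemmas \ref{lemma:bias_error} and \ref{lemma:alternating_bound} for the two bias terms, the self-normalized martingale bound (the paper cites it as Theorem 20.4 in Lattimore--Szepesv\'ari, which is the Abbasi-Yadkori et al.\ inequality you name) for the stochastic term, and the same trace bound for the regularization term, followed by a union bound over the three probabilistic events.

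The only substantive difference is in deriving \eqref{eq:prediction_bound}. You use the weighted Cauchy--Schwarz inequality $|(G-\hat{G})^\top\Xi|\le\|G-\hat{G}\|_{V}\|\Xi\|_{V^{-1}}$ directly, whereas the paper sets up the constrained optimization problem $\max_{\tilde G}\tilde G^\top\Xi$ subject to $\|\hat G-\tilde G\|_{V}^2\le b^2$ and solves it via a Lagrangian, arriving at the same expression. Your argument is shorter and more standard; the paper's Lagrangian computation is just an explicit verification of the same Cauchy--Schwarz bound and buys nothing extra here.
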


\begin{proof}
    Let there be the following linear model 
    \begin{equation}
        \mathbf{X}_{\mathcal{T}_{a \mid \mathbf{c}_i}} = G_{a}\left(\mathbf{c}_i\right)^\top\mathbf{Z}_{\mathcal{T}_{a \mid \mathbf{c}_i}} + \mathbf{B}_{\mathcal{T}_{a \mid \mathbf{c}_i}} + \mathbf{P}_{\mathcal{T}_{a \mid \mathbf{c}_i}} + \mathbf{E}_{\mathcal{T}_{a \mid \mathbf{c}_i}}\nonumber, 
    \end{equation}
    where $\mathbf{B}_{\mathcal{T}_{a \mid \mathbf{c}_i}}$ is defined as \eqref{eq:B_def}, $\mathbf{P}_{\mathcal{T}_{a \mid \mathbf{c}_i}}$ is defined as \eqref{eq:P_def}, and $\mathbf{E}_{\mathcal{T}_{a \mid \mathbf{c}_i}}$ is defined as
    \begin{equation}
        \mathbf{E}_{\mathcal{T}_{a \mid \mathbf{c}_i}} \triangleq \begin{pmatrix}
            \gamma_{A_{t_1}}^{t_1} & \dots & \gamma_{A_{t_{N_a}}}^{t_{N_a}}
        \end{pmatrix} \in \mathbb{R}^{1 \times N_a} \label{eq:E_def}.
    \end{equation}

    Multiplying by the pseudo-inverse $\mathbf{Z}_{\mathcal{T}_{a \mid \mathbf{c}_i}}^\top V_a^t\left(\mathbf{c}_i\right)^{-1}$ gives the following expression of $\hat{G}_{a}^t\left(\mathbf{c}_i\right)$
    \begin{multline}\label{eq:norm_term_1}
        \hat{G}_{a}^t\left(\mathbf{c}_i\right)^\top = G_{a}\left(\mathbf{c}_i\right)^\top\mathbf{Z}_{\mathcal{T}_{a \mid \mathbf{c}_i}}\mathbf{Z}_{\mathcal{T}_{a \mid \mathbf{c}_i}}^\top V_a^t\left(\mathbf{c}_i\right)^{-1} \\ + \mathbf{B}_{\mathcal{T}_{a \mid \mathbf{c}_i}}\mathbf{Z}_{\mathcal{T}_{a \mid \mathbf{c}_i}}^\top V_a^t\left(\mathbf{c}_i\right)^{-1} 
        + \mathbf{P}_{\mathcal{T}_{a \mid \mathbf{c}_i}}\mathbf{Z}_{\mathcal{T}_{a \mid \mathbf{c}_i}}^\top V_a^t\left(\mathbf{c}_i\right)^{-1}
         \\+ \mathbf{E}_{\mathcal{T}_{a \mid \mathbf{c}_i}}\mathbf{Z}_{\mathcal{T}_{a \mid \mathbf{c}_i}}^\top V_a^t\left(\mathbf{c}_i\right)^{-1}. 
    \end{multline}

    If we add the term $\lambda G_{a}\left(\mathbf{c}_i\right)^\top V_a^t\left(\mathbf{c}_i\right)^{-1} - \lambda G_{a}\left(\mathbf{c}_i\right)^\top V_a^t\left(\mathbf{c}_i\right)^{-1}$ we can simplify the terms on the right of \eqref{eq:norm_term_1} using the definition of $V_a^t\left(\mathbf{c}_i\right)$ in \eqref{eq:V_def}:
    \begin{multline}
        \hat{G}_{a}^t\left(\mathbf{c}_i\right)^\top = G_{a}\left(\mathbf{c}_i\right)^\top\left(\lambda I_s + \mathbf{Z}_{\mathcal{T}_{a \mid \mathbf{c}_i}}\mathbf{Z}_{\mathcal{T}_{a \mid \mathbf{c}_i}}^\top\right) V_a^t\left(\mathbf{c}_i\right)^{-1} \\ + \mathbf{B}_{\mathcal{T}_{a \mid \mathbf{c}_i}}\mathbf{Z}_{\mathcal{T}_{a \mid \mathbf{c}_i}}^\top V_a^t\left(\mathbf{c}_i\right)^{-1} + 
        \mathbf{P}_{\mathcal{T}_{a \mid \mathbf{c}_i}}\mathbf{Z}_{\mathcal{T}_{a \mid \mathbf{c}_i}}^\top V_a^t\left(\mathbf{c}_i\right)^{-1}
         \\+ \mathbf{E}_{\mathcal{T}_{a \mid \mathbf{c}_i}}\mathbf{Z}_{\mathcal{T}_{a \mid \mathbf{c}_i}}^\top V_a^t\left(\mathbf{c}_i\right)^{-1} - \lambda G_{a}\left(\mathbf{c}_i\right)^\top V_a^t\left(\mathbf{c}_i\right)^{-1}  \nonumber,
    \end{multline}
    \begin{multline}\label{eq:norm_term_2}
        \Rightarrow \hat{G}_{a}^t\left(\mathbf{c}_i\right)^\top - G_{a}\left(\mathbf{c}_i\right)^\top =  \mathbf{B}_{\mathcal{T}_{a \mid \mathbf{c}_i}}\mathbf{Z}_{\mathcal{T}_{a \mid \mathbf{c}_i}}^\top V_a^t\left(\mathbf{c}_i\right)^{-1} \\ + \mathbf{P}_{\mathcal{T}_{a \mid \mathbf{c}_i}}\mathbf{Z}_{\mathcal{T}_{a \mid \mathbf{c}_i}}^\top V_a^t\left(\mathbf{c}_i\right)^{-1} + \mathbf{E}_{\mathcal{T}_{a \mid \mathbf{c}_i}}\mathbf{Z}_{\mathcal{T}_{a \mid \mathbf{c}_i}}^\top V_a^t\left(\mathbf{c}_i\right)^{-1} \\ - \lambda G_{a}\left(\mathbf{c}_i\right)^\top V_a^t\left(\mathbf{c}_i\right)^{-1}.  
    \end{multline}

    Taking the weighted norm with weights $V_a^t\left(\mathbf{c}_i\right)$ and the triangle inequality provides the following inequality for \eqref{eq:norm_term_2}: 
    \begin{multline}
        \left\Vert \hat{G}_{a}^t\left(\mathbf{c}_i\right) - G_{a}\left(\mathbf{c}_i\right) \right\Vert_{V_a^t\left(\mathbf{c}_i\right)} = \\
        \left\Vert \mathbf{Z}_{\mathcal{T}_{a \mid \mathbf{c}_i}}\left( \mathbf{B}_{\mathcal{T}_{a \mid \mathbf{c}_i}} + \mathbf{P}_{\mathcal{T}_{a \mid \mathbf{c}_i}} + \mathbf{E}_{\mathcal{T}_{a \mid \mathbf{c}_i}}\right)^\top + \lambda G_{a}\left(\mathbf{c}_i\right) \right\Vert_{V_a^t\left(\mathbf{c}_i\right)^{-1}} \nonumber,
    \end{multline}
    \begin{multline}\label{eq:norm_term_3}
        \Rightarrow \left\Vert \hat{G}_{a}^t\left(\mathbf{c}_i\right) - G_{a}\left(\mathbf{c}_i\right) \right\Vert_{V_a^t\left(\mathbf{c}_i\right)} \leq  \left\Vert \mathbf{Z}_{\mathcal{T}_{a \mid \mathbf{c}_i}} \mathbf{B}_{\mathcal{T}_{a \mid \mathbf{c}_i}}^\top\right\Vert_{V_a^t\left(\mathbf{c}_i\right)^{-1}} \\ + \left\Vert \mathbf{Z}_{\mathcal{T}_{a \mid \mathbf{c}_i}}\mathbf{P}_{\mathcal{T}_{a \mid \mathbf{c}_i}}^\top\right\Vert_{V_a^t\left(\mathbf{c}_i\right)^{-1}} + \left\Vert \mathbf{Z}_{\mathcal{T}_{a \mid \mathbf{c}_i}}\mathbf{E}_{\mathcal{T}_{a \mid \mathbf{c}_i}}^\top\right\Vert_{V_a^t\left(\mathbf{c}_i\right)^{-1}} \\ + \left\Vert\lambda G_{a}\left(\mathbf{c}_i\right) \right\Vert_{V_a^t\left(\mathbf{c}_i\right)^{-1}}. 
    \end{multline}

    For bounding the terms in \eqref{eq:norm_term_3}, we use the following steps:

    \textbf{Bounding} $\left\Vert \mathbf{Z}_{\mathcal{T}_{a \mid \mathbf{c}_i}} \mathbf{B}_{\mathcal{T}_{a \mid \mathbf{c}_i}}^\top\right\Vert_{V_a^t\left(\mathbf{c}_i\right)^{-1}}$: Using Lemma \ref{lemma:bias_error}, the inequality \eqref{eq:model_error_term_1} is satisfied with a probability of at least $1-\delta$. 

    \textbf{Bounding} $\left\Vert \mathbf{Z}_{\mathcal{T}_{a \mid \mathbf{c}_i}} \mathbf{P}_{\mathcal{T}_{a \mid \mathbf{c}_i}}^\top\right\Vert_{V_a^t\left(\mathbf{c}_i\right)^{-1}}$: Using Lemma \ref{lemma:alternating_bound}, the inequality \eqref{eq:P_bound} is satisfied with a probability of at least $1-\delta$. 
    
    \textbf{Bounding} $\left\Vert \mathbf{Z}_{\mathcal{T}_{a \mid \mathbf{c}_i}} \mathbf{E}_{\mathcal{T}_{a \mid \mathbf{c}_i}}^\top\right\Vert_{V_a^t\left(\mathbf{c}_i\right)^{-1}}$: Using Theorem 20.4 in \cite{lattimore2020bandit}, the following inequality is satisfied with a probability of at least $1-\delta$
    \begin{multline}\label{eq:model_error_term_2}
        \left\Vert \mathbf{Z}_{\mathcal{T}_{a \mid \mathbf{c}_i}} \mathbf{E}_{\mathcal{T}_{a \mid \mathbf{c}_i}}^\top\right\Vert_{V_a^t\left(\mathbf{c}_i\right)^{-1}} \leq \\ \sqrt{2B_{\varepsilon}^2 \log\left(\frac{1}{\delta}\frac{\det\left(V_a^t\left(\mathbf{c}_i\right)\right)^{1/2}}{\lambda^{s/2}}\right)}.
    \end{multline}

    \textbf{Bounding} $\left\Vert\lambda G_{a}\left(\mathbf{c}_i\right) \right\Vert_{V_a^t\left(\mathbf{c}_i\right)^{-1}}$: Using the definition of $B_G$ in \eqref{eq:b_g_definition}, the following bound is satisfied 
    \begin{equation}\label{eq:model_error_term_3}
        \left\Vert\lambda G_{a}\left(\mathbf{c}_i\right) \right\Vert_{V_a^t\left(\mathbf{c}_i\right)^{-1}} \leq \lambda B_G \sqrt{\mbox{tr}\left(V_a^t\left(\mathbf{c}_i\right)^{-1}\right)}.
    \end{equation}

    Combining terms \eqref{eq:model_error_term_1}, \eqref{eq:P_bound}, \eqref{eq:model_error_term_2}, and \eqref{eq:model_error_term_3} gives the following bound for $\left\Vert \hat{G}_{a}^t\left(\mathbf{c}_i\right) - G_{a}\left(\mathbf{c}_i\right) \right\Vert_{V_a^t\left(\mathbf{c}_i\right)}$ which is satisfied with a probability of at least $1-3\delta$
    \begin{multline}
        \left\Vert \hat{G}_{a}^t\left(\mathbf{c}_i\right) - G_{a}\left(\mathbf{c}_i\right) \right\Vert_{V_a^t\left(\mathbf{c}_i\right)} \leq \lambda B_G \sqrt{\mbox{tr}\left(V_a^t\left(\mathbf{c}_i\right)^{-1}\right)} \\
        + \sqrt{2N_a B_z^2 \log\left(n/\delta\right)\mbox{tr}\left(I_s - \lambda V_a^t\left(\mathbf{c}_i\right)^{-1} \right)}\\+ 2B_G B_{\Xi} \sqrt{N_a \mbox{tr}\left(I_s - \lambda V_a^t\left(\mathbf{c}_i\right)^{-1}\right)}\\ +
        \sqrt{2B_{\varepsilon}^2 \log\left(\frac{1}{\delta}\frac{\det\left(V_a^t\left(\mathbf{c}_i\right)\right)^{1/2}}{\lambda^{s/2}}\right)} \nonumber. 
    \end{multline}    

    Finally, we can bound the prediction error
    \begin{equation}
        \left\vert \hat{G}_{a}^t\left(\mathbf{c}_i\right)^\top\Xi_t\left(\mathbf{c}_i\right) - G_{a}\left(\mathbf{c}_i\right)^\top\Xi_t\left(\mathbf{c}_i\right)\right\vert \nonumber.
    \end{equation}
    
    \textbf{Bounding} $\left\vert \hat{G}_{a}^t\left(\mathbf{c}_i\right)^\top\Xi_t\left(\mathbf{c}_i\right) - G_{a}\left(\mathbf{c}_i\right)^\top\Xi_t\left(\mathbf{c}_i\right)\right\vert$: By considering the following optimization problem 
    \begin{equation}
        \begin{array}{cc}
            \max_{\tilde{G}} & \tilde{G}^\top\Xi_t\left(\mathbf{c}_i\right) \\
            \mbox{s.t.} & \left\Vert \hat{G}_{a}^t\left(\mathbf{c}_i\right) - \tilde{G}\right\Vert_{V_a^t\left(\mathbf{c}_i\right)}^2 \leq b_{a \mid \mathbf{c}_i}\left(\delta\right)^2
        \end{array} \label{eq:prediction_optimization_problem}. 
    \end{equation}

    We can solve \eqref{eq:prediction_optimization_problem} using the Lagrangian 
    \begin{multline}\label{eq:lagrangian}
        L\left(\tilde{G},\epsilon\right) = \tilde{G}^\top\Xi_t\left(\mathbf{c}_i\right) \\ - \epsilon\left(\left\Vert \hat{G}_{a}^t\left(\mathbf{c}_i\right) - \tilde{G}\right\Vert_{V_a^t\left(\mathbf{c}_i\right)}^2 - b_{a \mid \mathbf{c}_i}\left(\delta\right)^2\right).
    \end{multline}

    The Lagrangian \eqref{eq:lagrangian} has the following partial derivatives 
    \begin{align}
        \frac{\partial L}{\partial \tilde{G}} & = \Xi_t\left(\mathbf{c}_i\right)^\top + 2\epsilon\hat{G}_{a}^t\left(\mathbf{c}_i\right)^\top V_a^t\left(\mathbf{c}_i\right) - 2\epsilon\tilde{G}^\top V_a^t\left(\mathbf{c}_i\right) \label{eq:partial_derivative_tilde_G} \\
        \frac{\partial L}{\partial \epsilon} & = \left\Vert \hat{G}_{a}^t\left(\mathbf{c}_i\right) - \tilde{G}\right\Vert_{V_a^t\left(\mathbf{c}_i\right)}^2 - b_{a \mid \mathbf{c}_i}\left(\delta\right)^2\label{eq:partial_derivative_epsilon}.
    \end{align}

    If we set the partial derivatives \eqref{eq:partial_derivative_tilde_G} and \eqref{eq:partial_derivative_epsilon} to $0$ and solve, we have the following solutions for $\tilde{G}$ and $\epsilon$:
    \begin{align}
        \tilde{G}^\top & = \hat{G}_{a}^t\left(\mathbf{c}_i\right)^\top + \frac{\Xi_t\left(\mathbf{c}_i\right)^\top  V_a^t\left(\mathbf{c}_i\right)^{-1}}{2\epsilon} \label{eq:partial_derivative_tilde_G_solution} \\
        \epsilon & = \frac{\sqrt{\Xi_t\left(\mathbf{c}_i\right)^\top  V_a^t\left(\mathbf{c}_i\right)^{-1} \Xi_t\left(\mathbf{c}_i\right)}}{2b_{a \mid \mathbf{c}_i}\left(\delta\right)}\label{eq:partial_derivative_epsilon_solution}.
    \end{align}

    Therefore, using \eqref{eq:partial_derivative_tilde_G_solution} and \eqref{eq:partial_derivative_epsilon_solution} provides the solution to the optimization problem 
    \begin{equation}
        \hat{G}_{a}^t\left(\mathbf{c}_i\right)^\top\Xi_t\left(\mathbf{c}_i\right) + b_{a \mid \mathbf{c}_i}\left(\delta\right)\left\Vert \Xi_t\left(\mathbf{c}_i\right)\right\Vert_{V_a^t\left(\mathbf{c}_i\right)^{-1}} \nonumber, 
    \end{equation}
    implying that \eqref{eq:prediction_bound} is satisfied with a probability of at least $1-3\delta$.

\end{proof}

\subsection{Proof of Theorem \ref{theorem:regret}}\label{appendix:regret}

In the main text, we did not add the bound on regret \eqref{eq:regret_def}; only the regret rate was added. The rationale for this omission is that regret rate provides better intuition on how HyperController performs. In the theorem below, we also add the regret bound. 

\begin{theorem}\label{theorem:extended_regret}
    Let the reward $X_t$ be the output of the LGDS \eqref{eq:LGDS} based on equation \eqref{eq:LGDS_output}. Let $G_{a}\left(\mathbf{c}_i\right)$ follow the linear model posed in \eqref{eq:linear_model} and be identified as $\hat{G}_{a}^t\left(\mathbf{c}_i\right)$ in \eqref{eq:learn_G}. If actions are selected based on optimization problem \eqref{eq:action_selection}, then regret $R_n$ defined in \eqref{eq:regret_def} satisfies the inequality with a probability of at least $1-13\delta$
    \begin{multline}\label{eq:regret_upper_bound}
        R_n \leq n \tilde{\Delta} + n \sqrt{4L\Delta\log\left(n/\delta\right)} \\ + hd^{s+1} \overline{B}_{\sigma} \max\left\{1,\frac{B_{\Xi}}{\sqrt{\lambda}}\right\} \sqrt{2sn\log\left(\frac{s\lambda + n B_{\Xi}}{s\lambda}\right)} \\
        h d^s  n \frac{\overline{B}_{\sigma} B_{\Xi}}{\sqrt{\lambda}}
        + 2n \sqrt{2B_z^2\log\left(n/\delta\right)} \\+ 4n B_G B_{\Xi} + \sqrt{2nB_{\varepsilon}^2\log\left(1/\delta\right)} ,  
    \end{multline}
    where $\tilde{\Delta} \geq 0$ is an arbitrary constant, $L$ is a Lipshitz constant of the function $\Psi_t$ defined in \eqref{eq:psi_covariance}, $\Delta \geq 0$ is the maximum distance between two neighboring values $\zeta_{a}, \zeta_{a'}$ in $\zeta_{A[i]}[i]$ for all $i \in [h]$. The bounds $B_z$ is defined as \eqref{eq:b_z_def}, $B_G$ is defined as \eqref{eq:b_g_definition}, $B_{\Xi}$ is defined as \eqref{eq:B_Xi}, $B_{\varepsilon}$ is defined as \eqref{eq:B_R}, and $\overline{B}_{\sigma}$ is defined as 
    \begin{multline}
        \overline{B}_{\sigma} \triangleq
        \sqrt{2B_{\varepsilon}^2\log\left(\frac{1}{\delta}\left(\frac{s\lambda + n B_{\Xi}}{s\lambda}\right)^{s/2}\right)} \\ + \sqrt{2s n B_z^2\log\left(n/\delta\right)} + 2B_G  B_{\Xi} \sqrt{sn} + \frac{\lambda B_G}{\sqrt{s}}
         \label{eq:B_sigma_def}.
    \end{multline}
    
    Therefore, regret has the asymptotic rate \eqref{eq:regret_rate}. 
\end{theorem}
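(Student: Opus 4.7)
\subsection*{Proof proposal for Theorem \ref{theorem:regret} (Theorem \ref{theorem:extended_regret})}

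\textbf{Overall strategy.} The plan is to decompose the instantaneous regret $\psi(\zeta^*(t);\theta(t)) - \psi(\zeta_{A_t};\theta(t))$ into four additive pieces by inserting and subtracting intermediate quantities, then to bound the cumulative sum of each piece separately. Specifically, between the true optimum $\zeta^*(t)$ and the chosen configuration $\zeta_{A_t}$, I would insert (i) the best \emph{coordinatewise} maximizer over the continuous space $\mathcal{Y}$ holding the other hyperparameters at their currently selected values, (ii) the closest discretized point in $\mathcal{D}_i$ to that maximizer, and (iii) the predicted value $\hat{G}_{a}^t(\mathbf{c}_i)^\top \Xi_t(\mathbf{c}_i)$ produced by Algorithm \ref{alg:HyperController}. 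The first gap accounts for the linear $n\tilde{\Delta}$ term (using coordinatewise vs.\ joint optimization), the second is controlled by Lipschitz continuity of $\Psi_t$ from Theorem \ref{theorem:continuity} together with the spacing $\Delta$, which yields $n\sqrt{4L\Delta\log(n/\delta)}$ after a concentration argument on $f_t$ evaluated at neighboring points, and the remaining pieces are reduced to bounds on prediction error and on the auxiliary quantities $\phi^t_{A'}$, $\beta^t_{A'}$, $\gamma^t_{A'}$ appearing in the linear model \eqref{eq:linear_model}.

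\textbf{Controlling the prediction-error contribution.} For the gap between $G_a(\mathbf{c}_i)^\top \Xi_t(\mathbf{c}_i)$ and $\hat{G}_{a}^t(\mathbf{c}_i)^\top \Xi_t(\mathbf{c}_i)$, I would invoke Theorem \ref{theorem:model_error} (inequality \eqref{eq:prediction_bound}) to upper bound each term by $b_{a\mid \mathbf{c}_i}(\delta)\,\|\Xi_t(\mathbf{c}_i)\|_{V_a^t(\mathbf{c}_i)^{-1}}$. Because action selection \eqref{eq:action_selection} is greedy with respect to the learned predictor, on the event that the confidence inequality holds the predicted value of the algorithm's chosen action dominates the predicted value of the true local optimum, so the optimism argument pushes the remaining regret onto $b_{a\mid\mathbf{c}_i}(\delta)\|\Xi_t(\mathbf{c}_i)\|_{V_a^t(\mathbf{c}_i)^{-1}}$. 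Summing over $t$, I would apply Cauchy--Schwarz and the elliptical potential lemma (Lemma 19.4 of \cite{lattimore2020bandit}) to each pair $(a,\mathbf{c}_i)$, yielding $\sum_t \|\Xi_t(\mathbf{c}_i)\|_{V_a^t(\mathbf{c}_i)^{-1}} \le \sqrt{2sn\log((s\lambda + nB_\Xi)/(s\lambda))}$. Since there are $d$ actions, $d^s$ history tuples and $h$ coordinates, a union bound produces the $hd^{s+1}$ prefactor. The confidence-radius $b_{a\mid \mathbf{c}_i}(\delta)$ is upper bounded by $\overline{B}_\sigma$ defined in \eqref{eq:B_sigma_def}.

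\textbf{Bounding the bias and noise cumulants.} The bias $\phi^t_{A'}$ from the alternating-directions approximation and the bias $\beta^t_{A'}$ from the modified Kalman filter enter the accumulated regret directly. I would reuse the arguments of Lemmas \ref{lemma:alternating_bound} and \ref{lemma:bias_error}: $|\phi^t_{A'}| \le 2B_G B_\Xi$ and $|\beta^t_{A'}| \le \sqrt{2B_z^2 \log(n/\delta)}$ each with probability $1-\delta$, summed over $n$ rounds they contribute the $4nB_G B_\Xi$ and $2n\sqrt{2B_z^2\log(n/\delta)}$ terms. The Gaussian innovation $\gamma^t_{A'}$ is a martingale difference sequence whose cumulative sum is bounded via Azuma/self-normalized concentration (Theorem 20.4 of \cite{lattimore2020bandit}) by $\sqrt{2nB_\varepsilon^2 \log(1/\delta)}$. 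A final union bound across the roughly $13$ high-probability events used (one for each Lipschitz/concentration step in Lemmas \ref{lemma:bias_error}--\ref{lemma:alternating_bound}, Theorem \ref{theorem:model_error}, the elliptical potential, the discretization bound, and the noise bound) collapses the failure probability to $13\delta$ and combines the pieces into \eqref{eq:regret_upper_bound}. Converting constants and logs into $\mathcal{O}(\cdot)$ notation yields \eqref{eq:regret_rate}.

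\textbf{Main obstacle.} The hardest step will be handling the interaction between the alternating-directions bias $\phi^t_{A'}$ and the optimism argument used for action selection. Ordinary linear-bandit analyses (LinUCB/OFUL) assume a purely stochastic residual with zero conditional mean, but here the residual $\phi^t_{A'} + \beta^t_{A'} + \gamma^t_{A'}$ has a structural component coupling rounds through the sequence of past actions $\mathbf{c}_i$, so the optimism inequality must be modified: I must show that on the high-probability event of Theorem \ref{theorem:model_error}, the algorithm still selects an action whose true predicted value is within $2b_{a\mid\mathbf{c}_i}(\delta)\|\Xi_t(\mathbf{c}_i)\|_{V_a^t(\mathbf{c}_i)^{-1}}$ of the best local action, and then absorb the additional contribution of $\phi^t_{A'}$ and $\beta^t_{A'}$ into the constants $\overline{B}_\sigma$ through Lemmas \ref{lemma:bias_error} and \ref{lemma:alternating_bound}. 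Carefully tracking the bookkeeping of these terms (and of the $d^s$ distinct predictors) to obtain the stated prefactor $hd^{s+1}$ rather than something larger is the main technical burden.
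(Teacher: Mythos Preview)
Your proposal is essentially correct and follows the same route as the paper: decompose the instantaneous regret into a coordinatewise-vs-joint gap ($\tilde{\Delta}$), a discretization gap controlled by Lipschitz continuity of $\Psi_t$, a prediction-error gap handled by Theorem~\ref{theorem:model_error} plus the greedy selection rule \eqref{eq:action_selection}, and the residual $\phi+\beta+\gamma$ terms bounded exactly as you describe.

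One point deserves explicit care. In the optimism step you end up with two confidence-width terms, one for the played action $a_t$ and one for the local optimum $\tilde{a}_t^*$. The elliptical potential lemma applies cleanly to $\sum_{t\in\mathcal{T}_{a_t\mid\mathbf{c}_i}}\Vert\Xi_t(\mathbf{c}_i)\Vert_{V_{a_t}^t(\mathbf{c}_i)^{-1}}$ because $V_{a_t}^t(\mathbf{c}_i)$ is updated at that round; this is what gives the $hd^{s+1}\overline{B}_\sigma\max\{1,B_\Xi/\sqrt{\lambda}\}\sqrt{2sn\log(\cdot)}$ term. But for the optimal-action term $\Vert\Xi_t(\mathbf{c}_i)\Vert_{V_{\tilde{a}_t^*}^t(\mathbf{c}_i)^{-1}}$, the matrix $V_{\tilde{a}_t^*}^t(\mathbf{c}_i)$ need not be updated at round $t$ (since $\tilde{a}_t^*$ may differ from $a_t$), so the potential lemma does not directly apply. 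The paper handles this with the crude bound $\Vert\Xi_t(\mathbf{c}_i)\Vert_{V^{-1}}\leq B_\Xi/\sqrt{\lambda}$, producing the separate $hd^s n\,\overline{B}_\sigma B_\Xi/\sqrt{\lambda}$ term in \eqref{eq:regret_upper_bound}. Your write-up lumps both under ``apply the elliptical potential lemma to each pair $(a,\mathbf{c}_i)$,'' which would miss this term; make the asymmetry explicit when you carry out the bookkeeping.
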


\begin{proof}
    The instantaneous regret for one round $t$ is $r_t \triangleq X_t^* - X_t$, where $X_t^*$ is the highest possible reward that can be sampled at round $t$ from choosing $A_t^* \in [d]^h$ and $X_t$ is the reward sampled if action $A_t \in [d]^h$ is chosen based on optimization problem \eqref{eq:action_selection}. The instantaneous regret has the following expression 
    \begin{align}
        r_t & = X_t^* - X_t \nonumber \\
        & = f_t\left(\zeta^*\right) - f_t\left(\zeta_{A_t}\right) \nonumber,
        \label{eq:continuous_inequality}
    \end{align}

    Since we discretized $f_t$, we want to find the value $\zeta_{A_t^*}$ that is the closest to $\zeta^*$. We use the following inequalities
    \begin{multline}
        \left\vert f_t\left(\zeta^*\right) - f_t\left(\zeta_{A_t^*}\right) \right\vert \leq \\ \sqrt{2\mathbb{E}\left[\left(f_t\left(\zeta^*\right) - f_t\left(\zeta_{A_t^*}\right)\right)^2\right]\log\left(1/\delta\right)} \nonumber, 
    \end{multline}
    \begin{align}
        & \mathbb{E}\left[\left(f_t\left(\zeta^*\right) - f_t\left(\zeta_{A_t^*}\right)\right)^2\right]\nonumber\\
        & ~~~~~~ = \mathbb{E}\left[f_t\left(\zeta^*\right)^2 - 2f_t\left(\zeta^*\right)f_t\left(\zeta_{A_t^*}\right) + f_t\left(\zeta_{A_t^*}\right)^2\right] \nonumber \\
        & ~~~~~~ \leq \left(\Psi_t\left(\zeta^*,\zeta^*\right) - \Psi_t\left(\zeta^*,\zeta_{A_t^*}\right)\right) \nonumber \\
        & ~~~~~~~~~~~~~~~~~~ + \left(\Psi_t\left(\zeta_{A_t^*},\zeta_{A_t^*}\right)- \Psi_t\left(\zeta_{A_t^*},\zeta^*\right)\right) \nonumber \\
        & ~~~~~~ \leq 2L\left\Vert \zeta^* - \zeta_{A_t^*}\right\Vert_2 \nonumber \\
        & ~~~~~~ \leq 2L\Delta \nonumber,
    \end{align}
    
    % \begin{multline}
    %     \left\vert f_t\left(\zeta^*\right) - f_t\left(\zeta_{A_t^*}\right) \right\vert \leq \\  \nonumber, 
    % \end{multline}
    % \begin{multline}
    %     \mathbb{E}\left[\left(f_t\left(\zeta^*\right) - f_t\left(\zeta_{A_t^*}\right)\right)^2\right] = \\
    %     ,
    % \end{multline}
    % \begin{multline}
    %     \mathbb{E}\left[\left(f_t\left(\zeta^*\right) - f_t\left(\zeta_{A_t^*}\right)\right)^2\right] =
    %      \nonumber,
    % \end{multline}
    % \begin{equation}
    %     \mathbb{E}\left[\left(f_t\left(\zeta^*\right) - f_t\left(\zeta_{A_t^*}\right)\right)^2\right] \leq
        
    % \end{equation}
    % \begin{equation}
    %     \mathbb{E}\left[\left(f_t\left(\zeta^*\right) - f_t\left(\zeta_{A_t^*}\right)\right)^2\right] \leq
        
    % \end{equation}
    \begin{equation}
        \Rightarrow \left\vert f_t\left(\zeta^*\right) - f_t\left(\zeta_{A_t^*}\right) \right\vert \leq \sqrt{4L\Delta\log\left(n/\delta\right)}  \nonumber,
    \end{equation}
    which is satisfied with a probability of at least $1-\delta$. Therefore, instantaneous regret can be bounded as
    \begin{equation}
        r_t \leq \sqrt{4L\Delta\log\left(n/\delta\right)}  + \left\langle \mathbf{e}_{A_t^*}, z_t \right\rangle - \left\langle \mathbf{e}_{A_t}, z_t \right\rangle\nonumber. 
    \end{equation}

    For bounding the difference $\left\langle \mathbf{e}_{A_t^*}, z_t \right\rangle  - \left\langle \mathbf{e}_{A_t}, z_t \right\rangle$, recall that HyperController selects $\zeta_{A[i]}[i]=\zeta_a$, $i \in [h]$ without regard to the values $\zeta_{A[j]}[j]=\zeta_{a'}$, $j \neq i$. Therefore, we are not guaranteed to converge to the optimal action. However, since we cannot guarantee that $f_t\left(\zeta\right)$ is convex/concave with respect to $\zeta$, then this implementation is worthwhile. Assume that $\mathbf{e}_{\tilde{A}_t^*}$ is the corresponding action where $\psi\left(\dots,\zeta_{\tilde{A}_t^*[i]}[i],\dots;\theta\left(t\right)\right)$ is maximized with respect to each component $i\in [h]$ (not the global maximum). Therefore, we have a linear increase in regret added
    \begin{equation}
        \left\langle \mathbf{e}_{A_t^*}, z_t \right\rangle - \left\langle \mathbf{e}_{A_t}, z_t \right\rangle \leq \tilde{\Delta}+ \left\langle \mathbf{e}_{\tilde{A}_t^*}, z_t \right\rangle - \left\langle \mathbf{e}_{A_t}, z_t \right\rangle \nonumber, 
    \end{equation}
    where $\tilde{\Delta} > 0$ is set such that $\left\langle \mathbf{e}_{A_t^*}, z_t \right\rangle - \left\langle \mathbf{e}_{\tilde{A}_t^*}, z_t \right\rangle \leq \tilde{\Delta}$. Next, we can express $\left\langle \mathbf{e}_{\tilde{A}_t^*}, z_t \right\rangle - \left\langle \mathbf{e}_{A_t}, z_t \right\rangle$ with $G_{a}\left(\mathbf{c}_i\right)$
    \begin{multline}
        \left\langle \mathbf{e}_{\tilde{A}_t^*}, z_t \right\rangle - \left\langle \mathbf{e}_{A_t}, z_t \right\rangle \\ \leq \sum_{i \in [h]} G_{\tilde{a}_t^*}\left(\mathbf{c}_i\right)^\top \Xi_t\left(\mathbf{c}_i\right)- G_{a_t}\left(\mathbf{c}_i\right)^\top \Xi_t\left(\mathbf{c}_i\right) \\ + \phi_{\tilde{A}_t^*}^t - \phi_{A_t}^t 
        +  \beta_{\tilde{A}_t^*}^t - \beta_{A_t}^t  +  \gamma_{\tilde{A}_t^*}^t - \gamma_{A_t}^t \nonumber. 
    \end{multline}

    According to Theorem \ref{theorem:model_error}, the following inequality is satisfied with a probability of at least $1-3\delta $
    \begin{multline}
        G_{\tilde{a}_t^*}\left(\mathbf{c}_i\right)^\top \Xi_t\left(\mathbf{c}_i\right) \leq \hat{G}_{\tilde{a}_t^*}^t \left(\mathbf{c}_i\right)^\top \Xi_t\left(\mathbf{c}_i\right) \\ + b_{\tilde{a}_t^* \mid \mathbf{c}_i}\left(\delta\right)\left\Vert \Xi_t\left(\mathbf{c}_i\right)\right\Vert_{V_{\tilde{a}_t^*}^t\left(\mathbf{c}_i\right)^{-1}}\label{eq:regret_inequality_1}. 
    \end{multline}

    Since action $\mathbf{e}_a$ is chosen at round $t$, then that implies the following inequality is true
    \begin{equation}
        \hat{G}_{\tilde{a}_t^*}^t \left(\mathbf{c}_i\right)^\top \Xi_t\left(\mathbf{c}_i\right) \leq
        \hat{G}_{a_t}^t \left(\mathbf{c}_i\right)^\top \Xi_t\left(\mathbf{c}_i\right)\label{eq:regret_inequality_2}. 
    \end{equation}

    Therefore, we can bound instantaneous regret $r_t$ using \eqref{eq:regret_inequality_2}:
    \begin{multline}
        r_t \leq \tilde{\Delta} + \sqrt{4L\Delta\log\left(n/\delta\right)}  + \sum_{i \in [h]} \hat{G}_{a_t}^t \left(\mathbf{c}_i\right)^\top \Xi_t\left(\mathbf{c}_i\right)\\ + b_{\tilde{a}_t^* \mid \mathbf{c}_i}\left(\delta\right)\left\Vert \Xi_t\left(\mathbf{c}_i\right)\right\Vert_{V_{\tilde{a}_t^*}^t\left(\mathbf{c}_i\right)^{-1}} - G_{a_t}\left(\mathbf{c}_i\right)^\top \Xi_t\left(\mathbf{c}_i\right) \\ + \beta_{\tilde{A}_t^*}^t - \beta_{A_t}^t  + \gamma_{\tilde{A}_t^*}^t - \gamma_{A_t}^t \nonumber. 
    \end{multline}
    \begin{multline}
        \Rightarrow r_t \overset{(a)}{\leq} \tilde{\Delta} + \sqrt{4L\Delta\log\left(n/\delta\right)} \\ + \sum_{i \in [h]} \sum_{a \in \left\{\tilde{a}_t^*,a_t\right\}} b_{a \mid \mathbf{c}_i}\left(\delta\right)\left\Vert \Xi_t\left(\mathbf{c}_i\right)\right\Vert_{V_{a}^t\left(\mathbf{c}_i\right)^{-1}} \\ + \beta_{\tilde{A}_t^*}^t - \beta_{A_t}^t  + \gamma_{\tilde{A}_t^*}^t - \gamma_{A_t}^t \nonumber. 
    \end{multline}
    where in $(a)$ we used inequality \eqref{eq:regret_inequality_1}. For bounding the term $b_{a \mid \mathbf{c}_i}\left(\delta\right)\left\Vert \Xi_t\left(\mathbf{c}_i\right)\right\Vert_{V_{a}^t\left(\mathbf{c}_i\right)^{-1}}$, $a = \tilde{a}_t^*, a_t$, we break it down to a number of steps. 
    
    \textbf{Bounding} $\Xi_t\left(\mathbf{c}_i\right)$: Since $X_t$ is the output of the LGDS \eqref{eq:LGDS}, then $X_t$ is a Gaussian random variable. Therefore, according to Lemma 1 of \cite{gornet2024adaptivemethodnonstationarystochastic}, the random variable $\left\Vert \Xi_t\left(\mathbf{c}_i\right) \right\Vert_2$ satisfies inequality \eqref{eq:xi_bound} with a probability of at least $1-\delta$ for any $t \in [n]$. 
    
    \textbf{Bounding} $b_{a\mid \mathbf{c}_i}\left(\delta\right)$, $a = \tilde{a}_t^*, a_t$: For the first term in $b_{a\mid \mathbf{c}_i}\left(\delta\right)$, according to Lemma 19.4 in \cite{lattimore2020bandit}, the following inequality is satisfied:
    \begin{multline}
        \sqrt{2B_{\varepsilon}^2\log\left(\frac{1}{\delta}\frac{ \det(V_a^t\left(\mathbf{c}_i\right))^{1/2}}{\det(\lambda I)^{1/2}}\right)} \\ \leq 
        \sqrt{2B_{\varepsilon}^2\log\left(\frac{1}{\delta}\left(\frac{s\lambda + N_a B_{\Xi}}{s\lambda}\right)^{s/2}\right)} \label{eq:term_2}. 
    \end{multline}

    For the second term in $b_{a\mid \mathbf{c}_i}\left(\delta\right)$, this has the following upper-bound
    \begin{multline}
        \sqrt{2N_a B_z^2 \log\left(N_a/\delta\right)\mbox{tr}\left(I-\lambda V_a^t\left(\mathbf{c}_i\right)^{-1}\right)} \\
        \leq \sqrt{2sN_a B_z^2\log\left(N_a/\delta\right)} \label{eq:term_3}. 
    \end{multline}

    For the third term in $b_{a\mid \mathbf{c}_i}\left(\delta\right)$, we use the same idea as bounding the first term: 
    \begin{equation}
        2 B_G B_{\Xi}\sqrt{N_a\mbox{tr}\left(I-\lambda V_a^t\left(\mathbf{c}_i\right)^{-1}\right)}
        \leq 2B_G B_{\Xi}\sqrt{sN_a} \label{eq:phi_term_1}. 
    \end{equation}
    
    For the fourth term in $b_{a\mid \mathbf{c}_i}\left(\delta\right)$, since $V_a^t\left(\mathbf{c}_i\right) \succeq \lambda I$ according to \eqref{eq:V_def}, then the following bound exists
    \begin{equation}
        \lambda \sqrt{\mbox{tr}\left(V_a^t\left(\mathbf{c}_i\right)^{-1} \right)} B_G \leq \frac{\lambda B_G}{\sqrt{s}} \label{eq:term_5}. 
    \end{equation}

    Combining the inequalities \eqref{eq:xi_bound} and \eqref{eq:term_2}-\eqref{eq:term_5}, we have the following upper bound for instantaneous regret $r_t$:
    \begin{multline}
        r_t \leq \tilde{\Delta}+ \sqrt{4L\Delta\log\left(n/\delta\right)}\\ + \sum_{i \in [h]} \sum_{a \in \left\{\tilde{a}_t^*,a_t\right\}}  \sqrt{2sN_a B_z^2\log\left(N_a/\delta\right)}\left\Vert \Xi_t\left(\mathbf{c}_i\right)\right\Vert_{V_a^t\left(\mathbf{c}_i\right)^{-1}}\\+ 
        \sqrt{2B_{\varepsilon}^2\log\left(\frac{1}{\delta}\left(\frac{s\lambda + N_a B_{\Xi}}{s\lambda}\right)^{s/2}\right)}\left\Vert \Xi_t\left(\mathbf{c}_i\right)\right\Vert_{V_a^t\left(\mathbf{c}_i\right)^{-1}} \\ + 2B_G B_{\Xi}\sqrt{sN_a}\left\Vert \Xi_t\left(\mathbf{c}_i\right)\right\Vert_{V_a^t\left(\mathbf{c}_i\right)^{-1}}
        + \frac{\lambda B_G}{\sqrt{s}}\left\Vert \Xi_t\left(\mathbf{c}_i\right)\right\Vert_{V_a^t\left(\mathbf{c}_i\right)^{-1}} \\
        + \phi_{\tilde{A}_t^*}^t - \phi_{A_t}^t + \beta_{\tilde{A}_t^*}^t - \beta_{A_t}^t  + \gamma_{\tilde{A}_t^*}^t - \gamma_{A_t}^t  \nonumber. 
    \end{multline}

    Regret is the summation of instantaneous regrets, i.e. $R_n = \sum_{t=1}^n r_t$. Therefore, letting $N_a \rightarrow n$ for each term, we have the upper-bound for regret $R_n$
    \begin{multline}
        R_n \leq n \tilde{\Delta} + n \sqrt{4L\Delta\log\left(n/\delta\right)} \\ + h d^s\sum_{a \in [d]} \sum_{t \in \mathcal{T}_{a\mid \mathbf{c}_i}} 
        \overline{B}_{\sigma} \left\Vert \Xi_t\left(\mathbf{c}_i\right)\right\Vert_{V_a^t\left(\mathbf{c}_i\right)^{-1}} \\ + h d^s \max_{\tilde{a} \in [d]} \sum_{t \in \mathcal{T}_{\tilde{a} \mid \mathbf{c}_i}} 
        \overline{B}_{\sigma} \left\Vert \Xi_t\left(\mathbf{c}_i\right)\right\Vert_{V_{\tilde{a} }^t\left(\mathbf{c}_i\right)^{-1}} \\ + \sum_{t \in [n]}\phi_{\tilde{A}_t^*}^t - \phi_{A_t}^t + \beta_{\tilde{A}_t^*}^t - \beta_{A_t}^t  + \gamma_{\tilde{A}_t^*}^t - \gamma_{A_t}^t\label{eq:regret_inequality_3}. 
    \end{multline}
    where $\overline{B}_{\sigma}$ is defined as \eqref{eq:B_sigma_def}. The values $hd^s$ are multiplied by the model error terms since there are $d^s$ models $G_{a}\left(\mathbf{c}_i\right)$ for each $i \in [h]$. Using the Cauchy-Schwarz inequality in \eqref{eq:regret_inequality_3}, we can upper bound the regret $R_n$ as follows:
    \begin{multline}
        R_n \leq n \tilde{\Delta} + n\sqrt{4L\Delta\log\left(n/\delta\right)} \\ 
        + h d^s\sum_{a \in [d]} \overline{B}_{\sigma} \sqrt{N_a} \sqrt{\sum_{t \in \mathcal{T}_{a\mid \mathbf{c}_i}} \left\Vert \Xi_t\left(\mathbf{c}_i\right)\right\Vert_{V_a^t\left(\mathbf{c}_i\right)^{-1}}^2} \\ + h d^s \max_{\tilde{a} \in [d]} \sum_{t \in \mathcal{T}_{\tilde{a} \mid \mathbf{c}_i}} 
        \overline{B}_{\sigma} \left\Vert \Xi_t\left(\mathbf{c}_i\right)\right\Vert_{V_{\tilde{a} }^t\left(\mathbf{c}_i\right)^{-1}} \\ + \sum_{t \in [n]}\phi_{\tilde{A}_t^*}^t - \phi_{A_t}^t + \beta_{\tilde{A}_t^*}^t - \beta_{A_t}^t  + \gamma_{\tilde{A}_t^*}^t - \gamma_{A_t}^t\label{eq:regret_inequality_4}. 
    \end{multline}

    Next, we now have to bound the rest of the terms in inequality \eqref{eq:regret_inequality_4}. This is broken down into the following steps:
    
    \textbf{Bounding} $\sum_{t \in \mathcal{T}_{a\mid \mathbf{c}_i}} \left\Vert \Xi_t\left(\mathbf{c}_i\right)\right\Vert_{V_a^t\left(\mathbf{c}_i\right)^{-1}}^2$: Since $V_a^t\left(\mathbf{c}_i\right) \succeq \lambda I_s$, then the following inequalities are true
    \begin{align}
        & \left\Vert \Xi_t\left(\mathbf{c}_i\right)\right\Vert_{V_a^t\left(\mathbf{c}_i\right)^{-1}}^2 \nonumber \\
        & ~~~~\leq \min\left\{\frac{\left\Vert \Xi_t\left(\mathbf{c}_i\right)\right\Vert_2}{\sqrt{\lambda}},\left\Vert \Xi_t\left(\mathbf{c}_i\right)\right\Vert_{V_a^t\left(\mathbf{c}_i\right)^{-1}}^2\right\} \nonumber \\
        & ~~~~\leq \max\left\{1,\frac{\left\Vert \Xi_t\left(\mathbf{c}_i\right)\right\Vert_2}{\sqrt{\lambda}}\right\} \min\left\{1,\left\Vert \Xi_t\left(\mathbf{c}_i\right)\right\Vert_{V_a^t\left(\mathbf{c}_i\right)^{-1}}^2\right\} \nonumber \\
        & ~~~~\leq \max\left\{1,\frac{B_{\Xi}}{\sqrt{\lambda}}\right\}\min\left\{1,\left\Vert \Xi_t\left(\mathbf{c}_i\right)\right\Vert_{V_a^t\left(\mathbf{c}_i\right)^{-1}}^2\right\} \nonumber,
    \end{align}
    \begin{multline}
        \Rightarrow \sum_{t \in \mathcal{T}_{a\mid \mathbf{c}_i}} \left\Vert \Xi_t\left(\mathbf{c}_i\right)\right\Vert_{V_a^t\left(\mathbf{c}_i\right)^{-1}}^2 \\ \leq \max\left\{1,\frac{B_{\Xi}}{\sqrt{\lambda}}\right\}\sum_{t \in \mathcal{T}_{a\mid \mathbf{c}_i}}  \min\left\{1,\left\Vert \Xi_t\left(\mathbf{c}_i\right)\right\Vert_{V_a^t\left(\mathbf{c}_i\right)^{-1}}^2\right\} \label{eq:term_8}. 
    \end{multline}
    
    According to Lemma 19.4 (the eclipse potential lemma) in \cite{lattimore2020bandit}, the bound on the right side of inequality \eqref{eq:term_8} is 
    \begin{equation}
        \sum_{t \in \mathcal{T}_{a\mid \mathbf{c}_i}}  \min\left\{1,\left\Vert \Xi_t\left(\mathbf{c}_i\right)\right\Vert_{V_a^t\left(\mathbf{c}_i\right)^{-1}}^2\right\} \leq 2s\log\left(\frac{s\lambda + N_a B_{\Xi}}{s\lambda}\right)\nonumber, 
    \end{equation}
    \begin{multline}
        \Rightarrow \sqrt{\sum_{t \in \mathcal{T}_{a\mid \mathbf{c}_i}} \left\Vert \Xi_t\left(\mathbf{c}_i\right)\right\Vert_{V_a^t\left(\mathbf{c}_i\right)^{-1}}^2}\\ \leq \max\left\{1,\frac{B_{\Xi}}{\sqrt{\lambda}}\right\}\sqrt{2s\log\left(\frac{s\lambda + N_a B_{\Xi}}{s\lambda}\right)} \label{eq:term_6}. 
    \end{multline}

    \textbf{Bounding} $\sum_{t \in \mathcal{T}_{\tilde{a} \mid \mathbf{c}_i}} \left\Vert \Xi_t\left(\mathbf{c}_i\right)\right\Vert_{V_{\tilde{a} }^t\left(\mathbf{c}_i\right)^{-1}}$: Since $V_a^t\left(\mathbf{c}_i\right) \succeq \lambda I_s$, then 
    \begin{equation}
        \sum_{t \in \mathcal{T}_{\tilde{a} \mid \mathbf{c}_i}} 
        \left\Vert \Xi_t\left(\mathbf{c}_i\right)\right\Vert_{V_{\tilde{a} }^t\left(\mathbf{c}_i\right)^{-1}} \leq \sum_{t \in \mathcal{T}_{\tilde{a}\mid \mathbf{c}_i}} \frac{\left\Vert \Xi_t\left(\mathbf{c}_i\right)\right\Vert_2}{\sqrt{\lambda}
        } \nonumber. 
    \end{equation}
    \begin{equation}
        \Rightarrow \sum_{t \in \mathcal{T}_{\tilde{a} \mid \mathbf{c}_i}} 
        \left\Vert \Xi_t\left(\mathbf{c}_i\right)\right\Vert_{V_{\tilde{a} }^t\left(\mathbf{c}_i\right)^{-1}} \leq N_a \frac{B_{\Xi}}{\sqrt{\lambda}
        } \nonumber. 
    \end{equation}
    
    \textbf{Bounding} $\phi_{\tilde{A}_t^*}^t - \phi_{A_t}^t$: Recall that $\phi_{A_t}^t$ is defined as \eqref{eq:phi_def}. In Theorem \ref{theorem:model_error} the bound for $\phi_{A_t}^t$ is 
    \begin{equation}
        \left\vert \phi_{A_t}^t\right\vert \leq 2B_G B_{\Xi} \nonumber, 
    \end{equation}
    \begin{equation}
        \Rightarrow \sum_{t\in [n]} \phi_{\tilde{A}_t^*}^t - \phi_{A_t}^t \leq 4nB_G B_{\Xi}\nonumber,
    \end{equation}
    which is satisfied with a probability of at least $1-2\delta$. 
    
    \textbf{Bounding} $\beta_{\tilde{A}_t^*}^t - \beta_{A_t}^t$: Recall that $\beta_{A_t}^t$ is defined as \eqref{eq:beta_def}. In Theorem \ref{theorem:model_error} the bound for $\beta_{A_t}^t$ is 
    \begin{equation}
        \left\vert \beta_{A_t}^t\right\vert\leq \sqrt{2nB_z^2 \log\left(n/\delta\right)} \nonumber, 
    \end{equation}
    \begin{equation}
        \Rightarrow \sum_{t\in [n]} \beta_{\tilde{A}_t^*}^t - \beta_{A_t}^t \leq 2n\sqrt{2nB_z^2\log\left(n/\delta\right)} \nonumber,
    \end{equation}
    which is satisfied with a probability of at least $1-2\delta$. 
    
    \textbf{Bounding} $\sum_{t\in \mathcal{T}_{a\mid \mathbf{c}_i}} \gamma_{\tilde{A}_t^*}^t - \gamma_{A_t}^t$: Since $\sum_{t\in \mathcal{T}_{a\mid \mathbf{c}_i}} \gamma_{\tilde{A}_t^*}^t - \gamma_{A_t}^t$ is $n B_{\varepsilon}^2$ Gaussian distributed, then the summation of has the following bound which is satisfied with a probability of at least $1-\delta$:
    \begin{equation}
        \sum_{t\in [n]} \gamma_{\tilde{A}_t^*}^t - \gamma_{A_t}^t \leq \sqrt{2nB_{\varepsilon}^2\log\left(1/\delta\right)} \label{eq:term_7}. 
    \end{equation}

    Combining the terms \eqref{eq:term_5}-\eqref{eq:term_7} for \eqref{eq:regret_inequality_4}, we have the following regret upper bound
    \begin{multline}
        R_n \leq n \tilde{\Delta} + n \sqrt{4L\Delta\log\left(n/\delta\right)} \\ + hd^s \sum_{a \in [d]} \overline{B}_{\sigma} \max\left\{1,\frac{B_{\Xi}}{\sqrt{\lambda}}\right\} \sqrt{N_a} \sqrt{2s\log\left(\frac{s\lambda + N_a B_{\Xi}}{s\lambda}\right)} \\
        h d^s \overline{B}_{\sigma} N_a \frac{B_{\Xi}}{\sqrt{\lambda}}
        + 2n \sqrt{2B_z^2\log\left(n/\delta\right)} \\ + 4n B_G B_{\Xi} + \sqrt{2nB_{\varepsilon}^2\log\left(1/\delta\right)} \nonumber. 
    \end{multline}

    Letting $N_a \rightarrow n$ we now arrive at the following regret upper bound \eqref{eq:regret_upper_bound}.
    
\end{proof}

\bibliographystyle{IEEEtran}
\bibliography{IEEEabrv,autosam}{}

% Generated by IEEEtran.bst, version: 1.14 (2015/08/26)
\begin{thebibliography}{10}
\providecommand{\url}[1]{#1}
\csname url@samestyle\endcsname
\providecommand{\newblock}{\relax}
\providecommand{\bibinfo}[2]{#2}
\providecommand{\BIBentrySTDinterwordspacing}{\spaceskip=0pt\relax}
\providecommand{\BIBentryALTinterwordstretchfactor}{4}
\providecommand{\BIBentryALTinterwordspacing}{\spaceskip=\fontdimen2\font plus
\BIBentryALTinterwordstretchfactor\fontdimen3\font minus
  \fontdimen4\font\relax}
\providecommand{\BIBforeignlanguage}[2]{{%
\expandafter\ifx\csname l@#1\endcsname\relax
\typeout{** WARNING: IEEEtran.bst: No hyphenation pattern has been}%
\typeout{** loaded for the language `#1'. Using the pattern for}%
\typeout{** the default language instead.}%
\else
\language=\csname l@#1\endcsname
\fi
#2}}
\providecommand{\BIBdecl}{\relax}
\BIBdecl

\bibitem{hutter2019automated}
F.~Hutter, L.~Kotthoff, and J.~Vanschoren, \emph{Automated machine learning:
  methods, systems, challenges}.\hskip 1em plus 0.5em minus 0.4em\relax
  Springer Nature, 2019.

\bibitem{bergstra2011algorithms}
J.~Bergstra, R.~Bardenet, Y.~Bengio, and B.~K{\'e}gl, ``Algorithms for
  hyper-parameter optimization,'' \emph{Advances in neural information
  processing systems}, vol.~24, 2011.

\bibitem{garnett2023bayesian}
R.~Garnett, \emph{Bayesian optimization}.\hskip 1em plus 0.5em minus
  0.4em\relax Cambridge University Press, 2023.

\bibitem{robbins1951stochastic}
H.~Robbins and S.~Monro, ``A stochastic approximation method,'' \emph{The
  annals of mathematical statistics}, pp. 400--407, 1951.

\bibitem{kingma2014adam}
D.~P. Kingma and J.~Ba, ``Adam: A method for stochastic optimization,''
  \emph{arXiv preprint arXiv:1412.6980}, 2014.

\bibitem{li2017hyperband}
L.~Li, K.~Jamieson, G.~DeSalvo, A.~Rostamizadeh, and A.~Talwalkar, ``Hyperband:
  A novel bandit-based approach to hyperparameter optimization,'' \emph{The
  Journal of Machine Learning Research}, vol.~18, no.~1, pp. 6765--6816, 2017.

\bibitem{falkner2018bohb}
S.~Falkner, A.~Klein, and F.~Hutter, ``Bohb: Robust and efficient
  hyperparameter optimization at scale,'' in \emph{International conference on
  machine learning}.\hskip 1em plus 0.5em minus 0.4em\relax PMLR, 2018, pp.
  1437--1446.

\bibitem{angeline1998evolutionary}
P.~J. Angeline, ``Evolutionary optimization versus particle swarm optimization:
  Philosophy and performance differences,'' in \emph{International conference
  on evolutionary programming}.\hskip 1em plus 0.5em minus 0.4em\relax
  Springer, 1998, pp. 601--610.

\bibitem{back1998overview}
T.~B{\"a}ck, ``An overview of parameter control methods by self-adaptation in
  evolutionary algorithms,'' \emph{Fundamenta informaticae}, vol.~35, no. 1-4,
  pp. 51--66, 1998.

\bibitem{6300857}
W.~M. Spears \emph{et~al.}, ``Adapting crossover in evolutionary algorithms.''
  in \emph{Evolutionary programming}, 1995, pp. 367--384.

\bibitem{clune2008natural}
J.~Clune, D.~Misevic, C.~Ofria, R.~E. Lenski, S.~F. Elena, and R.~Sanju{\'a}n,
  ``Natural selection fails to optimize mutation rates for long-term adaptation
  on rugged fitness landscapes,'' \emph{PLoS computational biology}, vol.~4,
  no.~9, p. e1000187, 2008.

\bibitem{DBLP:journals/corr/abs-1711-09846}
\BIBentryALTinterwordspacing
M.~Jaderberg, V.~Dalibard, S.~Osindero, W.~M. Czarnecki, J.~Donahue, A.~Razavi,
  O.~Vinyals, T.~Green, I.~Dunning, K.~Simonyan, C.~Fernando, and
  K.~Kavukcuoglu, ``Population based training of neural networks,''
  \emph{CoRR}, vol. abs/1711.09846, 2017. [Online]. Available:
  \url{http://arxiv.org/abs/1711.09846}
\BIBentrySTDinterwordspacing

\bibitem{yang2020hyperparameter}
L.~Yang and A.~Shami, ``On hyperparameter optimization of machine learning
  algorithms: Theory and practice,'' \emph{Neurocomputing}, vol. 415, pp.
  295--316, 2020.

\bibitem{NEURIPS2019_743c41a9}
\BIBentryALTinterwordspacing
S.~Paul, V.~Kurin, and S.~Whiteson, ``Fast efficient hyperparameter tuning for
  policy gradient methods,'' in \emph{Advances in Neural Information Processing
  Systems}, H.~Wallach, H.~Larochelle, A.~Beygelzimer, F.~d\textquotesingle
  Alch\'{e}-Buc, E.~Fox, and R.~Garnett, Eds., vol.~32.\hskip 1em plus 0.5em
  minus 0.4em\relax Curran Associates, Inc., 2019. [Online]. Available:
  \url{https://proceedings.neurips.cc/paper_files/paper/2019/file/743c41a921516b04afde48bb48e28ce6-Paper.pdf}
\BIBentrySTDinterwordspacing

\bibitem{li2018massively}
\BIBentryALTinterwordspacing
L.~Li, K.~Jamieson, A.~Rostamizadeh, E.~Gonina, J.~Ben-tzur, M.~Hardt,
  B.~Recht, and A.~Talwalkar, ``A system for massively parallel hyperparameter
  tuning,'' in \emph{Proceedings of Machine Learning and Systems}, I.~Dhillon,
  D.~Papailiopoulos, and V.~Sze, Eds., vol.~2, 2020, pp. 230--246. [Online].
  Available:
  \url{https://proceedings.mlsys.org/paper_files/paper/2020/file/a06f20b349c6cf09a6b171c71b88bbfc-Paper.pdf}
\BIBentrySTDinterwordspacing

\bibitem{li2019generalized}
A.~Li, O.~Spyra, S.~Perel, V.~Dalibard, M.~Jaderberg, C.~Gu, D.~Budden,
  T.~Harley, and P.~Gupta, ``A generalized framework for population based
  training,'' in \emph{Proceedings of the 25th ACM SIGKDD International
  Conference on Knowledge Discovery \& Data Mining}, 2019, pp. 1791--1799.

\bibitem{parker2020provably}
J.~Parker-Holder, V.~Nguyen, and S.~J. Roberts, ``Provably efficient online
  hyperparameter optimization with population-based bandits,'' \emph{Advances
  in neural information processing systems}, vol.~33, pp. 17\,200--17\,211,
  2020.

\bibitem{chen2018bayesian}
Y.~Chen, A.~Huang, Z.~Wang, I.~Antonoglou, J.~Schrittwieser, D.~Silver, and
  N.~de~Freitas, ``Bayesian optimization in alphago,'' \emph{arXiv preprint
  arXiv:1812.06855}, 2018.

\bibitem{brochu2010tutorial}
E.~Brochu, V.~M. Cora, and N.~De~Freitas, ``A tutorial on bayesian optimization
  of expensive cost functions, with application to active user modeling and
  hierarchical reinforcement learning,'' \emph{arXiv preprint arXiv:1012.2599},
  2010.

\bibitem{hennig2012entropy}
P.~Hennig and C.~J. Schuler, ``Entropy search for information-efficient global
  optimization,'' \emph{The Journal of Machine Learning Research}, vol.~13,
  no.~1, pp. 1809--1837, 2012.

\bibitem{nguyen2020knowing}
V.~Nguyen and M.~A. Osborne, ``Knowing the what but not the where in bayesian
  optimization,'' in \emph{International Conference on Machine Learning}.\hskip
  1em plus 0.5em minus 0.4em\relax PMLR, 2020, pp. 7317--7326.

\bibitem{NIPS2012_05311655}
\BIBentryALTinterwordspacing
J.~Snoek, H.~Larochelle, and R.~P. Adams, ``Practical bayesian optimization of
  machine learning algorithms,'' in \emph{Advances in Neural Information
  Processing Systems}, F.~Pereira, C.~Burges, L.~Bottou, and K.~Weinberger,
  Eds., vol.~25.\hskip 1em plus 0.5em minus 0.4em\relax Curran Associates,
  Inc., 2012. [Online]. Available:
  \url{https://proceedings.neurips.cc/paper_files/paper/2012/file/05311655a15b75fab86956663e1819cd-Paper.pdf}
\BIBentrySTDinterwordspacing

\bibitem{srinivas2009gaussian}
N.~Srinivas, A.~Krause, S.~M. Kakade, and M.~Seeger, ``Gaussian process
  optimization in the bandit setting: No regret and experimental design,''
  \emph{ICML}, 2010.

\bibitem{ahmed2016we}
M.~O. Ahmed, B.~Shahriari, and M.~Schmidt, ``Do we need “harmless” bayesian
  optimization and “first-order” bayesian optimization,'' \emph{NIPS
  BayesOpt}, vol.~6, p.~22, 2016.

\bibitem{desautels2014parallelizing}
T.~Desautels, A.~Krause, and J.~W. Burdick, ``Parallelizing
  exploration-exploitation tradeoffs in gaussian process bandit optimization.''
  \emph{J. Mach. Learn. Res.}, vol.~15, no.~1, pp. 3873--3923, 2014.

\bibitem{gonzalez2016batch}
J.~Gonz{\'a}lez, Z.~Dai, P.~Hennig, and N.~Lawrence, ``Batch bayesian
  optimization via local penalization,'' in \emph{Artificial intelligence and
  statistics}.\hskip 1em plus 0.5em minus 0.4em\relax PMLR, 2016, pp. 648--657.

\bibitem{schulman2015trust}
J.~Schulman, S.~Levine, P.~Abbeel, M.~Jordan, and P.~Moritz, ``Trust region
  policy optimization,'' in \emph{International conference on machine
  learning}.\hskip 1em plus 0.5em minus 0.4em\relax PMLR, 2015, pp. 1889--1897.

\bibitem{schulman2017proximal}
J.~Schulman, F.~Wolski, P.~Dhariwal, A.~Radford, and O.~Klimov, ``Proximal
  policy optimization algorithms,'' \emph{arXiv preprint arXiv:1707.06347},
  2017.

\bibitem{bogunovic2016time}
I.~Bogunovic, J.~Scarlett, and V.~Cevher, ``Time-varying gaussian process
  bandit optimization,'' in \emph{Artificial Intelligence and
  Statistics}.\hskip 1em plus 0.5em minus 0.4em\relax PMLR, 2016, pp. 314--323.

\bibitem{ambikasaran2015fast}
S.~Ambikasaran, D.~Foreman-Mackey, L.~Greengard, D.~W. Hogg, and M.~O’Neil,
  ``Fast direct methods for gaussian processes,'' \emph{IEEE transactions on
  pattern analysis and machine intelligence}, vol.~38, no.~2, pp. 252--265,
  2015.

\bibitem{bergstra2012random}
J.~Bergstra and Y.~Bengio, ``Random search for hyper-parameter optimization.''
  \emph{Journal of machine learning research}, vol.~13, no.~2, 2012.

\bibitem{brockman2016openaigym}
\BIBentryALTinterwordspacing
G.~Brockman, V.~Cheung, L.~Pettersson, J.~Schneider, J.~Schulman, J.~Tang, and
  W.~Zaremba, ``Open{AI} {G}ym,'' 2016. [Online]. Available:
  \url{https://arxiv.org/abs/1606.01540}
\BIBentrySTDinterwordspacing

\bibitem{schulman2015high}
J.~Schulman, P.~Moritz, S.~Levine, M.~Jordan, and P.~Abbeel, ``High-dimensional
  continuous control using generalized advantage estimation,'' \emph{arXiv
  preprint arXiv:1506.02438}, 2015.

\bibitem{pmlr-vR5-hein05a}
\BIBentryALTinterwordspacing
M.~Hein and O.~Bousquet, ``Hilbertian metrics and positive definite kernels on
  probability measures,'' in \emph{Proceedings of the Tenth International
  Workshop on Artificial Intelligence and Statistics}, ser. Proceedings of
  Machine Learning Research, R.~G. Cowell and Z.~Ghahramani, Eds.,
  vol.~R5.\hskip 1em plus 0.5em minus 0.4em\relax PMLR, 06--08 Jan 2005, pp.
  136--143, reissued by PMLR on 30 March 2021. [Online]. Available:
  \url{https://proceedings.mlr.press/r5/hein05a.html}
\BIBentrySTDinterwordspacing

\bibitem{gornet_pmlr}
J.~Gornet and B.~Sinopoli, ``Restless bandits with rewards generated by a
  linear gaussian dynamical system,'' in \emph{6th Annual Learning for Dynamics
  \& Control Conference}.\hskip 1em plus 0.5em minus 0.4em\relax PMLR, 2024,
  pp. 1791--1802.

\bibitem{rudin2021principles}
W.~Rudin, ``Principles of mathematical analysis,'' 2021.

\bibitem{1333199}
B.~Sinopoli, L.~Schenato, M.~Franceschetti, K.~Poolla, M.~Jordan, and
  S.~Sastry, ``Kalman filtering with intermittent observations,'' \emph{IEEE
  Transactions on Automatic Control}, vol.~49, no.~9, pp. 1453--1464, 2004.

\bibitem{gornet2024adaptivemethodnonstationarystochastic}
\BIBentryALTinterwordspacing
J.~Gornet, M.~Hosseinzadeh, and B.~Sinopoli, ``An adaptive method for
  non-stationary stochastic multi-armed bandits with rewards generated by a
  linear dynamical system,'' 2024. [Online]. Available:
  \url{https://arxiv.org/abs/2406.10418}
\BIBentrySTDinterwordspacing

\bibitem{lattimore2020bandit}
T.~Lattimore and C.~Szepesv{\'a}ri, \emph{Bandit algorithms}.\hskip 1em plus
  0.5em minus 0.4em\relax Cambridge University Press, 2020.

\end{thebibliography}

\end{document}